\newenvironment{ack}{\section{Acknowledgements}}{}
\pgfplotsset{compat=1.17}
\titlespacing*{\section}{0pt}{6pt}{0pt}
\titlespacing*{\subsection}{0pt}{6pt}{0pt}
\let\P\relax\DeclareMathOperator{\P}{\mathbb{P}}
\DeclareMathOperator{\E}{\mathbb{E}}
\DeclareMathOperator{\stimes}{\widetilde\otimes}
\newcommand{\R}{\mathbb{R}}
\newcommand{\1}{\mathbf{1}}
\newcommand{\set}{\qty}
\DeclareMathOperator{\spn}{span}
\DeclareMathOperator{\unif}{Unif}
\DeclareMathOperator{\poly}{poly}
\DeclareMathOperator{\polylog}{polylog}
\DeclareMathOperator{\relu}{ReLU}
\DeclareMathOperator{\sym}{Sym}
\DeclarePairedDelimiter{\floor}{\lfloor}{\rfloor}
\newtheorem{lemma}{Lemma}
\newtheorem{corollary}{Corollary}
\newtheorem{theorem}{Theorem}
\newtheorem{assumption}{Assumption}
\newtheorem{definition}{Definition}
\newtheorem{example}{Example}
\newtheorem{property}{Property}
\renewcommand{\k}{{k^\star}}
\Crefname{algocf}{Algorithm}{Algorithms}
\title{Smoothing the Landscape Boosts the Signal for SGD \\ \Large{Optimal Sample Complexity for Learning Single Index Models}}
\author{%
  Alex Damian \\
  Princeton University\\
  \texttt{ad27@princeton.edu}
  \and
  Eshaan Nichani \\
  Princeton University \\
  \texttt{eshnich@princeton.edu} \\
  \and
  Rong Ge \\
  Duke University \\
  \texttt{rongge@cs.duke.edu} \\
  \and
  Jason D. Lee \\
  Princeton University \\
  \texttt{jasonlee@princeton.edu}
}
\date{}
\begin{document}
\maketitle
\begin{abstract}
    We focus on the task of learning a single index model $\sigma(w^\star \cdot x)$ with respect to the isotropic Gaussian distribution in $d$ dimensions. Prior work has shown that the sample complexity of learning $w^\star$ is governed by the \emph{information exponent} $k^\star$ of the link function $\sigma$, which is defined as the index of the first nonzero Hermite coefficient of $\sigma$. \citet{arous2021online} showed that $n \gtrsim d^{k^\star-1}$ samples suffice for learning $w^\star$ and that this is tight for online SGD. However, the CSQ lower bound for gradient based methods only shows that $n \gtrsim d^{\k/2}$ samples are necessary. In this work, we close the gap between the upper and lower bounds by showing that online SGD on a smoothed loss learns $w^\star$ with $n \gtrsim d^{\k/2}$ samples. We also draw connections to statistical analyses of tensor PCA and to the implicit regularization effects of minibatch SGD on empirical losses.
\end{abstract}

\section{Introduction}

Gradient descent-based algorithms are popular for deriving computational and statistical guarantees for a number of high-dimensional statistical learning problems \citep{ge2016matrix, ma2020local, arous2021online,abbe2023leap,bruna2022singleindex,damian2022neural}. Despite the fact that the empirical loss is nonconvex and in the worst case computationally intractible to optimize, for a number of statistical learning tasks gradient-based methods still converge to good solutions with polynomial runtime and sample complexity. Analyses in these settings typically study properties of the empirical loss landscape \citep{mei2018}, and in particular the number of samples needed for the signal of the gradient arising from the population loss to overpower the noise in some uniform sense. The sample complexity for learning with gradient descent is determined by the landscape of the empirical loss.

One setting in which the empirical loss landscape showcases rich behavior is that of learning a \emph{single-index model}. Single index models are target functions of the form $f^*(x) = \sigma(w^\star \cdot x)$, where $w^\star \in S^{d-1}$ is the unknown relevant direction and $\sigma$ is the known link function. When the covariates are drawn from the standard $d$-dimensional Gaussian distribution, the shape of the loss landscape is governed by the \emph{information exponent} $\k$ of the link function $\sigma$, which characterizes the curvature of the loss landscape around the origin. \citet{arous2021online} show that online stochastic gradient descent on the empirical loss can recover $w^*$ with $n \gtrsim d^{\k - 1}$ samples; furthermore, they present a lower bound showing that for a class of online SGD algorithms, $d^{\k - 1}$ samples are indeed necessary.

However, gradient descent can be suboptimal for various statistical learning problems, as it only relies on local information in the loss landscape and is thus prone to getting stuck in local minima. For learning a single index model, the Correlational Statistical Query (CSQ) lower bound only requires $d^{\k/2}$ samples to recover $w^\star$ \citep{damian2022neural,abbe2023leap}, which is far fewer than the number of samples required by online SGD. This gap between gradient-based methods and the CSQ lower bound is also present in the Tensor PCA problem~\citep{richard2014tensor}; for recovering a rank 1 $k$-tensor in $d$ dimensions, both gradient descent and the power method require $d^{k - 1}$ samples, whereas more sophisticated spectral algorithms can match the computational lower bound of $d^{k/2}$ samples.

In light of the lower bound from \citep{arous2021online}, it seems hopeless for a gradient-based algorithm to match the CSQ lower bound for learning single-index models. \citep{arous2021online} considers the regime in which SGD is simply a discretization of gradient flow, in which case the poor properties of the loss landscape with insufficient samples imply a lower bound. However, recent work has shown that SGD is not just a discretization to gradient flow, but rather that it has an additional implicit regularization effect. Specifically, \citep{BlancGVV20, damian2021label,li2022what} show that over short periods of time, SGD converges to a quasi-stationary distribution $N(\theta,\lambda S)$ where $\theta$ is an initial reference point, $S$ is a matrix depending on the Hessian and the noise covariance and $\lambda = \frac{\eta}{B}$ measures the strength of the noise where $\eta$ is the learning rate and $B$ is the batch size. The resulting long term dynamics therefore follow the \emph{smoothed gradient} $\widetilde \nabla L(\theta) = \E_{z \sim N(0,S)}[\nabla L(\theta + \lambda z)]$ which has the effect of regularizing the trace of the Hessian.

This implicit regularization effect of minibatch SGD has been shown to drastically improve generalization and reduce the number of samples necessary for supervised learning tasks \citep{shallue2018measuring, szegedy2016rethinking, wen2019interplay}. However, the connection between the smoothed landscape and the resulting sample complexity is poorly understood. Towards closing this gap, we consider directly smoothing the loss landscape in order to efficiently learn single index models. Our main result, \Cref{thm:main}, shows that online SGD on the smoothed loss learns $w^\star$ in $n \gtrsim d^{\k/2}$ samples, which matches the correlation statistical query (CSQ) lower bound. This improves over the $n \gtrsim d^{\k-1}$ lower bound for online SGD on the unsmoothed loss from \citet{arous2021online}. Key to our analysis is the observation that smoothing the loss landscape boosts the signal-to-noise ratio in a region around the initialization, which allows the iterates to avoid the poor local minima for the unsmoothed empirical loss. Our analysis is inspired by the implicit regularization effect of minibatch SGD, along with the partial trace algorithm for Tensor PCA which achieves the optimal $d^{k/2}$ sample complexity for computationally efficient algorithms.

The outline of our paper is as follows. In \Cref{sec:setting} we formalize the specific statistical learning setup, define the information exponent $\k$, and describe our algorithm. \Cref{sec:main_thm} contains our main theorem, and \Cref{sec:proof_sketch} presents a heuristic derivation for how smoothing the loss landscape increases the signal-to-noise ratio. We present empirical verification in \Cref{sec:experiments}, and in \Cref{sec:discussion} we detail connections to tensor PCA nad minibatch SGD.

\section{Related Work}

There is a rich literature on learning single index models. \citet{kakade2011efficient} showed that gradient descent can learn single index models when the link function is Lipschitz and monotonic and designed an alternative algorithm to handle the case when the link function is unknown. \citet{soltanolkotabi2017} focused on learning single index models where the link function is $\relu(x) := \max(0,x)$ which has information exponent $\k = 1$. The phase-retrieval problem is a special case of the single index model in which the link function is $\sigma(x) = x^2$ or $\sigma(x) = \abs{x}$; this corresponds to $\k = 2$, and solving phase retrieval via gradient descent has been well studied~\citep{candes2015phase, chen2019phase, sun2018phase}. \citet{dudeja2018sim} constructed an algorithm which explicitly uses the harmonic structure of Hermite polynomials to identify the information exponent. \citet{arous2021online} provided matching upper and lower bounds that show that $n \gtrsim d^{\k-1}$ samples are necessary and sufficient for online SGD to recover $w^\star$. 

Going beyond gradient-based algorithms, \citet{chen2020poly} provide an algorithm that can learn polynomials of few relevant dimensions with $n \gtrsim d$ samples, including single index models with polynomial link functions. Their estimator is based on the structure of the filtered PCA matrix $\E_{x,y}[\1_{|y| \ge \tau}xx^T]$, which relies on the heavy tails of polynomials. In particular, this upper bound does not apply to bounded link functions. Furthermore, while their result achieves the information-theoretically optimal $d$ dependence it is not a CSQ algorithm, whereas our \Cref{alg:smoothed_sgd} achieves the optimal sample complexity over the class of CSQ algorithms (which contains gradient descent).

Recent work has also studied the ability of neural networks to learn single or multi-index models~\citep{bruna2022singleindex, damian2022neural, ba2022onestep, abbe2022merged, abbe2023leap}. \citet{bruna2022singleindex} showed that two layer neural networks are able to adapt to unknown link functions with $n \gtrsim d^{\k}$ samples. \citet{damian2022neural} consider multi-index models with polynomial link function, and under a nondegeneracy assumption which corresponds to the $\k = 2$ case, show that SGD on a two-layer neural network requires $n \gtrsim d^2 + r^{p}$ samples. \citet{abbe2022merged,abbe2023leap} provide a generalization of the information exponent called the \emph{leap}. They prove that in some settings, SGD can learn low dimensional target functions with $n \gtrsim d^{\text{Leap}-1}$ samples. However, they conjecture that the optimal rate is $n \gtrsim d^{\text{Leap}/2}$ and that this can be achieved by ERM rather than online SGD.

The problem of learning single index models with information exponent $k$ is strongly related to the order $k$ Tensor PCA problem (see \Cref{sec:tensor_pca}), which was introduced by \citet{richard2014tensor}. They conjectured the existence of a \emph{computational-statistical gap} for Tensor PCA as the information-theoretic threshold for the problem is $n \gtrsim d$, but all known computationally efficient algorithms require $n \gtrsim d^{k/2}$. Furthermore, simple iterative estimators including tensor power method, gradient descent, and AMP are suboptimal and require $n \gtrsim d^{k-1}$ samples. \citet{hopkins2016partialtrace} introduced the partial trace estimator which succeeds with $n \gtrsim d^{\lceil k/2 \rceil}$ samples. \citet{pmlr-v65-anandkumar17a} extended this result to show that gradient descent on a smoothed landscape could achieve $d^{k/2}$ sample complexity when $k = 3$ and \citet{Biroli_2020} heuristically extended this result to larger $k$. The success of smoothing the landscape for Tensor PCA is one of the inspirations for \Cref{alg:smoothed_sgd}.

\section{Setting}\label{sec:setting}

\subsection{Data distribution and target function}
Our goal is to efficiently learn single index models of the form $f^\star(x) = \sigma(w^\star \cdot x)$ where $w^\star \in S^{d-1}$, the $d$-dimensional unit sphere. We assume that $\sigma$ is normalized so that $\E_{x \sim N(0,1)}[\sigma(x)^2] = 1$. We will also assume that $\sigma$ is differentiable and that $\sigma'$ has polynomial tails:
\begin{assumption}\label{assumption:poly_tail}
	There exist constants $C_1,C_2$ such that $\abs{\sigma'(x)} \le C_1 (1+x^2)^{C_2}$.
\end{assumption}

Our goal is to recover $w^\star$ given $n$ samples $(x_1,y_1),\ldots,(x_n,y_n)$ sampled i.i.d from
\begin{align*}
	x_i \sim N(0,I_d) \qc y_i = f^\star(x_i) + z_i \qq{where} z_i \sim N(0,\varsigma^2).
\end{align*}
For simplicity of exposition, we assume that $\sigma$ is known and we take our model class to be
\begin{align*}
    f(w,x) := \sigma\qty(w \cdot x) \qq{where} w \in S^{d-1}.
\end{align*}

\subsection{Algorithm: online SGD on a smoothed landscape}
As $w \in S^{d-1}$ we will let $\nabla_w$ denote the spherical gradient with respect to $w$. That is, for a function $g: \R^d \rightarrow \mathbb{R}$, let $\nabla_w g(w) = (I - ww^T)\nabla g(z) \eval_{z = w}$ where $\nabla$ is the standard Euclidean gradient.

To compute the loss on a sample $(x,y)$, we use the correlation loss:
\begin{align*}
    L(w;x;y) := 1-f(w,x) y.
\end{align*}
Furthermore, when the sample is omitted we refer to the population loss:
\begin{align*}
    L(w) := \E_{x,y} [L(w;x;y)]
\end{align*}
Our primary contribution is that SGD on a \emph{smoothed} loss achieves the optimal sample complexity for this problem. First, we define the smoothing operator $\mathcal{L}_\lambda$:
\begin{definition}\label{def:smoothing_operator}
    Let $g: S^{d-1} \to \R$. We define the smoothing operator $\mathcal{L}_\lambda$ by
    \begin{align*}
        (\mathcal{L}_\lambda g)(w) := \E_{z \sim \mu_w}\qty[g\qty(\frac{w + \lambda z}{\norm{w + \lambda z}})]
    \end{align*}
    where $\mu_w$ is the uniform distribution over $S^{d-1}$ conditioned on being perpendicular to $w$.
\end{definition}
This choice of smoothing is natural for spherical gradient descent and can be directly related\footnote{
        This is equivalent to the intrinsic definition $(\mathcal{L}_\lambda g)(w) := \E_{z \sim UT_w(S^{d-1})}[\exp_w(\theta z)]$ where $\theta = \arctan(\lambda)$, $UT_w(S^{d-1})$ is the unit sphere in $T_w(S^{d-1})$, and $\exp$ is the Riemannian exponential map.
    } to the Riemannian exponential map on $S^{d-1}$.
We will often abuse notation and write $\mathcal{L}_\lambda\qty(g(w))$ rather than $(\mathcal{L}_\lambda g)(w)$. The smoothed empirical loss $L_\lambda(w;x;y)$ and the population loss $L_\lambda(w)$ are defined by:
\begin{align*}
        L_\lambda(w;x;y)
        := \mathcal{L}_\lambda\qty(L(w;x;y)) \qand L_\lambda(w)
        := \mathcal{L}_\lambda\qty(L(w)).
    \end{align*}
Our algorithm is online SGD on the smoothed loss $L_\lambda$:

\begin{algorithm}[H]
\SetAlgoLined
\KwIn{learning rate schedule $\set{\eta_t}$, smoothing schedule $\set{\lambda_t}$, steps $T$}
Sample $w_0 \sim \unif(S^{d-1})$ \\
\For{$t = 0$ to $T-1$}
{
Sample a fresh sample $(x_t,y_t)$ \\
$\hat w_{t+1} \leftarrow w_t - \eta_t \nabla_w L_{\lambda_t}(w_t;x_t;y_t)$ \\
$w_{t+1} \leftarrow \hat w_{t+1}/\norm{\hat w_{t+1}}$
}
\caption{Smoothed Online SGD}
\label{alg:smoothed_sgd}
\end{algorithm}

\subsection{Hermite polynomials and information exponent}
The sample complexity of \Cref{alg:smoothed_sgd} depends on the Hermite coefficients of $\sigma$:
\begin{definition}[Hermite Polynomials]
    The $k$th Hermite polynomial $He_k : \R \rightarrow \R$ is the degree $k$, monic polynomial defined by $$He_k(x) = (-1)^k\frac{\nabla^k \mu(x)}{\mu(x)},$$ where $\mu(x) := \frac{e^{-\frac{x^2}{2}}}{\sqrt{2\pi}}$ is the PDF of a standard Gaussian.
\end{definition}
The first few Hermite polynomials are $He_0(x) = 0, He_1(x) = x, He_2(x) = x^2 - 1, He_3(x) = x^3 - 3x$. For further discussion on the Hermite polynomials and their properties, refer to \Cref{sec:hermite}. The Hermite polynomials form an orthogonal basis of $L^2(\mu)$ so any function in $L^2(\mu)$ admits a Hermite expansion. We let $\set{c_k}_{k \ge 0}$ denote the Hermite coefficients of the link function $\sigma$:

\begin{definition}[Hermite Expansion of $\sigma$]\label{def:sigma_hermite}
    Let $\set{c_k}_{k \ge 0}$ be the Hermite coefficients of $\sigma$, i.e.
    \begin{align*}
        \sigma(x) = \sum_{k \ge 0} \frac{c_k}{k!} He_k(x) \qq{where} c_k = \E_{x \sim N(0,1)}[\sigma(x) He_k(x)].
    \end{align*}
\end{definition}
The critical quantity of interest is the \emph{information exponent} of $\sigma$:
\begin{definition}[Information Exponent]
    $\k = \k(\sigma)$ is the first index $k \ge 1$ such that $c_k \ne 0$.
\end{definition}

\begin{example}
Below are some example link functions and their information exponents:
\begin{itemize}
    \item $\sigma(x) = x$ and $\sigma(x) = \relu(x) := \max(0, x)$ have information exponents $\k = 1$.
    \item $\sigma(x) = x^2$ and $\sigma(x) = \abs{x}$ have information exponents $\k = 2$.
    \item $\sigma(x) = x^3 - 3x$ has information exponent $\k = 3$. More generally, $\sigma(x) = He_k(x)$ has information exponent $\k = k$.
\end{itemize}
\end{example}

Throughout our main results we focus on the case $\k \ge 3$ as when $\k = 1,2$, online SGD without smoothing already achieves the optimal sample complexity of $n \asymp d$ samples (up to log factors) \citep{arous2021online}.

\section{Main Results}\label{sec:main_thm}

Our main result is a sample complexity guarantee for \Cref{alg:smoothed_sgd}:

\begin{theorem}\label{thm:main}
    Assume $w_0 \cdot w^\star \gtrsim d^{-1/2}$ and $\lambda \in [1,d^{1/4}]$. Let $T_1 = \tilde O\qty(d^{\k-1}\lambda^{-2\k+4})$. For $t \le T_1$ set $\lambda_t = \lambda$ and $\eta_t = \tilde O(d^{-\k/2}\lambda^{2\k-2})$. For $t > T_1$ set $\lambda_t = 0$ and $\eta_t = O\qty((d+t-T_1)^{-1})$.
    Then if $T = T_1 + T_2$, with high probability the final iterate $w_T$ of Algorithm \ref{alg:smoothed_sgd} satisfies $L(w_T) \le O(\frac{d}{d+T_2}).$
\end{theorem}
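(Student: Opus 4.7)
The plan is to analyze \Cref{alg:smoothed_sgd} in two phases corresponding to the smoothing/learning-rate schedules. Phase 1 (the first $T_1$ steps, with $\lambda_t = \lambda$) is the weak-recovery phase: starting from $\alpha_0 := w_0 \cdot w^\star \gtrsim d^{-1/2}$, I want to show $\alpha_{T_1} \gtrsim 1$ with high probability. Phase 2 (the remaining $T_2$ steps, with $\lambda_t = 0$) is the sharpening phase from a warm start, and follows from standard online SGD arguments.

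For Phase 1, the first step is to derive an expression for the smoothed population loss and its spherical gradient. By $\mathrm{SO}(d-1)$ symmetry around $w^\star$, $L_\lambda$ depends on $w$ only through $\alpha$, so it suffices to analyze the $1$D function $L_\lambda(\alpha)$. Using the Hermite expansion $L(\alpha) = 1 - \sum_{k \geq k^\star} (c_k^2/k!)\alpha^k$ together with the identity
\[
\mathcal{L}_\lambda[\alpha^k] = (1+\lambda^2)^{-k/2}\,\E_z\bigl[(\alpha + \lambda\sqrt{1-\alpha^2}\,z\cdot v)^k\bigr],
\]
where $z$ is uniform on the $(d-2)$-sphere orthogonal to $w$ and $v$ is a fixed unit tangent vector, and the moment estimate $\E[(z\cdot v)^{2m}] \asymp d^{-m}$, one sees that smoothing mixes lower powers of $\alpha$ into each monomial $\alpha^k$. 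Expanding, $-L_\lambda'(\alpha)$ is lower bounded (up to constants depending only on $k^\star$) by $(1+\lambda^2)^{-k^\star/2}\cdot\lambda^{k^\star-2}/d^{(k^\star-2)/2}$ times a \emph{constant} when $k^\star$ is odd and times $\alpha$ when $k^\star$ is even. In both cases the smoothed signal is a factor of roughly $\lambda^{k^\star-2}$ larger than the bare $\alpha^{k^\star-1}$ signal at the critical scale $\alpha \asymp \lambda/\sqrt{d}$, which is the fundamental reason smoothing boosts the sample complexity.

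Next I would bound the variance of the per-sample smoothed gradient. Combining \Cref{assumption:poly_tail} with Gaussian integration by parts and the fact that smoothing damps the effective link function by a factor of $(1+\lambda^2)^{-k/2}$ per Hermite component, one shows $\E\bigl[(\nabla_w L_\lambda(w;x;y)\cdot w^\star)^2\bigr] \lesssim 1$, with a $\lambda$-dependent improvement at small $\alpha$. Decomposing $\alpha_{t+1}-\alpha_t$ into its conditional mean (the drift) and a martingale increment, the stated choices $\eta = \tilde\Theta(d^{-k^\star/2}\lambda^{2k^\star-2})$ and $T_1 = \tilde\Theta(d^{k^\star-1}\lambda^{-2k^\star+4})$ are precisely those that make the accumulated drift $\Omega(1)$ while keeping the cumulative martingale fluctuation $o(1)$ by Freedman's inequality, yielding $\alpha_{T_1} \gtrsim 1$ with high probability. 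The hard part will be this concentration analysis: the form of the signal changes as $\alpha$ crosses the critical scale $\lambda/\sqrt{d}$ (transitioning from smoothing-dominated to $\alpha^{k^\star-1}$-dominated), so the trajectory must be partitioned into stages and the step size tuned so that martingale increments are controlled uniformly across all stages.

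For Phase 2, once $\alpha_{T_1} \gtrsim 1$, the unsmoothed population loss $L$ satisfies a local one-point-convexity / PL-type inequality on a neighborhood of $w^\star$. Standard online projected SGD analysis with step size $\eta_t = \Theta((d+t-T_1)^{-1})$ then yields the parametric rate $L(w_T) \leq O(d/(d+T_2))$, completing the proof.
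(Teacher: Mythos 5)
Your two-phase plan follows the same skeleton as the paper's proof (compute the smoothed population gradient from the Hermite expansion, bound the per-sample gradient moments, drift-plus-martingale analysis partitioned across the critical scale $\alpha \asymp \lambda d^{-1/2}$, then a warm-started second phase with decaying step size). But there is a substantive misstatement in the mechanism, and the piece you treat vaguely is in fact the crux.

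You assert that ``the smoothed signal is a factor of roughly $\lambda^{k^\star-2}$ \emph{larger} than the bare $\alpha^{k^\star-1}$ signal at the critical scale.'' This is backwards. At $\alpha = \lambda d^{-1/2}$ (and for all smaller $\alpha$), the smoothed drift coefficient $c_\lambda(\alpha) \asymp (1+\lambda^2)^{-\k/2}(\lambda^2/d)^{(\k-1)/2}$ for odd $\k$, which is \emph{smaller} than the bare $\alpha^{\k-1}$ by a factor $\asymp \lambda^{-\k}$. (Your formula for the odd case also has the exponent off by one: it should be $\lambda^{\k-1}/d^{(\k-1)/2}$, not $\lambda^{\k-2}/d^{(\k-2)/2}$; this doesn't even match your own even-$\k$ formula at $\alpha = \lambda d^{-1/2}$.) What smoothing actually buys is an improved signal-to-noise ratio because the per-coordinate variance of the smoothed stochastic gradient is $\lesssim (1+\lambda^2)^{-\k}$ — smoothing damps the \emph{noise} by more than it damps the signal. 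This variance estimate (the paper's \Cref{lem:smoothed_sigma_variance}) is the hardest and most important technical step, and you hand-wave it with ``$\lesssim 1$, with a $\lambda$-dependent improvement at small $\alpha$.'' Two problems: the improvement to $(1+\lambda^2)^{-\k}$ must be derived explicitly and uniformly in $\alpha$ (it doesn't kick in only at small $\alpha$), and without it you cannot get the per-step learning rate $\eta \asymp d^{-\k/2}\lambda^{2\k-2}$ or the claimed $T_1$. Establishing this bound requires controlling the Frobenius norm of the derivative of the smoothed Hermite tensor $T_k(w) := \mathcal{L}_\lambda(w^{\otimes k})$, term by term in the binomial expansion, using orthogonality of the resulting tensor summands — it is not a one-line consequence of ``damps each Hermite component by $(1+\lambda^2)^{-k/2}$.''

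Two smaller points. First, Freedman's inequality needs bounded or sub-Gaussian increments, but the smoothed gradient inherits only polynomial tails from $\sigma'$ (\Cref{assumption:poly_tail}); you would need a truncation argument, whereas the paper works directly with $L^p$ moment bounds via a Rosenthal--Burkholder--Pinelis inequality. Second, your Phase 1 target ``$\alpha_{T_1} \gtrsim 1$'' isn't quite enough to hand off cleanly to the step-size-decay analysis: you need a quantitative warm start (the paper takes $\alpha_{T_1} \ge 1 - d^{-1/4}$, achieved by continuing the constant-$\eta$ SGD past $\alpha \gtrsim 1$), otherwise the initialization term in the decaying-step-size recursion does not get absorbed at the rate $O(d/(d+T_2))$.
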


\Cref{thm:main} uses large smoothing (up to $\lambda = d^{1/4}$) to rapidly escape the regime in which $w \cdot w^\star \asymp d^{-1/2}$. This first stage continues until $w \cdot w^\star = 1 - o_d(1)$ which takes $T_1 = \tilde O(d^{\k/2})$ steps when $\lambda = d^{1/4}$. The second stage, in which $\lambda = 0$ and the learning rate decays linearly, lasts for an additional $T_2 = d/\epsilon$ steps where $\epsilon$ is the target accuracy. Because \Cref{alg:smoothed_sgd} uses each sample exactly once, this gives the sample complexity
\begin{align*}
    n \gtrsim d^{\k-1} \lambda^{-2\k+4} + d/\epsilon
\end{align*}
to reach population loss $L(w_T) \le \epsilon$. Setting $\lambda = O(1)$ is equivalent to zero smoothing and gives a sample complexity of $n \gtrsim d^{\k-1} + d/\epsilon$, which matches the results of \citet{arous2021online}. On the other hand, setting $\lambda$ to the maximal allowable value of $d^{1/4}$ gives:
\begin{align*}
	n \gtrsim \underbrace{d^{\frac{\k}{2}}\vphantom{d/\epsilon}}_{\substack{\text{CSQ}\\\text{lower bound}}} + \underbrace{d/\epsilon}_{\substack{\text{information}\\\text{lower bound}}}
\end{align*}
which matches the sum of the CSQ lower bound, which is $d^{\frac{\k}{2}}$, and the information-theoretic lower bound, which is $d/\epsilon$, up to poly-logarithmic factors.

To complement \Cref{thm:main}, we replicate the CSQ lower bound in \citep{damian2022neural} for the specific function class $\sigma(w \cdot x)$ where $w \in S^{d-1}$. Statistical query learners are a family of learners that can query values $q(x,y)$ and receive outputs $\hat q$ with $\abs{\hat q - \E_{x,y}[q(x,y)]} \le \tau$ where $\tau$ denotes the query tolerance \citep{goel2020superpolynomial,diakonikolas2020algorithms}. An important class of statistical query learners is that of correlational/inner product statistical queries (CSQ) of the form $q(x,y) = yh(x)$. This includes a wide class of algorithms including gradient descent with square loss and correlation loss.

\begin{theorem}[CSQ Lower Bound]\label{thm:csq}
    Consider the function class $\mathcal{F}_\sigma := \{\sigma(w \cdot x) : w \in \mathcal{S}^{d-1}\}$. Any CSQ algorithm using $q$ queries requires a tolerance $\tau$ of at most
    \begin{align*}
        \tau \lesssim \qty(\frac{\log(qd)}{d})^{\k/4}
    \end{align*}
    to output an $f \in \mathcal{F}_\sigma$ with population loss less than $1/2$.
\end{theorem}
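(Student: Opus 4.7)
The plan is to carry out the standard statistical-query-dimension argument tailored to $\mathcal{F}_\sigma$. Fix $\rho := C\sqrt{\log(qd)/d}$ for a large absolute constant $C$ to be pinned down at the end, and use the probabilistic method on $S^{d-1}$ (two random unit vectors satisfy $|\langle u,v\rangle| > \rho$ only with probability $\exp(-\Omega(d\rho^2))$, by Gaussian concentration on the sphere) to produce vectors $w_1,\dots,w_N \in S^{d-1}$ with $|\langle w_i, w_j\rangle| \le \rho$ for all $i \ne j$ and $N = \exp(\Omega(d\rho^2)) = (qd)^{\Omega(C^2)}$. The hard family will be $f_i(x) := \sigma(w_i \cdot x)$.

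The next step is to convert this angular separation into function-level near-orthogonality. Expanding $\sigma$ in Hermite polynomials and using the identity $\E_{x \sim N(0,I_d)}[He_k(w_i\cdot x)\,He_\ell(w_j\cdot x)] = k!\,\delta_{k\ell}\,\langle w_i,w_j\rangle^k$ gives
\[
\E[f_i f_j] \;=\; \sum_{k \ge \k}\frac{c_k^2}{k!}\langle w_i, w_j\rangle^k,
\]
and since the information exponent kills the lower-order terms and $\sum_k c_k^2/k! = \E[\sigma^2] = 1$, this yields $|\E[f_i f_j]| \le \rho^{\k}$ for $i \ne j$. I would then prove the usual SQ-dimension lemma: for any $h$ with $\E[h^2] \le 1$, setting $S := \{i : |\E[h f_i]| > \tau\}$ and testing against $g := \sum_{i \in S}\sgn(\E[h f_i])\, f_i$, Cauchy--Schwarz gives $(|S|\tau)^2 \le \E[g^2] \le |S| + |S|^2 \rho^{\k}$, and hence $|S| \le 1/(\tau^2 - \rho^{\k})$ whenever $\tau^2 > \rho^{\k}$.

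To finish, draw $I \in [N]$ uniformly and let the adversarial oracle answer every query by $0$; these answers are $\tau$-valid for $w_I$ unless $I$ lies in one of the $q$ ``killed'' sets, whose union has size at most $q/(\tau^2 - \rho^{\k})$. Any output $\sigma(w \cdot x) \in \mathcal{F}_\sigma$ with $L(w) < 1/2$ must satisfy $\E[\sigma(w\cdot x)\, f_I] > 1/2$, and applying the SQ-dimension bound one more time (with $h = \sigma(w \cdot x)$, which has unit $L^2$ norm by $\E[\sigma^2]=1$, and threshold $1/2$) shows that this can hold for at most $O(1)$ indices $i$. Thus the success probability is at most $(O(1) + q/(\tau^2 - \rho^{\k}))/N$; insisting this be $\Omega(1)$ forces $\tau^2 \lesssim \rho^{\k} + q/N$, and choosing $C$ large enough makes $q/N$ negligible compared to $\rho^{\k}$, leaving $\tau \lesssim \rho^{\k/2} = (\log(qd)/d)^{\k/4}$. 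The main technical nuisances I expect are (i) being careful about the normalization convention on CSQ queries so the $\E[h^2] \le 1$ hypothesis of the SQ lemma is legitimate, and (ii) the decoding step, where turning ``$L(w) < 1/2$'' into ``$w$ picks out a unique index'' requires a second application of the SQ lemma and relies crucially on the normalization $\E[\sigma^2] = 1$.
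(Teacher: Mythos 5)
Your proposal follows essentially the same approach as the paper: construct an exponentially large packing of near-orthogonal directions on $S^{d-1}$, use the Hermite identity together with the information exponent and the normalization $\sum_k c_k^2/k! = 1$ to convert angular separation $|\langle w_i, w_j\rangle| \le \rho$ into function-level near-orthogonality $|\E[f_i f_j]| \le \rho^{\k}$, and then apply the standard SQ-dimension counting argument to force $\tau^2 \lesssim \rho^{\k}$. The only cosmetic difference is that you re-derive both the packing bound and the SQ-dimension lemma (the Cauchy--Schwarz argument and the zero-answer adversarial oracle) inline, whereas the paper delegates these to citations (Lemma 3 and the general CSQ lemma of \citet{damian2022neural}); the final optimization over $\rho$ (called $\epsilon$ in the paper) and the resulting bound $\tau \lesssim (\log(qd)/d)^{\k/4}$ are identical.
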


Using the standard $\tau \approx n^{-1/2}$ heuristic which comes from concentration, this implies that $n \gtrsim d^{\frac{\k}{2}}$ samples are necessary to learn $\sigma(w \cdot x)$ unless the algorithm makes exponentially many queries. In the context of gradient descent, this is equivalent to either requiring exponentially many parameters or exponentially many steps of gradient descent.

\section{Proof Sketch}\label{sec:proof_sketch}
In this section we highlight the key ideas of the proof of \Cref{thm:main}. The full proof is deferred to \Cref{sec:proof_main}. The proof sketch is broken into three parts. First, we conduct a general analysis on online SGD to show how the signal-to-noise ratio (SNR) affects the sample complexity. Next, we compute the SNR for the unsmoothed objective ($\lambda = 0$) to heuristically rederive the $d^{\k-1}$ sample complexity in \citet{arous2021online}. Finally, we show how smoothing boosts the SNR and leads to an improved sample complexity of $d^{\k/2}$ when $\lambda = d^{1/4}$.
\subsection{Online SGD Analysis}
To begin, we will analyze a single step of online SGD. We define $\alpha_t := w_t \cdot w^\star$ so that $\alpha_t \in [-1,1]$ measures our current progress. Furthermore, let $v_t := -\nabla L_{\lambda_t}(w_t;x_t;y_t)$. Recall that the online SGD update is:
\begin{align*}
    w_{t+1} = \frac{w_t + \eta_t v_t}{\norm{w_t + \eta_t v_t}} \implies \alpha_{t+1} = \frac{\alpha_t + \eta_t (v_t \cdot w^\star)}{\norm{w_t + \eta_t v_t}}.
\end{align*}
Using the fact that $v \perp w$ and $\frac{1}{\sqrt{1+x^2}} \approx 1-\frac{x^2}{2}$ we can Taylor expand the update for $\alpha_{t+1}$:
\begin{align*}
    \alpha_{t+1} = \frac{\alpha_t + \eta_t (v_t \cdot w^\star)}{\sqrt{1 + \eta_t^2 \norm{v_t}^2}} \approx \alpha_t + \eta_t (v_t \cdot w^\star) - \frac{\eta_t^2 \norm{v_t}^2 \alpha_t}{2} + O(\eta_t^3).
\end{align*}
As in \citet{arous2021online}, we decompose this update into a drift term and a martingale term. Let $\mathcal{F}_t = \sigma\qty{(x_0,y_0),\ldots,(x_{t-1},y_{t-1})}$ be the natural filtration. We focus on the drift term as the martingale term can be handled with standard concentration arguments. Taking expectations with respect to the fresh batch $(x_t,y_t)$ gives:
\begin{align*}
    \E[\alpha_{t+1}|\mathcal{F}_t] \approx \alpha_t + \eta_t \E[v_t \cdot w^\star|\mathcal{F}_t] - \eta_t^2 \E[\norm{v_t}^2|\mathcal{F}_t] \alpha_t/2
\end{align*}
so to guarantee a positive drift, we need to set $\eta_t \le \frac{2\E[v_t \cdot w^\star|\mathcal{F}_t]}{\E[\norm{v_t}^2|\mathcal{F}_t] \alpha_t}$ which gives us the value of $\eta_t$ used in \Cref{thm:main} for $t \le T_1$. However, to simplify the proof sketch we can assume knowledge of $\E[v_t \cdot w^\star|\mathcal{F}_t]$ and $\E[\norm{v_t}^2|\mathcal{F}_t]$ and optimize over $\eta_t$ to get a maximum drift of
\begin{align*}
    \E[\alpha_{t+1}|w_t] \approx \alpha_t + \frac{1}{2\alpha_t} \cdot \underbrace{\frac{\E[v_t \cdot w^\star|\mathcal{F}_t]^2}{\E[\norm{v_t}^2|\mathcal{F}_t]}}_{\text{SNR}}.
\end{align*}
The numerator measures the correlation of the population gradient with $w^\star$ while the denominator measures the norm of the noisy gradient. Their ratio thus has a natural interpretation as the signal-to-noise ratio (SNR). Note that the SNR is a local property, i.e. the SNR can vary for different $w_t$. When the SNR can be written as a function of $\alpha_t = w_t \cdot w^\star$, the SNR directly dictates the rate of optimization through the ODE approximation: $\alpha' \approx \text{SNR}/\alpha$. As online SGD uses each sample exactly once, the sample complexity for online SGD can be approximated by the time it takes this ODE to reach $\alpha \approx 1$ from $\alpha_0 \approx d^{-1/2}$. The remainder of the proof sketch will therefore focus on analyzing the SNR of the minibatch gradient $\nabla L_\lambda(w;x;y)$.

\subsection{Computing the Rate with Zero Smoothing}

When $\lambda = 0$, the signal and noise terms can easily be calculated. The key property we need is:
\begin{property}[Orthogonality Property]\label{prop:hermite_ortho}
Let $w,w^\star \in S^{d-1}$ and let $\alpha = w \cdot w^\star$. Then:
\begin{align*}
    \E_{x \sim N(0,I_d)}[He_j(w \cdot x)He_k(w^\star \cdot x)] = \delta_{jk}k!\alpha^k.
\end{align*}
\end{property}
Using \Cref{prop:hermite_ortho} and the Hermite expansion of $\sigma$ (\Cref{def:sigma_hermite}) we can directly compute the population loss and gradient. Letting $P_w^\perp := I - ww^T$ denote the projection onto the subspace orthogonal to $w$ we have:
\begin{align*}
    L(w) = \sum_{k \ge 0} \frac{c_k^2}{k!} [1-\alpha^k] \qand \nabla L(w) = - (P_w^\perp w^\star)\sum_{k \ge 0} \frac{c_k^2}{(k-1)!} \alpha^{k-1}.
\end{align*}
As $\alpha \ll 1$ throughout most of the trajectory, the gradient is dominated by the first nonzero Hermite coefficient so up to constants: $\E[v \cdot w^\star] = -\nabla L(w) \cdot w^\star \approx \alpha^{\k-1}$. Similarly, a standard concentration argument shows that because $v$ is a random vector in $d$ dimensions where each coordinate is $O(1)$, $\E[\norm{v}]^2 \approx d$. Therefore the SNR is equal to $\alpha^{2(\k-1)}/d$ so with an optimal learning rate schedule,
\begin{align*}
    \E[\alpha_{t+1}|\mathcal{F}_t] \approx \alpha_t + \alpha_t^{2\k-3}/d.
\end{align*}
This can be approximated by the ODE $\alpha' = \alpha^{2\k-3}/d$. Solving this ODE with the initial $\alpha_0 \asymp d^{-1/2}$ gives that the escape time is proportional to $d \alpha_0^{-2(\k-1)} = d^{\k-1}$ which heuristically re-derives the result of \citet{arous2021online}.

\begin{figure}
\begin{center}
    \trimbox{0 4ex 0 0}{
    \begin{tikzpicture}
		\begin{axis}[
			title={SNR with $\lambda = d^{1/4}$},
            title style={below right,at={(0,1)},yshift=-1ex},
			width=0.5\linewidth,
			height=12em,
			xmin=-0.5,
			xmax=0,
			ymin=-0.5,
			ymax=0,
			samples=10,
			ytick={-0.5,-0.33,-0.25,0},
			yticklabels={$d^{-k^\star}$,$d^{-\frac{k}{2}+1}$,$d^{-\frac{k}{2}+\frac{1}{2}}$,$1$},
			xtick={-0.5,-0.25,0},
			xticklabels={$d^{-1/2}$,$d^{-1/4}$,$1$},
			xlabel=$\alpha$,
			ylabel=$\text{SNR}$,
			ylabel near ticks,
			xlabel near ticks,
			tick align=outside,
			xtick pos=left,
			ytick pos=left,
			every tick/.style={
				opacity=0.75,
				black,
		        thick,
	        },
			label style={font=\large},
			legend style={at={(1.1,0.5)},anchor=west,draw=none},
			legend cell align={left},
			cycle list name=exotic,
		]
		  \addplot+[ultra thick, no markers,samples at={-0.5,-0.25,0},blue] {
		  x > -0.25 ? x : -0.25
		  };
		  \addlegendentry{$k^\star$ odd}
		  \addplot+[ultra thick, no markers,samples at={-0.5,-0.25,0},red] {
		  x > -0.25 ? x : -0.25+(x+0.25)/3
		  };
		  \addlegendentry{$k^\star$ even}
		  \addplot+[ultra thick, no markers,samples at={-0.5,0},black,dashed] (x,x);
		  \addlegendentry{No Smoothing}
		\end{axis}
	\end{tikzpicture}}
	\end{center}
    \caption{When $\lambda = d^{1/4}$, smoothing increases the SNR until $\alpha = \lambda d^{-1/2} = d^{-1/4}$.}
    \label{fig:snr}
\end{figure}
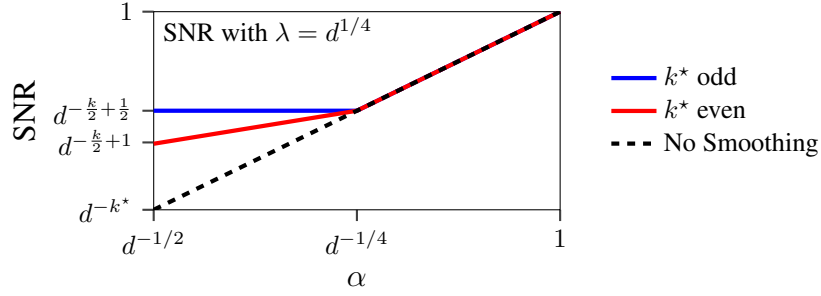

\subsection{How Smoothing boosts the SNR}

Smoothing improves the sample complexity of online SGD by boosting the SNR of the stochastic gradient $\nabla L(w;x;y)$. Recall that the population loss was approximately equal to $1-\frac{c_\k^2}{\k!}\alpha^\k$ where $\k$ is the first nonzero Hermite coefficient of $\sigma$. Isolating the dominant $\alpha^\k$ term and applying the smoothing operator $\mathcal{L}_\lambda$, we get:
\begin{align*}
    \mathcal{L}_\lambda(\alpha^\k) = \E_{z \sim \mu_w}\qty[\qty(\frac{w + \lambda z}{\norm{w + \lambda z}} \cdot w^\star)^\k].
\end{align*}
Because $v \perp w$ and $\norm{z} = 1$ we have that $\norm{w + \lambda z} = \sqrt{1 + \lambda^2} \approx \lambda$. Therefore,
\begin{align*}
    \mathcal{L}_\lambda(\alpha^{\k})
    \approx \lambda^{-\k} \E_{z \sim \mu_w}\qty[(\alpha + \lambda (z \cdot w^\star))^\k] = \lambda^{-\k} \sum_{j=0}^k \binom{k}{j} \alpha^{k-j} \lambda^j \E_{z \sim \mu_w}[(z \cdot w^\star)^j].
\end{align*}
Now because $z \stackrel{d}{=} -z$, the terms where $j$ is odd disappear. Furthermore, for a random $z$, $\abs{z \cdot w^\star} = \Theta(d^{-1/2})$. Therefore reindexing and ignoring all constants we have that
\begin{align*}
    L_\lambda(w) \approx 1-\mathcal{L}_\lambda(\alpha^{\k})
    &\approx 1 - \lambda^{-\k} \sum_{j=0}^{\floor{\frac{\k}{2}}} \alpha^{k-2j} \qty(\lambda^2/d)^j.
\end{align*}
Differentiating gives that
\begin{align*}
    \E[v \cdot w^\star] \approx - w^\star \cdot \nabla_w \mathcal{L}_\lambda(\alpha^{\k})
    &\approx \lambda^{-1}  \sum_{j=0}^{\floor{\frac{\k-1}{2}}} \qty(\frac{\alpha}{\lambda})^{\k-1} \qty(\frac{\lambda^2}{\alpha^2 d})^j.
\end{align*}
As this is a geometric series, it is either dominated by the first or the last term depending on whether $\alpha \ge \lambda d^{-1/2}$ or $\alpha \le \lambda d^{-1/2}$. Furthermore, the last term is either $d^{-\frac{\k-1}{2}}$ if $\k$ is odd or $\frac{\alpha}{\lambda} d^{-\frac{\k-2}{2}}$ if $\k$ is even. Therefore the signal term is:
\begin{align*}
    \E[v \cdot w^\star] \approx \lambda^{-1}
    \begin{cases}
        (\frac{\alpha}{\lambda})^{\k-1} & \alpha \ge \lambda d^{-1/2} \\
        d^{-\frac{\k-1}{2}} & \alpha \le \lambda d^{-1/2}\text{ and $\k$ is odd} \\
        \frac{\alpha}{\lambda} d^{-\frac{\k-2}{2}} & \alpha \le \lambda d^{-1/2}\text{ and $\k$ is even}
    \end{cases}.
\end{align*}
In addition, we can show that when $\lambda \le d^{1/4}$, the noise term satisfies $\E[\norm{v}^2] \le d \lambda^{-2\k}$. Note that in the high signal regime ($\alpha \ge \lambda d^{-1/2}$), both the signal and the noise are smaller by factors of $\lambda^{\k}$ which cancel when computing the SNR. However, when $\alpha \le \lambda d^{-1/2}$ the smoothing shrinks the noise faster than it shrinks the signal, resulting in an overall larger SNR.
Explicitly,
\begin{align*}
    \text{SNR} := \frac{\E[v \cdot w^\star]^2}{\E[\norm{v}^2]} \approx \frac{1}{d}
    \begin{cases}
        \alpha^{2(\k-1)} & \alpha \ge \lambda d^{-1/2} \\
        (\lambda^2/d)^{\k-1} & \alpha \le \lambda d^{-1/2}\text{ and $\k$ is odd} \\
        \alpha^2 (\lambda^2/d)^{\k-2} & \alpha \le \lambda d^{-1/2}\text{ and $\k$ is even}
    \end{cases}.
\end{align*}
For $\alpha \ge \lambda d^{-1/2}$, smoothing does not affect the SNR. However, when $\alpha \le \lambda d^{-1/2}$, smoothing greatly increases the SNR (see \Cref{fig:snr}).

Solving the ODE: $\alpha' = \text{SNR}/\alpha$ gives that it takes $T \approx d^{\k-1}\lambda^{-2\k+4}$ steps to converge to $\alpha \approx 1$ from $\alpha \approx d^{-1/2}$. Once $\alpha \approx 1$, the problem is locally strongly convex, so we can decay the learning rate and use classical analysis of strongly-convex functions to show that $\alpha \ge 1-\epsilon$ with an additional $d/\epsilon$ steps, from which \Cref{thm:main} follows.

\section{Experiments}\label{sec:experiments}
For $\k=3,4,5$ and $d=2^8,\ldots,2^{13}$ we ran a minibatch variant of \Cref{alg:smoothed_sgd} with batch size $B$ when $\sigma(x) = \frac{He_{\k}(x)}{\sqrt{k!}}$, the normalized $\k$th Hermite polynomial. We set:
\begin{align*}
    \lambda = d^{1/4} \qc \eta = \frac{B d^{-\k/2} (1+\lambda^2)^{\k-1}}{{1000\k!}} \qc B = \min\qty(0.1 d^{\k/2} (1+\lambda^2)^{-2\k+4},8192).
\end{align*}
We computed the number of samples required for \Cref{alg:smoothed_sgd} to reach $\alpha^2 = 0.5$ from $\alpha = d^{-1/2}$ and we report the min, mean, and max over $10$ random seeds. For each $k$ we fit a power law of the form $n = c_1 d^{c_2}$ in order to measure how the sample complexity scales with $d$. For all values of $\k$, we find that $c_2 \approx \k/2$ which matches \Cref{thm:main}. The results can be found in \Cref{fig:online_sgd} and additional experimental details can be found in \Cref{sec:experimental_details}.

\begin{figure}
\begin{center}
    \includegraphics[width=0.7\textwidth]{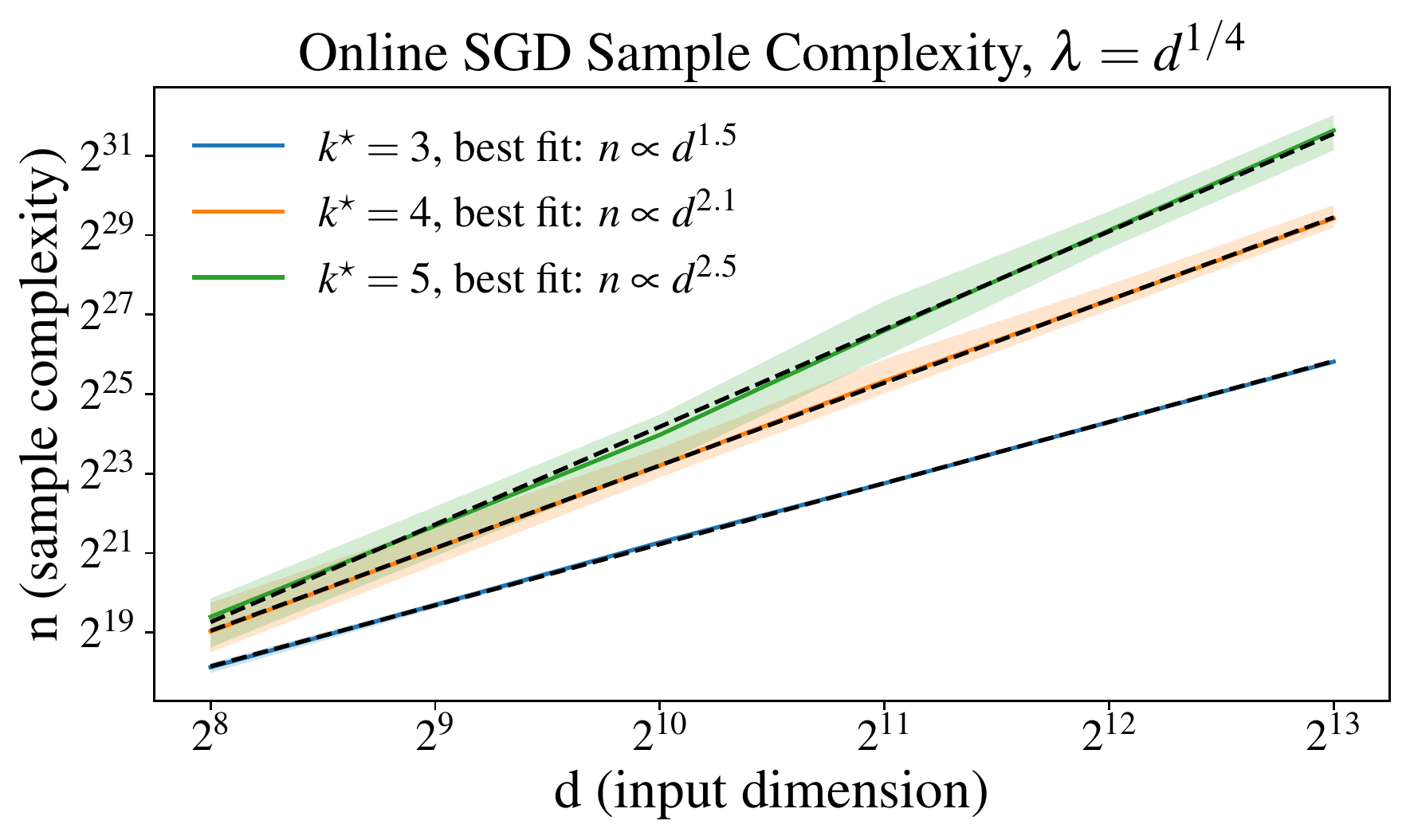}
\end{center}
\caption{For $\k=3,4,5$, \Cref{alg:smoothed_sgd} finds $w^\star$ with $n \propto d^{\k/2}$ samples. The solid lines and the shaded areas represent the mean and min/max values over $10$ random seeds. For each curve, we also fit a power law $n = c_1 d^{c_2}$ represented by the dashed lines. The value of $c_2$ is reported in the legend.}
\label{fig:online_sgd}
\end{figure}

\section{Discussion}\label{sec:discussion}

\subsection{Tensor PCA}\label{sec:tensor_pca}

We next outline connections to the Tensor PCA problem. Introduced in \citep{richard2014tensor}, the goal of Tensor PCA is to recover the hidden direction $w^* \in \mathcal{S}^{d-1}$ from the noisy $k$-tensor $T_n \in (\R^d)^{\otimes k}$ given by\footnote{This normalization is equivalent to the original $\frac{1}{\beta \sqrt{d}}Z$ normalization by setting $n = \beta^2 d$.}
\begin{align*}
    T_n = (w^*)^{\otimes k} + \frac{1}{\sqrt{n}}Z,
\end{align*}
where $Z\in (\R^d)^{\otimes k}$ is a Gaussian noise tensor with each entry drawn i.i.d from $\mathcal{N}(0, 1)$.

The Tensor PCA problem has garnered significant interest as it exhibits a \emph{statistical-computational gap}. $w^*$ is information theoretically recoverable when $n \gtrsim d$.
However, the best polynomial-time algorithms require $n \gtrsim d^{k/2}$; this lower bound has been shown to be tight for various notions of hardness such as CSQ or SoS lower bounds \citep{hopkins2015SOS, hopkins2016partialtrace, kunisky2019notes, bandeira2022parisi, brennan2021sq, dudeja2021pca, dudeja2022statisticalcomputational}.
Tensor PCA also exhibits a gap between spectral methods and iterative algorithms. Algorithms that work in the $n \asymp d^{k/2}$ regime rely on unfolding or contracting the tensor $X$, or on semidefinite programming relaxations~\citep{hopkins2015SOS, hopkins2016partialtrace}. 
On the other hand, iterative algorithms including gradient descent, power method, and AMP require a much larger sample complexity of $n \gtrsim d^{k-1}$~\citep{benarous2020thresholds}.
The suboptimality of iterative algorithms is believed to be due to bad properties of the landscape of the Tensor PCA objective in the region around the initialization. Specifically~\citep{ros2019landscape, benarous2019landscape} argue that there are exponentially many local minima near the equator in the $n \ll d^{k-1}$ regime.
To overcome this, prior works have considered ``smoothed" versions of gradient descent, and show that smoothing recovers the computationally optimal SNR in the $k=3$ case \citep{pmlr-v65-anandkumar17a} and heuristically for larger $k$ \citep{Biroli_2020}. 

\subsubsection{The Partial Trace Algorithm}

The smoothing algorithms above are inspired by the following \emph{partial trace} algorithm for Tensor PCA \citep{hopkins2016partialtrace}, which can be viewed as \Cref{alg:smoothed_sgd} in the limit as $\lambda \to \infty$ \citep{pmlr-v65-anandkumar17a}. Let $T_n = (w^\star)^{\otimes k} + \frac{1}{\sqrt{n}} Z$. Then we will consider iteratively contracting indices of $T$ until all that remains is a vector (if $k$ is odd) or a matrix (if $k$ is even). Explicitly, we define the partial trace tensor by
\begin{align*}
    M_n := T_n\qty(I_d^{\otimes \lceil\frac{k-2}{2}\rceil}) \in \begin{cases}
        \mathbb{R}^{d \times d} & \text{$k$ is even} \\
        \mathbb{R}^d & \text{$k$ is odd}.
    \end{cases}
\end{align*}
When $\k$ is odd, we can directly return $M_n$ as our estimate for $w^\star$ and when $\k$ is even we return the top eigenvector of $M_n$. A standard concentration argument shows that this succeeds when $n \gtrsim d^{\lceil k/2 \rceil}$.
Furthermore, this can be strengthened to $d^{k/2}$ by using the partial trace vector as a warm start for gradient descent or tensor power method when $k$ is odd \citep{pmlr-v65-anandkumar17a,Biroli_2020}.

\subsubsection{The Connection Between Single Index Models and Tensor PCA}

For both tensor PCA and learning single index models, gradient descent succeeds when the sample complexity is $n = d^{k-1}$ \citep{benarous2020thresholds,arous2021online}. On the other hand, the smoothing algorithms for Tensor PCA \citep{Biroli_2020, pmlr-v65-anandkumar17a} succeed with the computationally optimal sample complexity of $n = d^{k/2}$. Our \Cref{thm:main} shows that this smoothing analysis can indeed be transferred to the single-index model setting.

In fact, one can make a direct connection between learning single-index models with Gaussian covariates and Tensor PCA. Consider learning a single-index model when $\sigma(x) = \frac{He_k(x)}{\sqrt{k!}}$, the normalized $k$th Hermite polynomial. Then minimizing the correlation loss is equivalent to maximizing the loss function:
\begin{align*}
    L_n(w) = \frac{1}{n} \sum_{i=1}^n y_i \frac{He_k(w \cdot x_i)}{\sqrt{k!}} = \ev{w^{\otimes k}, T_n} \qq{where} T_n := \frac{1}{n} \sum_{i=1}^n y_i \frac{\mathbf{He}_k(x_i)}{\sqrt{k!}}.
\end{align*}
Here $\mathbf{He}_k(x_i) \in (\R^{d})^{\otimes k}$ denotes the $k$th Hermite tensor (see \Cref{sec:hermite} for background on Hermite polynomials and Hermite tensors). In addition, by the orthogonality of the Hermite tensors, $\E_{x,y}[T_n] = (w^\star)^{\otimes k}$ so we can decompose $T_n = (w^\star)^{\otimes k} + Z_n$ where by standard concentration, each entry of $Z_n$ is order $n^{-1/2}$. We can therefore directly apply algorithms for Tensor PCA to this problem. We remark that this connection is a heuristic, as the structure of the noise in Tensor PCA and our single index model setting are different.

\subsection{Empirical Risk Minimization on the Smoothed Landscape}

Our main sample complexity guarantee, \Cref{thm:main}, is based on a tight analysis of online SGD (\Cref{alg:smoothed_sgd}) in which each sample is used exactly once. One might expect that if the algorithm were allowed to reuse samples, as is standard practice in deep learning, that the algorithm could succeed with fewer samples. In particular, \citet{abbe2023leap} conjectured that gradient descent on the empirical loss $L_n(w) := \frac{1}{n}\sum_{i=1}^n L(w;x_i;y_i)$ would succeed with $n \gtrsim d^{\k/2}$ samples.

Our smoothing algorithm \Cref{alg:smoothed_sgd} can be directly translated to the ERM setting to learn $w^\star$ with $n \gtrsim d^{\k/2}$ samples. We can then Taylor expand the smoothed loss in the large $\lambda$ limit:
\begin{align*}
    \mathcal{L}_\lambda\qty(L_n(w)) \approx \E_{z \sim S^{d-1}}\qty[L_n(z) + \lambda^{-1} w \cdot \nabla L_n(z) + \frac{\lambda^{-2}}{2} \cdot w^T \nabla^2 L_n(z) w] + O(\lambda^{-3}).
\end{align*}
As $\lambda \to \infty$, gradient descent on this smoothed loss will converge to $\E_{z \sim S^{d-1}}[\nabla L_n(z)]$ which is equivalent to the partial trace estimator for $\k$ odd (see \Cref{sec:tensor_pca}). If $\k$ is even, this first term is zero in expectation and gradient descent will converge to the top eigenvector of $\E_{z \sim S^{d-1}}[\nabla^2 L_n(z)]$, which corresponds to the partial trace estimator for $\k$ even. Mirroring the calculation for the partial trace estimator, this succeeds with $n \gtrsim d^{\lceil \k/2 \rceil}$ samples. When $\k$ is odd, this can be further improved to $d^{\k/2}$ by using this estimator as a warm start from which to run gradient descent with $\lambda = 0$ as in \citet{pmlr-v65-anandkumar17a,Biroli_2020}.

\subsection{Connection to Minibatch SGD}

A recent line of works has studied the implicit regularization effect of stochastic gradient descent \citep{BlancGVV20,li2022what,damian2021label}. The key idea is that over short timescales, the iterates converge to a quasi-stationary distribution $N(\theta,\lambda S)$ where $S \approx I$ depends on the Hessian and the noise covariance at $\theta$ and $\lambda$ is proportional to the ratio of the learning rate and batch size. As a result, over longer periods of time SGD follows the \emph{smoothed gradient} of the empirical loss:
\begin{align*}
    \widetilde L_n(w) = \E_{z \sim N(0,S)}[L_n(w + \lambda z)].
\end{align*}
We therefore conjecture that minibatch SGD is also able to achieve the optimal $n \gtrsim d^{\k/2}$ sample complexity \emph{without explicit smoothing} if the learning rate and batch size are properly tuned.

\begin{ack}
    AD acknowledges support from a NSF Graduate Research Fellowship. EN acknowledges support from a National Defense Science \& Engineering Graduate Fellowship. RG is supported by NSF Award DMS-2031849, CCF-1845171 (CAREER), CCF-1934964 (Tripods) and a Sloan Research Fellowship. AD, EN, and JDL acknowledge support of the ARO under MURI Award W911NF-11-1-0304, the Sloan Research Fellowship, NSF CCF 2002272, NSF IIS 2107304, NSF CIF 2212262, ONR Young Investigator Award, and NSF CAREER Award 2144994.
\end{ack}

\bibliographystyle{unsrtnat}
\bibliography{ref}

\appendix

\section{Background and Notation}

\subsection{Tensor Notation}

Throughout this section let $T \in (\R^d)^\otimes k$ be a $k$-tensor.

\begin{definition}[Tensor Action]
	For a $j$ tensor $A \in (\R^d)^{\otimes j}$ with $j \le k$, we define the action $T[A]$ of $T$ on $A$ by
	\begin{align*}
	(T[A])_{i_1,\ldots,i_{k-j}} := T_{i_1,\ldots,i_k} A^{i_{k-j+1},\ldots,i_k} \in (\R^d)^{\otimes (k-j)}.
	\end{align*}
\end{definition}

We will also use $\ev{T,A}$ to denote $T[A] = A[T]$ when $A,T$ are both $k$ tensors. Note that this corresponds to the standard dot product after flattening $A,T$.

\begin{definition}[Permutation/Transposition]
	Given a $k$-tensor $T$ and a permutation $\pi \in S_k$, we use $\pi(T)$ to denote the result of permuting the axes of $T$ by the permutation $\pi$, i.e.
	\begin{align*}
		\pi(T)_{i_1,\ldots,i_k} := T_{i_{\pi(1)},\ldots,i_{\pi(k)}}.
	\end{align*}
\end{definition}

\begin{definition}[Symmetrization]
	We define $\sym_k \in (\R^d)^{\otimes 2k}$ by
	\begin{align*}
		(\sym_k)_{i_1,\ldots,i_k,j_1,\ldots,j_k} = \frac{1}{k!} \sum_{\pi \in S_k} \delta_{i_{\pi(1)},j_1} \cdots \delta_{i_{\pi(k)},j_k}
	\end{align*}
	where $S_k$ is the symmetric group on $1,\ldots,k$. Note that $\sym_k$ acts on $k$ tensors $T$ by
	\begin{align*}
		(\sym_k[T])_{i_1,\ldots,i_k} = \frac{1}{k!}\sum_{\pi \in S_k} \pi(T).
	\end{align*}
	i.e. $\sym_k[T]$ is the symmetrized version of $T$.
\end{definition}
We will also overload notation and use $\sym$ to denote the symmetrization operator, i.e. if $T$ is a $k$-tensor, $\sym(T) := \sym_k[T]$.

\begin{definition}[Symmetric Tensor Product]
	For a $k$ tensor $T$ and a $j$ tensor $A$ we define the symmetric tensor product of $T$ and $A$ by
	\begin{align*}
		T \stimes A := \sym(T \otimes A).
	\end{align*}
\end{definition}

\begin{lemma}\label{lem:symmetrized_frobenius_norm}
    For any tensor $T$,
    \begin{align*}
        \norm{\sym(T)}_F \le \norm{T}_F.
    \end{align*}
\end{lemma}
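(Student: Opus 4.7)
The plan is to use the fact that symmetrization is an average over permutation actions, each of which is an isometry of the Frobenius norm.

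Concretely, I would start from the explicit formula
\begin{align*}
    \sym(T) = \frac{1}{k!}\sum_{\pi \in S_k} \pi(T),
\end{align*}
which is given immediately after the definition of $\sym_k$. The next step is to apply the triangle inequality for the Frobenius norm:
\begin{align*}
    \|\sym(T)\|_F \le \frac{1}{k!}\sum_{\pi \in S_k} \|\pi(T)\|_F.
\end{align*}

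The key observation, which drives the whole argument, is that for each $\pi \in S_k$ the tensor $\pi(T)$ is obtained by a relabeling of the index tuples $(i_1,\ldots,i_k)$. Since the Frobenius norm is the $\ell^2$ norm of the entries and simply depends on the multiset of values $\{T_{i_1,\ldots,i_k}\}$, we have $\|\pi(T)\|_F = \|T\|_F$ for every $\pi$. Substituting this into the triangle inequality bound yields
\begin{align*}
    \|\sym(T)\|_F \le \frac{1}{k!}\sum_{\pi \in S_k}\|T\|_F = \|T\|_F,
\end{align*}
which is the desired inequality.

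There is essentially no obstacle here; the only point worth being careful about is verifying the isometry claim $\|\pi(T)\|_F = \|T\|_F$, which follows directly from writing $\|\pi(T)\|_F^2 = \sum_{i_1,\ldots,i_k} T_{i_{\pi(1)},\ldots,i_{\pi(k)}}^2$ and reindexing the sum by $(j_1,\ldots,j_k) = (i_{\pi(1)},\ldots,i_{\pi(k)})$. A conceptually cleaner alternative, if desired, is to note that $\sym$ acts as an orthogonal projection onto the subspace of fully symmetric tensors inside $(\R^d)^{\otimes k}$ (idempotent and self-adjoint with respect to the Frobenius inner product), and orthogonal projections are nonexpansive. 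Either route gives a short, self-contained proof.
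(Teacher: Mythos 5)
Your proof is correct and matches the paper's argument exactly: expand $\sym(T)$ as the average $\frac{1}{k!}\sum_{\pi\in S_k}\pi(T)$, apply the triangle inequality, and use that each permutation of indices is a Frobenius isometry. The alternative you mention (viewing $\sym$ as an orthogonal projection) is a nice observation but not needed; the paper uses the same triangle-inequality route.
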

\begin{proof}
    \begin{align*}
        \norm{\sym(T)}_F = \norm{\frac{1}{k!} \sum_{\pi \in S_k} \pi(T)}_F \le \frac{1}{k!} \sum_{\pi \in S_k} \norm{\pi(T)}_F = \norm{T}_F
    \end{align*}
    because permuting the indices of $T$ does not change the Frobenius norm.
\end{proof}

We will use the following two lemmas for tensor moments of the Gaussian distribution and the uniform distribution over the sphere:
\begin{definition}
    For integers $k, d > 0$, define the quantity $\nu_k^{(d)}$ as
    \begin{align*}
        \nu_k^{(d)} := (2k-1)!!\prod_{j=0}^{k-1}\frac{1}{d + 2j} = \Theta(d^{-k}).
    \end{align*}
    Note that $\nu_k^{(d)} = \E_{z \sim S^{d-1}}[z_1^{2k}]$.
\end{definition}
\begin{lemma}[Tensorized Moments]\label{lem:tensor_moments}
    \begin{align*}
        \E_{x \sim N(0,I_d)}[x^{\otimes 2k}] = (2k-1)!! I^{\stimes k} \qand \E_{z \sim S^{d-1}}[z^{\otimes 2k}] = \nu_k^{(d)} I^{\stimes k}.
    \end{align*}
\end{lemma}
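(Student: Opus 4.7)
The plan is to handle the two identities separately, and to reduce the sphere moment to the Gaussian moment via the standard polar decomposition.

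For the Gaussian identity, the natural tool is Isserlis'/Wick's theorem. Writing out entry $(i_1,\ldots,i_{2k})$ of $\E_{x\sim N(0,I_d)}[x^{\otimes 2k}]$ gives
\begin{align*}
    \E[x_{i_1}\cdots x_{i_{2k}}] \;=\; \sum_{\pi\in\mathcal{P}_{2k}} \prod_{\{a,b\}\in\pi}\delta_{i_a i_b},
\end{align*}
where $\mathcal{P}_{2k}$ is the set of perfect matchings of $\{1,\ldots,2k\}$, of cardinality $(2k-1)!!$. The remaining step is to recognize that this sum over matchings is exactly $(2k-1)!!\cdot I^{\stimes k}$. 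I would unfold the definition of the symmetric tensor product: $I^{\stimes k}=\sym(I^{\otimes k})$, whose entries are
\begin{align*}
    \sym(I^{\otimes k})_{i_1,\ldots,i_{2k}} \;=\; \frac{1}{(2k)!}\sum_{\pi\in S_{2k}} \delta_{i_{\pi(1)}i_{\pi(2)}}\delta_{i_{\pi(3)}i_{\pi(4)}}\cdots\delta_{i_{\pi(2k-1)}i_{\pi(2k)}}.
\end{align*}
Each fixed matching is produced by exactly $k!\cdot 2^k$ permutations (orderings of the $k$ pairs times flips within each pair), so the average equals $\frac{k!\,2^k}{(2k)!}\sum_{\pi\in\mathcal{P}_{2k}}\prod\delta$, and since $(2k)!/(k!\,2^k) = (2k-1)!!$, the two expressions agree after multiplying by $(2k-1)!!$.

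For the spherical identity, I would use the fact that if $x\sim N(0,I_d)$ then $x = \|x\| \cdot z$ with $\|x\|$ independent of $z = x/\|x\|\sim\unif(S^{d-1})$. Taking $(2k)$-th tensor powers and expectations,
\begin{align*}
    \E[x^{\otimes 2k}] \;=\; \E[\|x\|^{2k}]\cdot \E[z^{\otimes 2k}].
\end{align*}
Since $\|x\|^2\sim\chi^2_d$, a direct calculation (or induction using $\E[\|x\|^{2k}] = d\,\E[\|x\|^{2k-2}] + 2(k-1)\E[\|x\|^{2k-2}]$ via integration by parts) gives $\E[\|x\|^{2k}] = \prod_{j=0}^{k-1}(d+2j)$. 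Combining with the Gaussian identity,
\begin{align*}
    \E[z^{\otimes 2k}] \;=\; \frac{(2k-1)!!}{\prod_{j=0}^{k-1}(d+2j)}\,I^{\stimes k} \;=\; \nu_k^{(d)}\,I^{\stimes k},
\end{align*}
which is the stated identity. A sanity check on the $(1,1,\ldots,1)$ diagonal entry recovers $\E[z_1^{2k}] = \nu_k^{(d)}$, as flagged in the definition.

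The only real content is the matching-counting step that relates Wick's formula to $\sym(I^{\otimes k})$; everything else is bookkeeping. I would expect the main potential pitfall to be getting the multiplicity $k!\,2^k$ right (this is where one can accidentally drop a factor of $2^k$), so I would include the explicit double counting to avoid this. The rest, including the chi-squared moment computation, is standard.
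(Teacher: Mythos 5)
Your proof is correct and follows essentially the same route as the paper: the paper derives the spherical moment from the polar decomposition $x \stackrel{d}{=} rz$ with $r \sim \chi(d)$ exactly as you do, and simply cites a prior reference for the Gaussian identity, which you work out from scratch via Wick's theorem and the $k!\,2^k$ multiplicity count relating perfect matchings to $\sym(I^{\otimes k})$. Your filled-in computation (including the counting of permutations producing a given matching) is accurate.
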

\begin{proof}
    For the Gaussian moment, see \cite{damian2022neural}. The spherical moment follows from the decomposition $x \stackrel{d}{=} zr$ where $r \sim \chi(d)$.
\end{proof}

\begin{lemma}\label{lem:I_power_frobenius}
    \begin{align*}
        \|I^{\stimes j}\|_F^2 = (\nu_j^{(d)})^{-1}
    \end{align*}
\end{lemma}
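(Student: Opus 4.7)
The plan is to use the previous lemma (\Cref{lem:tensor_moments}), which identifies $I^{\stimes j}$ as $(\nu_j^{(d)})^{-1}\,\E_{z\sim S^{d-1}}[z^{\otimes 2j}]$, together with the observation that $z^{\otimes 2j}$ contracts against $I^{\stimes j}$ in a particularly trivial way.

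First, I would compute $\ev{I^{\stimes j}, z^{\otimes 2j}}$ for $z \in S^{d-1}$. Since $z^{\otimes 2j}$ is fully symmetric in its indices, replacing $I^{\stimes j} = \sym(I^{\otimes j})$ by $I^{\otimes j}$ does not change the inner product, i.e.\ $\ev{I^{\stimes j}, z^{\otimes 2j}} = \ev{I^{\otimes j}, z^{\otimes 2j}}$. The latter factorizes as $\ev{I, zz^\top}^j = \norm{z}^{2j} = 1$ for $z \in S^{d-1}$.

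Next, I would take expectations over $z \sim S^{d-1}$. By \Cref{lem:tensor_moments},
\begin{align*}
    1 = \E_{z \sim S^{d-1}}\!\ev{I^{\stimes j}, z^{\otimes 2j}} = \ev[\big]{I^{\stimes j}, \E_{z \sim S^{d-1}} z^{\otimes 2j}} = \nu_j^{(d)} \ev{I^{\stimes j}, I^{\stimes j}} = \nu_j^{(d)} \norm{I^{\stimes j}}_F^2,
\end{align*}
and rearranging gives $\norm{I^{\stimes j}}_F^2 = (\nu_j^{(d)})^{-1}$, as required.

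There is essentially no obstacle: the only step requiring any care is the verification that symmetric tensors absorb the symmetrization of their contraction partner, which is immediate from invariance of $\ev{\cdot,\cdot}$ under simultaneous index permutations. Alternative approaches (expanding $\sym(I^{\otimes j})$ combinatorially and counting matchings weighted by $(2j-1)!!$) would reproduce the same constant but require more bookkeeping; routing through \Cref{lem:tensor_moments} is cleaner and reuses machinery already in place.
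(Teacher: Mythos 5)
Your proof is correct and follows essentially the same route as the paper's: both reduce $\norm{I^{\stimes j}}_F^2$ to $\E_{z \sim S^{d-1}}[\norm{z}^{2j}] = 1$ via \Cref{lem:tensor_moments}. You merely spell out more explicitly the intermediate identity $\ev{I^{\stimes j}, z^{\otimes 2j}} = \norm{z}^{2j}$, which the paper uses implicitly.
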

\begin{proof}
    By \Cref{lem:tensor_moments} we have
    \begin{align*}
        1 = \E_{z \sim S^{d-1}}[\|z\|^{2j}] = \langle \E_{z \sim S^{d-1}}[z^{2j}], I^{\stimes j} \rangle = \nu_j^{(d)} \|I^{\stimes j}\|_F^2.
    \end{align*}
\end{proof}

\subsection{Hermite Polynomials and Hermite Tensors}\label{sec:hermite}
We provide a brief review of the properties of Hermite polynomials and Hermite tensors.

\begin{definition}
    We define the $k$th Hermite polynomial $He_k(x)$ by
    \begin{align*}
        He_k(x) := (-1)^k \frac{\nabla^k \mu(x)}{\mu(x)}
    \end{align*}
    where $\mu(x) := \frac{e^{-\frac{\norm{x}^2}{2}}}{(2\pi)^{d/2}}$ is the PDF of a standard Gaussian in $d$ dimensions. Note that when $d = 1$, this definition reduces to the standard univariate Hermite polynomials.
\end{definition}

We begin with the classical properties of the scalar Hermite polynomials:
\begin{lemma}[Properties of Hermite Polynomials]\label{lem:hermite_prop}
    When $d = 1$,
    \begin{itemize}
        \item {\normalfont\textbf{Orthogonality:}}
        \begin{align*}
            \E_{x \sim N(0,1)}[He_j(x)He_k(x)] = k! \delta_{jk}
        \end{align*}
        \item {\normalfont\textbf{Derivatives:}}
        \begin{align*}
            \frac{d}{dx} He_k(x) = k He_{k-1}(x)
        \end{align*}
        \item \normalfont{\textbf{Correlations:}} If $x,y \sim N(0,1)$ are correlated Gaussians with correlation $\alpha := \E[xy]$,
        \begin{align*}
            \E_{x,y}[He_j(x)He_k(y)] = k! \delta_{jk} \alpha^k.
        \end{align*}
        \item \normalfont{\textbf{Hermite Expansion:}} If $f \in L^2(\mu)$ where $\mu$ is the PDF of a standard Gaussian,
        \begin{align*}
            f(x) \stackrel{L^2(\mu)}{=} \sum_{k \ge 0} \frac{c_k}{k!} He_k(x) \qq{where} c_k = \E_{x \sim N(0,1)}[f(x) He_k(x)].
        \end{align*}
    \end{itemize}
\end{lemma}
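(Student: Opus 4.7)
The plan is to derive all four properties from the generating-function identity
\[
G(x,t) \;:=\; e^{xt - t^2/2} \;=\; \sum_{k \ge 0} \frac{He_k(x)}{k!}\, t^k,
\]
which follows from the Rodrigues definition in the lemma: writing $He_k(x) = (-1)^k \mu(x)^{-1} \mu^{(k)}(x)$, multiplying by $t^k/k!$ and summing yields $\mu(x-t)/\mu(x) = e^{xt - t^2/2}$ via the Taylor expansion of $\mu$ about $x$.

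Given $G$, the derivative identity is immediate: $\partial_x G = t\, G$, and matching coefficients of $t^k/k!$ on both sides gives $He_k'(x) = k\, He_{k-1}(x)$. For orthogonality I would compute
\[
\E_{x \sim N(0,1)}\bigl[G(x,s) G(x,t)\bigr] \;=\; e^{-(s^2+t^2)/2}\,\E\bigl[e^{(s+t)x}\bigr] \;=\; e^{st},
\]
and matching the coefficient of $s^j t^k / (j!k!)$ against the expansion $\sum_k (st)^k/k!$ forces $\E[He_j(x) He_k(x)] = k!\,\delta_{jk}$. The correlation identity is the same computation in two variables: since $sx+ty \sim N(0, s^2+t^2+2\alpha st)$ for correlated Gaussians $(x,y)$ with $\E[xy] = \alpha$, we get $\E[G(x,s) G(y,t)] = e^{\alpha st}$, and coefficient matching gives $\E[He_j(x) He_k(y)] = k!\,\delta_{jk}\alpha^k$. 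The exchange of expectation and sum is justified in each case by absolute convergence of the Gaussian moment generating function.

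For the Hermite expansion, orthogonality already shows that $\{He_k/\sqrt{k!}\}$ is an orthonormal system in $L^2(\mu)$ and that the coefficients $c_k = \E[f(x)He_k(x)]$ are the projections of $f$ onto the $k$-th basis element. The only nontrivial step, and the main obstacle, is completeness. I would argue by contradiction: suppose $f \in L^2(\mu)$ satisfies $\E[f(x) He_k(x)] = 0$ for every $k$. The bound $\|G(\cdot,t)\|_{L^2(\mu)}^2 = e^{t^2}$ ensures the series defining $G(\cdot,t)$ converges in $L^2(\mu)$, so by Cauchy--Schwarz
\[
\E\bigl[f(x) e^{tx}\bigr] \;=\; e^{t^2/2}\,\E\bigl[f(x) G(x,t)\bigr] \;=\; e^{t^2/2}\sum_{k \ge 0} \frac{t^k}{k!}\,\E\bigl[f(x)He_k(x)\bigr] \;=\; 0
\]
for every $t \in \R$. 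Thus the moment generating function of the finite signed measure $f\,d\mu$ (which has finite exponential moments because $|f| \in L^2(\mu)$) vanishes identically, and MGF uniqueness forces $f = 0$ in $L^2(\mu)$. This establishes completeness and hence the claimed $L^2(\mu)$-expansion.
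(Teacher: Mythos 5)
Your proposal is correct. The paper itself states \Cref{lem:hermite_prop} as a classical fact without supplying a proof, so there is no paper argument to compare against; your generating-function derivation is the standard textbook route and is sound in all four parts. A few remarks on the details: the identity $G(x,t) := e^{xt - t^2/2} = \sum_{k\geq 0}\frac{He_k(x)}{k!}t^k$ does indeed follow from the Rodrigues definition by recognizing $\sum_k \frac{(-t)^k}{k!}\mu^{(k)}(x) = \mu(x-t)$; the orthogonality and correlation identities then drop out from $\E[G(x,s)G(y,t)] = e^{\alpha st}$ with $\alpha = 1$ for the univariate case; and the derivative rule from $\partial_x G = tG$. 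For the expansion, you correctly reduce completeness to the vanishing of the exponential correlations $\E[f(x)e^{tx}]$, justify the interchange of sum and expectation via $L^2(\mu)$ convergence of the partial sums of $G(\cdot,t)$ (using $\|G(\cdot,t)\|_{L^2(\mu)}^2 = e^{t^2}$), and invoke injectivity of the Laplace transform on finite signed measures with everywhere-finite exponential moments; the latter is a standard fact (e.g.\ by analytic continuation to the Fourier transform, or by Jordan decomposition and MGF uniqueness for finite positive measures), but it is the one place where you lean on an external result, and it would be worth a one-line citation or a sentence indicating which of these standard arguments you intend.
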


These properties also have tensor analogues:

\begin{lemma}[Hermite Polynomials in Higher Dimensions]\label{lem:hermite_tensor_prop}
    ~
    \begin{itemize}
        \item {\normalfont\textbf{Relationship to Univariante Hermite Polynomials:}} If $\norm{w} = 1$,
        \begin{align*}
            He_k(w \cdot x) = \ev{He_k(x),w^{\otimes k}}
        \end{align*}

        \item {\normalfont\textbf{Orthogonality:}}
        \begin{align*}
		\E_{x \sim N(0,I_d)} \qty[He_j(x) \otimes He_k(x)] = \delta_{jk}k! \sym_k
        \end{align*}
        or equivalently, for any $j$ tensor $A$ and $k$ tensor $B$:
        \begin{align*}
        	\E_{x \sim N(0,I_d)}\qty[\ev{He_j(x),A}\ev{He_k(x),B}] = \delta_{jk} k! \ev{\sym(A),\sym(B)}.
        \end{align*}

        \item {\normalfont\textbf{Hermite Expansions:}} If $f: \R^d \to \R$ satisfies $\E_{x \sim N(0,I_d)}[f(x)^2] < \infty$,
        \begin{align*}
		f = \sum_{k \ge 0} \frac{1}{k!} \ev{He_k(x),C_k} \qq{where} C_k = \E_{x \sim N(0,I_d)}[f(x) He_k(x)].
	    \end{align*}
    \end{itemize}
\end{lemma}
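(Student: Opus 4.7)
The plan is to prove all three items from a single multivariate Hermite generating function. Computing $\mu(x - t)/\mu(x)$ in two ways yields this identity: Taylor expanding $\mu$ about $x$ gives
\begin{align*}
    \frac{\mu(x-t)}{\mu(x)} = \sum_{k \ge 0}\frac{(-1)^k}{k!}\frac{\ev{\nabla^k \mu(x), t^{\otimes k}}}{\mu(x)} = \sum_{k \ge 0}\frac{1}{k!}\ev{He_k(x), t^{\otimes k}},
\end{align*}
while a direct Gaussian calculation gives $\mu(x-t)/\mu(x) = e^{t\cdot x - \norm{t}^2/2}$. Hence for all $t \in \R^d$,
\begin{align*}
    e^{t\cdot x - \norm{t}^2/2} = \sum_{k \ge 0}\frac{1}{k!}\ev{He_k(x), t^{\otimes k}}.
\end{align*}

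For the first bullet, I would specialize to $t = sw$ with $\norm{w} = 1$. The LHS becomes the univariate Hermite generating function in $w\cdot x$, namely $\sum_k \frac{s^k}{k!}He_k(w\cdot x)$ (by \Cref{lem:hermite_prop}), while the RHS becomes $\sum_k \frac{s^k}{k!}\ev{He_k(x), w^{\otimes k}}$. Matching coefficients of $s^k$ gives $He_k(w\cdot x) = \ev{He_k(x), w^{\otimes k}}$.

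For the second bullet, I would multiply two copies of the master identity at independent parameters $s, t$ and take the expectation under $x \sim N(0, I_d)$. The Gaussian normalizers combine with $\E_x[e^{(s+t)\cdot x}] = e^{\norm{s+t}^2/2}$ to cancel cleanly, yielding $e^{s\cdot t} = \sum_{k \ge 0}\frac{(s\cdot t)^k}{k!}$. A short combinatorial calculation shows that $(s\cdot t)^k = \ev{\sym_k, s^{\otimes k}\otimes t^{\otimes k}}$, since the inner sum over permutations of the product of Kronecker deltas collapses to $k!$ copies of $(s\cdot t)^k$. Matching the bilinear forms in $s^{\otimes j}\otimes t^{\otimes k}$ on both sides forces $\E[He_j(x)\otimes He_k(x)] = 0$ for $j \ne k$ and $\E[He_k(x)\otimes He_k(x)] = k!\sym_k$. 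The contracted version follows from the elementary identity $\ev{\sym_k, A \otimes B} = \ev{\sym(A), \sym(B)}$ for $k$-tensors $A, B$.

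For the third bullet, orthogonality shows that $\ev{He_j(x), C_j}$ and $\ev{He_k(x), C_k}$ are $L^2(\mu)$-orthogonal for $j \ne k$ and satisfy $\E[\ev{He_k(x), C_k}^2] = k!\norm{\sym(C_k)}_F^2$. Given $f \in L^2(\mu)$ one defines $C_k := \E[f(x)\,He_k(x)]$ (automatically symmetric since the Hermite tensors are); contracting against any symmetric $D_k$ and using orthogonality confirms $C_k$ is the correct coefficient. The main obstacle is \textbf{completeness}: showing that partial sums converge to $f$ in $L^2(\mu)$. I would establish this by showing that the span of $\{\ev{He_k(x), C_k} : k \ge 0,\ C_k \text{ symmetric}\}$ equals all polynomials, which is classically dense in $L^2(\mu)$. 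Induction on $k$ suffices: from bullet (1), $\ev{He_k(x), w^{\otimes k}} = He_k(w \cdot x)$ equals $(w\cdot x)^k$ modulo polynomials of degree $<k$, and since $\{(w\cdot x)^k\}_{w \in S^{d-1}}$ spans the space of homogeneous degree-$k$ polynomials (a standard polarization argument), the induction closes.
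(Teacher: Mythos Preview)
The paper does not supply a proof of \Cref{lem:hermite_tensor_prop}; it is stated in the background section (\Cref{sec:hermite}) as a standard fact and used without justification. So there is nothing to compare against at the level of argument.

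Your proposal is correct and is the standard generating-function route to these identities. A few minor remarks:
\begin{itemize}
    \item In the orthogonality step, the ``matching bilinear forms'' move deserves one more sentence: since $He_j(x)$ is symmetric, $\E[He_j(x)\otimes He_k(x)]$ is symmetric separately in its first $j$ and last $k$ indices, and symmetric rank-one tensors $s^{\otimes j}$ span the symmetric subspace; hence the coefficients are uniquely determined by the values on $s^{\otimes j}\otimes t^{\otimes k}$.
    \item Your identity $\ev{\sym_k, A\otimes B} = \ev{\sym(A),\sym(B)}$ follows immediately from the paper's definition of $\sym_k$ as the projector onto symmetric tensors (self-adjoint and idempotent), so $\ev{\sym_k, A\otimes B} = \ev{\sym_k[B],A} = \ev{\sym(B),A} = \ev{\sym(B),\sym(A)}$.
    \item For completeness in $L^2(\mu)$, the density of polynomials under a measure with finite exponential moments is indeed classical; citing this explicitly (or noting that the Gaussian is determinate) would close the argument cleanly.
\end{itemize}
Overall the proof is sound and appropriate in length for a background lemma that the paper itself leaves unproved.
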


\section[Proof of Main Theorem]{Proof of \Cref{thm:main}}\label{sec:proof_main}

The proof of \Cref{thm:main} is divided into four parts. First, \Cref{sec:proof_notation} introduces some notation that will be used throughout the proof. Next, \Cref{sec:proof_pop} computes matching upper and lower bounds for the gradient of the smoothed population loss. Similarly, \Cref{sec:proof_grad} concentrates the empirical gradient of the smoothed loss. Finally, \Cref{sec:proof_dynamics} combines the bounds in \Cref{sec:proof_pop} and \Cref{sec:proof_grad} with a standard online SGD analysis to arrive at the final rate. 

\subsection{Additional Notation}\label{sec:proof_notation}
Throughout the proof we will assume that $w \in S^{d-1}$ so that $\nabla_w$ denotes the spherical gradient of $w$. In particular, $\nabla_w g(w) \perp w$ for any $g: S^{d-1} \to \R$. We will also use $\alpha$ to denote $w \cdot w^\star$ so that we can write expressions such as:
\begin{align*}
    \nabla_w \alpha = P_w^\perp w^\star = w^\star - \alpha w.
\end{align*}

We will use the following assumption on $\lambda$ without reference throughout the proof:
\begin{assumption}\label{assumption:lambdaC}
    $\lambda^2 \le d/C$ for a sufficiently large constant $C$.
\end{assumption}
We note that this is satisfied for the optimal choice of $\lambda = d^{1/4}$.

We will use $\tilde O(\cdot)$ to hide $\polylog(d)$ dependencies. Explicitly, $X = \tilde O(1)$ if there exists $C_1,C_2 > 0$ such that $\abs{X} \le C_1 \log(d)^{C_2}$. We will also use the following shorthand for denoting high probability events:
\begin{definition}
	We say an event $E$ happens \textbf{with high probability} if for every $k \ge 0$ there exists $d(k)$ such that for all $d \ge d(k)$, $\P[E] \ge 1-d^{-k}.$
\end{definition}
Note that high probability events are closed under polynomially sized union bounds. As an example, if $X \sim N(0,1)$ then $X \le \log(d)$ with high probability because
\begin{align*}
	\P[x > \log(d)] \le \exp(-\log(d)^2/2) \ll d^{-k}
\end{align*}
for sufficiently large $d$. In general, \Cref{lem:p_norm_to_tail_bound} shows that if $X$ is mean zero and has polynomial tails, i.e. there exists $C$ such that $E[X^p]^{1/p} \le p^C$, then $X = \tilde O(1)$ with high probability.

\subsection{Computing the Smoothed Population Gradient}\label{sec:proof_pop}

Recall that
\begin{align*}
    \sigma(x) \stackrel{L^2(\mu)}{=} \sum_{k \ge 0} \frac{c_k}{k!} He_k(x) \qq{where} c_k := \E_{x \sim N(0,1)}[\sigma(x) He_k(x)].
\end{align*}
In addition, because we assumed that $\E_{x \sim N(0,1)}[\sigma(x)^2] = 1$ we have Parseval's identity:
\begin{align*}
    1 = \E_{x \sim N(0,1)}[\sigma(x)^2] = \sum_{k \ge 0} \frac{c_k^2}{k!}.
\end{align*}
This Hermite decomposition immmediately implies a closed form for the population loss:
\begin{lemma}[Population Loss]\label{lem:population_loss}
Let $\alpha = w \cdot w^\star$. Then,
\begin{align*}
    L(w) = \sum_{k \ge 0} \frac{c_k^2}{k!} [1-\alpha^k].
\end{align*}
\end{lemma}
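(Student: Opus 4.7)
The plan is to directly compute $L(w) = \E_{x,y}[1 - \sigma(w\cdot x)y]$ by substituting $y = \sigma(w^\star\cdot x) + z$, expanding both copies of $\sigma$ in their Hermite series, and invoking the orthogonality property (\Cref{prop:hermite_ortho}) to collapse the double sum.

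First I would split
\begin{align*}
L(w) = 1 - \E_{x}[\sigma(w\cdot x)\,\sigma(w^\star\cdot x)] - \E_{x,z}[\sigma(w\cdot x)\,z],
\end{align*}
and observe that the last term vanishes because $z$ is independent of $x$ with $\E[z]=0$. So the whole problem reduces to evaluating the cross-correlation $\E_x[\sigma(w\cdot x)\sigma(w^\star\cdot x)]$.

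Next I would plug in the Hermite expansions $\sigma(w\cdot x) = \sum_j \frac{c_j}{j!}He_j(w\cdot x)$ and $\sigma(w^\star\cdot x) = \sum_k \frac{c_k}{k!}He_k(w^\star\cdot x)$ from \Cref{def:sigma_hermite}. Interchanging expectation with the double sum (justified by $\sigma \in L^2(\mu)$, so the Hermite series converges in $L^2$ and all cross terms are square-integrable), \Cref{prop:hermite_ortho} gives $\E_x[He_j(w\cdot x)He_k(w^\star\cdot x)] = \delta_{jk}k!\alpha^k$, which kills all off-diagonal terms and leaves
\begin{align*}
\E_x[\sigma(w\cdot x)\sigma(w^\star\cdot x)] = \sum_{k\ge 0}\frac{c_k^2}{(k!)^2}\cdot k!\,\alpha^k = \sum_{k\ge 0}\frac{c_k^2}{k!}\alpha^k.
\end{align*}

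Finally I would use the Parseval identity $\sum_{k\ge 0}\frac{c_k^2}{k!} = \E_{x\sim N(0,1)}[\sigma(x)^2] = 1$ (which holds by the normalization of $\sigma$) to rewrite the constant $1$ as $\sum_k \frac{c_k^2}{k!}$, so that
\begin{align*}
L(w) = \sum_{k\ge 0}\frac{c_k^2}{k!} - \sum_{k\ge 0}\frac{c_k^2}{k!}\alpha^k = \sum_{k\ge 0}\frac{c_k^2}{k!}[1-\alpha^k],
\end{align*}
as claimed. There is no serious obstacle here; the only point worth being careful about is the interchange of expectation and infinite sum, which is standard given the $L^2(\mu)$ convergence of the Hermite expansion and Cauchy--Schwarz applied to the diagonal pairs.
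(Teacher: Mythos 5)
Your proof is correct and matches the paper's implicit argument: the paper simply states that the formula follows immediately from the Hermite decomposition, which is precisely the computation you carry out — noise term vanishes by independence, expand both copies of $\sigma$ in Hermite series, collapse via \Cref{prop:hermite_ortho}, and rewrite the constant $1$ via Parseval.
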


\Cref{lem:population_loss} implies that to understand the smoothed population $L_\lambda(w) = (\mathcal{L}_\lambda L)(w)$, it suffices to understand $\mathcal{L}_\lambda(\alpha^k)$ for $k \ge 0$. First, we will show that the set of single index models is closed under smoothing operator $\mathcal{L}_\lambda$:
\begin{lemma}\label{lem:smoothing_single_index}
    Let $g: [-1,1] \to \R$ and let $u \in S^{d-1}$. Then
    \begin{align*}
        \mathcal{L}_\lambda\qty(g(w \cdot u)) = g_\lambda(w \cdot u)
    \end{align*}
    where
    \begin{align*}
        g_\lambda(\alpha) := \E_{z \sim S^{d-2}}\qty[g\qty(\frac{\alpha + \lambda z_1 \sqrt{1-\alpha^2}}{\sqrt{1+\lambda^2}})].
    \end{align*}
\end{lemma}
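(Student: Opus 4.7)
The plan is to substitute the definition of $\mathcal{L}_\lambda$ and then simplify the resulting expression using two basic facts: (i) every $z \sim \mu_w$ has unit norm and is orthogonal to $w$, so the normalizing denominator is the deterministic constant $\sqrt{1+\lambda^2}$; and (ii) the law of $z \cdot u$ under $\mu_w$ depends on $u$ only through $\alpha := w \cdot u$ and $\sqrt{1-\alpha^2}$, by rotational symmetry in $w^\perp$.

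First I would write
\begin{align*}
    \mathcal{L}_\lambda\bigl(g(w \cdot u)\bigr)
    = \E_{z \sim \mu_w}\!\left[g\!\left(\frac{w + \lambda z}{\|w + \lambda z\|} \cdot u\right)\right],
\end{align*}
and observe that because $\mu_w$ is supported on the unit sphere of $w^\perp$, $\|z\|=1$ and $z \cdot w = 0$, so $\|w + \lambda z\|^2 = 1 + \lambda^2$. Hence the argument of $g$ simplifies to $(\alpha + \lambda\, z \cdot u)/\sqrt{1+\lambda^2}$, where $\alpha = w \cdot u$.

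Next I would handle the distribution of $z \cdot u$. Decompose $u = \alpha w + \sqrt{1-\alpha^2}\, u^\perp$ with $u^\perp \in S^{d-1} \cap w^\perp$ (assuming $\alpha^2 < 1$; the boundary cases $\alpha = \pm 1$ are trivial since the argument of $g$ becomes constant). Since $z \perp w$, we have $z \cdot u = \sqrt{1-\alpha^2}\, z \cdot u^\perp$. The distribution $\mu_w$ is rotationally invariant on the $(d-1)$-dimensional subspace $w^\perp$, so $z \cdot u^\perp$ has the same law as $z_1$ for $z \sim \unif(S^{d-2})$ (identifying $w^\perp$ with $\R^{d-1}$ and $u^\perp$ with the first standard basis vector). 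Substituting this distributional identity gives
\begin{align*}
    \mathcal{L}_\lambda\bigl(g(w \cdot u)\bigr)
    = \E_{z \sim S^{d-2}}\!\left[g\!\left(\frac{\alpha + \lambda z_1 \sqrt{1-\alpha^2}}{\sqrt{1+\lambda^2}}\right)\right]
    = g_\lambda(\alpha) = g_\lambda(w \cdot u),
\end{align*}
which is the claim.

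There is no real obstacle here — the lemma is essentially bookkeeping. The only mildly delicate point is justifying that $z \cdot u^\perp \stackrel{d}{=} z_1$ under $\mu_w$; I would spell this out by noting that any orthogonal transformation of $w^\perp$ fixing $w$ preserves $\mu_w$, and choosing one that sends $u^\perp$ to the first basis vector of $w^\perp$.
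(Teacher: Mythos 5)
Your proof is correct and follows essentially the same route as the paper: compute the deterministic normalizer $\|w+\lambda z\| = \sqrt{1+\lambda^2}$, then identify the law of $z\cdot u$ under $\mu_w$ as $\sqrt{1-\alpha^2}\,z_1$ with $z \sim S^{d-2}$ via rotational invariance of $\mu_w$ in $w^\perp$. The paper phrases the last step via the polyspherical decomposition $z\cdot u = z' \cdot P_w^\perp u \stackrel{d}{=} z_1\|P_w^\perp u\|$ while you decompose $u = \alpha w + \sqrt{1-\alpha^2}u^\perp$; these are the same observation.
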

\begin{proof}
    Expanding the definition of $\mathcal{L}_\lambda$ gives:
    \begin{align*}
        \mathcal{L}_\lambda\qty(g(w \cdot u))
        &= \E_{z \sim \mu_w}\qty[g\qty(\frac{w + \lambda z}{\norm{w + \lambda z}} \cdot u)] \\
        &= \E_{z \sim \mu_w}\qty[g\qty(\frac{w \cdot u + \lambda z \cdot u}{\sqrt{1 + \lambda^2}})].
    \end{align*}
    Now I claim that when $z \sim \mu_w$, $z \cdot u \stackrel{d}{=} z_1 \sqrt{1-(w \cdot u)^2}$ where $z \sim S^{d-2}$ which would complete the proof. To see this, note that we can decompose $R^d$ into $\spn\{w\} \oplus \spn\{w\}^\perp$. Under this decomposition we have the polyspherical decomposition $z \stackrel{d}{=} (0,z')$ where $z' \sim S^{d-2}$. Then
    \begin{align*}
        z \cdot u = z' \cdot P_w^\perp u \stackrel{d}{=} z_1 \|P_w^\perp u\| = z_1 \sqrt{1 - (w \cdot u)^2}.
    \end{align*}
\end{proof}

Of central interest are the quantities $\mathcal{L}_\lambda(\alpha^k)$ as these terms show up when smoothing the population loss (see \Cref{lem:population_loss}). We begin by defining the quantity $s_k(\alpha;\lambda)$ which will provide matching upper and lower bounds on $\mathcal{L}_\lambda(\alpha^k)$ when $\alpha \ge 0$:
\begin{definition}
    We define $s_k(\alpha;\lambda)$ by
    \begin{align*}
        s_k(\alpha;\lambda) := 
        \frac{1}{(1+\lambda^2)^{k/2}}
        \begin{cases}
            \alpha^{k} & \alpha^2 \ge \frac{\lambda^2}{d} \\
            \qty(\frac{\lambda^2}{d})^{\frac{k}{2}} & \alpha^2 \le \frac{\lambda^2}{d}\text{ and $k$ is even} \\
            \alpha \qty(\frac{\lambda^2}{d})^{\frac{k-1}{2}} & \alpha^2 \le \frac{\lambda^2}{d}\text{ and $k$ is odd}
        \end{cases}.
    \end{align*}
\end{definition}
\begin{lemma}\label{lem:sk_bounds}
    For all $k \ge 0$ and $\alpha \ge 0$, there exist constants $c(k),C(k)$ such that
    \begin{align*}
        c(k) s_k(\alpha;\lambda) \le \mathcal{L}_\lambda(\alpha^k) \le C(k) s_k(\alpha;\lambda).
    \end{align*}
\end{lemma}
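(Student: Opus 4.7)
The plan is to reduce to a one-dimensional binomial expansion and then identify the dominant term. Applying Lemma \ref{lem:smoothing_single_index} with $u = w^\star$ and $g(t) = t^k$ gives
$$\mathcal{L}_\lambda(\alpha^k) \;=\; \frac{1}{(1+\lambda^2)^{k/2}}\,\E_{z\sim S^{d-2}}\bigl[(\alpha + \beta z_1)^k\bigr], \qquad \beta := \lambda\sqrt{1-\alpha^2}.$$
Expanding via the binomial theorem and using $z_1 \stackrel{d}{=} -z_1$, all odd-$j$ contributions vanish, leaving
$$\E\bigl[(\alpha + \beta z_1)^k\bigr] \;=\; \sum_{\substack{0 \le j \le k \\ j\text{ even}}} \binom{k}{j}\, \alpha^{k-j}\,\beta^{j}\, \nu^{(d-1)}_{j/2},$$
and by definition $\nu^{(d-1)}_{j/2} = \Theta_j(d^{-j/2})$.

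Since $\alpha \ge 0$, every summand is nonnegative, so up to $k$-dependent constants the sum is comparable to its largest term. The ratio between the $j$-th and $(j+2)$-th term is $\Theta_k\bigl((\beta/(\alpha\sqrt{d}))^2\bigr)$, so the two-sided geometric series is maximized either at $j=0$ or at the largest even $j \le k$, depending on whether $\alpha\sqrt{d} \gtrsim \beta$ or not. Moreover, by Assumption \ref{assumption:lambdaC} we have $\lambda^2 \le d/C$, so in the regime $\alpha^2 \le \lambda^2/d$ one has $\alpha^2 \le 1/C$ and hence $\beta \in [\lambda/\sqrt{2}, \lambda]$. Thus $\beta$ may be replaced by $\lambda$ up to constants, and the crossover $\alpha\sqrt{d} \asymp \beta$ becomes $\alpha^2 \asymp \lambda^2/d$, exactly the threshold in the definition of $s_k$.

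Splitting into the three cases: when $\alpha^2 \ge \lambda^2/d$ the $j=0$ term $\alpha^k$ dominates; when $\alpha^2 \le \lambda^2/d$ and $k$ is even the $j=k$ term gives $\Theta_k((\lambda^2/d)^{k/2})$; when $\alpha^2 \le \lambda^2/d$ and $k$ is odd the $j=k-1$ term gives $\Theta_k\bigl(\alpha\,(\lambda^2/d)^{(k-1)/2}\bigr)$. Dividing each by $(1+\lambda^2)^{k/2}$ reproduces $s_k(\alpha;\lambda)$. No real obstacle arises: the argument hinges only on $\alpha \ge 0$ making all binomial terms nonnegative (so a single representative term simultaneously controls the sum from above and below, within $k!$ factors), and on Assumption \ref{assumption:lambdaC} letting us absorb the $\sqrt{1-\alpha^2}$ factor in $\beta$ into constants. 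The mildly delicate bookkeeping is just confirming that in each case the chosen ``dominant'' index is in fact the argmax of the geometric series, which follows from the two inequalities $\beta/\sqrt{d} \le \lambda/\sqrt{d} \le \alpha$ (Case 1) and $\beta/\sqrt{d} \ge \lambda/(2\sqrt{d}) \ge \alpha/2$ (Case 2).
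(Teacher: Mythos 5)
Your proof is correct and follows essentially the same route as the paper's: reduce to a one-dimensional spherical average via Lemma~\ref{lem:smoothing_single_index}, binomial-expand, discard odd terms by symmetry, use nonnegativity to compare the sum with its largest term, and locate the argmax of the resulting geometric sequence via Assumption~\ref{assumption:lambdaC}. The only cosmetic difference is that you keep the sum indexed over even $j$ with $\beta=\lambda\sqrt{1-\alpha^2}$ explicit, whereas the paper re-indexes to $j\in\{0,\dots,\lfloor k/2\rfloor\}$ and factors out $(\alpha/\sqrt{1+\lambda^2})^{k}$ before examining the ratio.
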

\begin{proof}
    Using \Cref{lem:smoothing_single_index} we have that
    \begin{align*}
        \mathcal{L}_\lambda(\alpha^k)
        &= \E_{z \sim S^{d-2}}\qty[\qty(\frac{\alpha + \lambda z_1 \sqrt{1-\alpha^2}}{\sqrt{1 + \lambda^2}})^k] \\
        &= (1+\lambda^2)^{-k/2} \sum_{j=0}^k \binom{k}{j} \alpha^{k-j} \lambda^j (1-\alpha^2)^{j/2} \E_{z \sim S^{d-2}}[z_1^j].
    \end{align*}
    Now note that when $j$ is odd, $\E_{z \sim S^{d-2}}[z_1^j] = 0$ so we can re-index this sum to get
    \begin{align*}
        \mathcal{L}_\lambda(\alpha^k)
        &= (1+\lambda^2)^{-k/2} \sum_{j=0}^{\floor{\frac{k}{2}}} \binom{k}{2j} \alpha^{k-2j} \lambda^{2j} (1-\alpha^2)^{j} \E_{z \sim S^{d-2}}[z_1^{2j}] \\
        &= (1+\lambda^2)^{-k/2} \sum_{j=0}^{\floor{\frac{k}{2}}} \binom{k}{2j} \alpha^{k-2j} \lambda^{2j} (1-\alpha^2)^{j} \nu_j^{(d-1)}.
    \end{align*}
    Note that every term in this sum is non-negative. Now we can ignore constants depending on $k$ and use that $\nu_j^{(d-1)} \asymp d^{-j}$ to get
    \begin{align*}
        \mathcal{L}_\lambda(\alpha^k) \asymp \qty(\frac{\alpha}{\sqrt{1+\lambda^2}})^k \sum_{j=0}^{\floor{\frac{k}{2}}} \qty(\frac{\lambda^2 (1-\alpha^2)}{\alpha^2 d})^j.
    \end{align*}
    Now when $\alpha^2 \ge \frac{\lambda^2}{d}$, this is a decreasing geometric series which is dominated by the first term so $\mathcal{L}_\lambda(\alpha^k) \asymp \qty(\frac{\alpha}{\lambda})^k$. Next, when $\alpha^2 \le \frac{\lambda^2}{d}$ we have by \Cref{assumption:lambdaC} that $\alpha \le \frac{1}{C}$ so $1-\alpha^2$ is bounded away from $0$. Therefore the geometric series is dominated by the last term which is
    \begin{align*}
        \frac{1}{(1+\lambda^2)^{-k/2}}
        \begin{cases}
            \qty(\frac{\lambda^2}{d})^{\frac{k}{2}} & \text{$k$ is even} \\
            \alpha \qty(\frac{\lambda^2}{d})^{\frac{k-1}{2}} & \text{$k$ is odd}
        \end{cases}
    \end{align*}
    which completes the proof.
\end{proof}

Next, in order to understand the population gradient, we need to understand how the smoothing operator $\mathcal{L}_\lambda$ commutes with differentiation. We note that these do not directly commute because the smoothing distribution $\mu_w$ depends on $w$ so this term must be differentiated as well. However, smoothing and differentiation \emph{almost} commute, which is described in the following lemma:
\begin{lemma}\label{lem:commute_smoothing}
    Define the dimension-dependent univariate smoothing operator by:
    \begin{align*}
        \mathcal{L}_\lambda^{(d)} g(\alpha) := \E_{z \sim S^{d-2}}\qty[g\qty(\frac{\alpha + \lambda z_1 \sqrt{1-\alpha^2}}{\sqrt{1+\lambda^2}})].
    \end{align*}
    Then,
    \begin{align*}
        \frac{d}{d\alpha} \mathcal{L}_\lambda^{(d)}(g(\alpha)) = \frac{\mathcal{L}_\lambda^{(d)}(g'(\alpha))}{\sqrt{1+\lambda^2}} - \frac{\lambda^2 \alpha}{(1+\lambda^2)(d-1)} \mathcal{L}_\lambda^{(d+2)}(g''(\alpha)).
    \end{align*}
\end{lemma}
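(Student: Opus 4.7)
The plan is to differentiate $\mathcal{L}_\lambda^{(d)}g(\alpha)$ by passing the derivative under the expectation, which produces two terms; the first is immediately $\mathcal{L}_\lambda^{(d)}(g'(\alpha))/\sqrt{1+\lambda^2}$, and the second requires a spherical integration-by-parts identity to rewrite as an expectation over a higher-dimensional sphere. Concretely, set $\phi(\alpha,u) := \frac{\alpha + \lambda u \sqrt{1-\alpha^2}}{\sqrt{1+\lambda^2}}$ so that $\mathcal{L}_\lambda^{(d)}g(\alpha) = \E_{z \sim S^{d-2}}[g(\phi(\alpha,z_1))]$. A direct chain rule gives $\partial_\alpha \phi(\alpha,u) = \frac{1}{\sqrt{1+\lambda^2}} - \frac{\lambda \alpha u}{\sqrt{(1+\lambda^2)(1-\alpha^2)}}$, so differentiating under the integral yields
\begin{equation*}
\frac{d}{d\alpha}\mathcal{L}_\lambda^{(d)}g(\alpha) = \frac{\mathcal{L}_\lambda^{(d)}(g'(\alpha))}{\sqrt{1+\lambda^2}} - \frac{\lambda \alpha}{\sqrt{(1+\lambda^2)(1-\alpha^2)}}\,\E_{z \sim S^{d-2}}\bigl[z_1\, g'(\phi(\alpha,z_1))\bigr].
\end{equation*}
The only remaining task is to re-express the last expectation in the form $\mathcal{L}_\lambda^{(d+2)}(g''(\alpha))$ with the correct prefactor.

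The key auxiliary fact I would establish is the dimension-raising identity: for every smooth $f:[-1,1]\to\R$ and every $n\ge 2$,
\begin{equation*}
\E_{z \sim S^{n-1}}[z_1 f(z_1)] = \frac{1}{n}\,\E_{\tilde z \sim S^{n+1}}[f'(\tilde z_1)].
\end{equation*}
To prove it, I would use that the marginal density of $z_1$ under $z\sim S^{n-1}$ is $p_n(u) = c_n(1-u^2)^{(n-3)/2}$ on $[-1,1]$ with $c_n = \Gamma(n/2)/(\sqrt{\pi}\,\Gamma((n-1)/2))$, write $u(1-u^2)^{(n-3)/2} = -\frac{1}{n-1}\frac{d}{du}(1-u^2)^{(n-1)/2}$, and integrate by parts; the boundary term vanishes because $(1-u^2)^{(n-1)/2}$ vanishes at $\pm 1$. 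The resulting integrand involves $(1-u^2)^{(n-1)/2}$, which is precisely $p_{n+2}(u)/c_{n+2}$, and the $\Gamma$-function identity $c_n/((n-1)c_{n+2}) = 1/n$ (a one-line beta-function computation) supplies the clean prefactor $1/n$. As a sanity check, $f(u)=u$ recovers $\E[z_1^2] = 1/n$.

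Finally I would apply the identity with $n = d-1$ and $f(u) = g'(\phi(\alpha,u))$. Chain rule gives $f'(u) = g''(\phi(\alpha,u))\cdot \partial_u\phi = g''(\phi(\alpha,u))\cdot \frac{\lambda\sqrt{1-\alpha^2}}{\sqrt{1+\lambda^2}}$, and the definition of $\mathcal{L}_\lambda^{(d+2)}$ (which takes $\tilde z \sim S^{(d+2)-2}=S^d$) gives $\E_{\tilde z \sim S^d}[g''(\phi(\alpha,\tilde z_1))] = \mathcal{L}_\lambda^{(d+2)}(g''(\alpha))$. Substituting back into the formula from the first paragraph, the $\sqrt{1-\alpha^2}$ factors cancel and the $\lambda/\sqrt{1+\lambda^2}$ factors combine into $\lambda^2/(1+\lambda^2)$, producing exactly the claimed identity.

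The only delicate step is the sphere-IBP identity, where one must be careful about which ambient dimension corresponds to which superscript in $\mathcal{L}_\lambda^{(\cdot)}$ and to get the normalization $\tfrac{1}{d-1}$ right; the rest is mechanical chain rule and algebraic simplification.
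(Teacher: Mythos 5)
Your proof is correct and follows essentially the same route as the paper: differentiate under the expectation, identify the two resulting terms, and apply a spherical integration-by-parts (Stein-type) identity to convert the factor $z_1$ into a derivative and raise the sphere dimension by two. The dimension-raising identity you prove inline is exactly the paper's Lemma~\ref{lem:spherical_stein} (their $C(d)$ is your $1/c_d$), established by the same integration by parts and the same $\Gamma$-function cancellation.
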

\begin{proof}
    Directly differentiating the definition for $\mathcal{L}_\lambda^{(d)}$ gives
    \begin{align*}
        \frac{d}{d\alpha} \mathcal{L}_\lambda^{(d)}
        &= \frac{\mathcal{L}_\lambda^{(d)}(g'(\alpha))}{\sqrt{1+\lambda^2}} - \E_{z \sim S^{d-2}}\qty[\frac{\alpha \lambda z_1}{\sqrt{(1+\lambda^2)(1-\alpha^2)}} g'\qty(\frac{\alpha + \lambda z_1 \sqrt{1-\alpha^2}}{\sqrt{1+\lambda^2}})].
    \end{align*}
    By \Cref{lem:spherical_stein}, this is equal to
    \begin{align*}
        \frac{d}{d\alpha} \mathcal{L}_\lambda^{(d)}
        &= \frac{\mathcal{L}_\lambda^{(d)}(g'(\alpha))}{\sqrt{1+\lambda^2}} - \frac{\lambda^2 \alpha}{(1+\lambda^2)(d-1)} \E_{z \sim S^{d}}\qty[g''\qty(\frac{\alpha + \lambda z_1 \sqrt{1-\alpha^2}}{\sqrt{1+\lambda^2}})] \\
        &= \frac{\mathcal{L}_\lambda^{(d)}(g'(\alpha))}{\sqrt{1+\lambda^2}} - \frac{\lambda^2 \alpha}{(1+\lambda^2)(d-1)} \mathcal{L}_\lambda^{(d+2)}(g''(\alpha)).
    \end{align*}
\end{proof}

Now we are ready to analyze the population gradient:

\begin{lemma}\label{lem:smoothed_population_gradient}
    \begin{align*}
        \nabla_w L_\lambda(w) = -(w^\star - \alpha w) c_\lambda(\alpha)
    \end{align*}
    where for $\alpha \ge C^{-1/4}d^{-1/2}$,
    \begin{align*}
        c_\lambda(\alpha) \asymp \frac{s_{\k-1}(\alpha;\lambda)}{\sqrt{1+\lambda^2}}.
    \end{align*}
\end{lemma}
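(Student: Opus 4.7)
The plan is to reduce $\nabla_w L_\lambda$ to a scalar derivative in $\alpha$ and then apply the two preparatory lemmas. Starting from the closed form in \Cref{lem:population_loss} and the linearity of $\mathcal{L}_\lambda$, the smoothed loss reads
\begin{align*}
    L_\lambda(w) = \sum_{k \ge \k} \frac{c_k^2}{k!}\bigl[1 - \mathcal{L}_\lambda(\alpha^k)\bigr],
\end{align*}
where the sum begins at $\k$ because $c_k = 0$ for $k < \k$. By \Cref{lem:smoothing_single_index}, each summand depends on $w$ only through $\alpha = w \cdot w^\star$, so using $\nabla_w \alpha = P_w^\perp w^\star = w^\star - \alpha w$ and the chain rule gives immediately
\begin{align*}
    \nabla_w L_\lambda(w) = -(w^\star - \alpha w)\, c_\lambda(\alpha), \qquad c_\lambda(\alpha) := \sum_{k \ge \k} \frac{c_k^2}{k!}\, \frac{d}{d\alpha} \mathcal{L}_\lambda^{(d)}(\alpha^k).
\end{align*}
The form of the gradient is thus immediate, and what remains is a two-sided bound on the scalar $c_\lambda(\alpha)$.

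Next, I would invoke \Cref{lem:commute_smoothing} with $g(\alpha) = \alpha^k$ to expand
\begin{align*}
    \frac{d}{d\alpha}\mathcal{L}_\lambda^{(d)}(\alpha^k) = \frac{k\, \mathcal{L}_\lambda^{(d)}(\alpha^{k-1})}{\sqrt{1+\lambda^2}} - \frac{k(k-1)\lambda^2 \alpha}{(1+\lambda^2)(d-1)}\, \mathcal{L}_\lambda^{(d+2)}(\alpha^{k-2}),
\end{align*}
and then replace each $\mathcal{L}_\lambda$ factor by $s_{k-1}$ or $s_{k-2}$ via \Cref{lem:sk_bounds}. At $k = \k$, the first piece is already of the asserted order $s_{\k-1}(\alpha;\lambda)/\sqrt{1+\lambda^2}$. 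A direct case split against the threshold $\alpha^2 = \lambda^2/d$ shows the second piece is subleading: in the regime $\alpha^2 \ge \lambda^2/d$ the ratio of the two pieces is of order $\lambda^2/d$, and in the regime $\alpha^2 \le \lambda^2/d$ it reduces to $\alpha^2$ if $\k$ is odd and to $\lambda^2/d$ if $\k$ is even. All three are bounded by $1/C$ using \Cref{assumption:lambdaC} together with $\alpha^2 \le \lambda^2/d \le 1/C$ in the small-$\alpha$ regime. For $C$ sufficiently large, this yields both an upper and a matching lower bound on the $k = \k$ contribution, because the $\k$-th summand is then $\asymp c_\k^2/(\k-1)! \cdot s_{\k-1}(\alpha;\lambda)/\sqrt{1+\lambda^2}$.

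The step I expect to be the main obstacle is controlling the tail $\sum_{k > \k}$ and showing it does not swamp the leading term. Here I would note that in each regime, \Cref{lem:sk_bounds} shows that $s_{k-1}(\alpha;\lambda)/s_{\k-1}(\alpha;\lambda)$ decays geometrically in $k - \k$, with per-step factor at most $\alpha/\sqrt{1+\lambda^2} \le 1$ in the large-$\alpha$ regime and at most $\sqrt{\lambda^2/d}/\sqrt{1+\lambda^2}$ (times a bounded parity-dependent prefactor) in the small-$\alpha$ regime. Combined with the identity $\sum_{k \ge 1} c_k^2/(k-1)! = \E_{x \sim N(0,1)}[\sigma'(x)^2]$, which is finite by \Cref{assumption:poly_tail}, this absorbs the extra factor of $k$ arising from differentiating $\alpha^k$ and bounds the tail by a constant multiple of the leading contribution. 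The delicate parts are verifying this uniformly across the threshold $\alpha^2 \asymp \lambda^2/d$, where the functional form of $s_{k-1}$ changes, and simultaneously keeping the second-derivative piece of \Cref{lem:commute_smoothing} subleading at every $k$; the explicit $1/[(d-1)(1+\lambda^2)]$ prefactor is what saves the latter.
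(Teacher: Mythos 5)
Your overall strategy (reduce to the scalar derivative $c_\lambda(\alpha)$, split into the $k=\k$ term and the tail, use \Cref{lem:commute_smoothing} and the case analysis to show the second-derivative piece is subleading) is the same as the paper's, and the treatment of the $k=\k$ term is correct and matches. The gap is in the tail bound.

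You propose to bound the tail by applying \Cref{lem:sk_bounds} to each summand, obtaining a factor $C(k)\,s_{k-1}(\alpha;\lambda)$, then summing the resulting series against $c_k^2/(k-1)!$ and the geometric ratio $s_{k-1}/s_{\k-1}$. But the constants $C(k)$ in \Cref{lem:sk_bounds} are \emph{not} uniform in $k$: tracing the proof of that lemma, the step ``$\nu_j^{(d-1)}\asymp d^{-j}$, ignore constants depending on $k$'' hides factors of size $\binom{k}{2j}\nu_j^{(d-1)}d^j = \frac{k!}{2^j j!(k-2j)!}\cdot\Theta(1)$, whose maximum over $j\le\floor{k/2}$ is of order $(k-1)!!$. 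Thus $C(k)$ grows superexponentially. The per-step geometric factor you cite is only $\alpha/\sqrt{1+\lambda^2}\le 1$ in the large-$\alpha$ regime and $\lambda/\sqrt{d(1+\lambda^2)}\le (2C)^{-1/2}$ in the small-$\alpha$ regime; neither is a decaying-in-$k$ factor, so neither can absorb $(k-1)!!$, and $\sum_k \frac{c_k^2}{(k-1)!}C(k)\cdot(\text{geom})^{k-\k}$ is not controlled by $\sum_k c_k^2/(k-1)!<\infty$ alone.

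The paper sidesteps this by never invoking \Cref{lem:sk_bounds} (or its heavy-tailed variant) with growing $k$: it first uses $c_k^2\le k!$ to drop the coefficients, sums the remaining power series in closed form (e.g.\ $\sum_{k>\k}k\alpha^{k-1}\lesssim \alpha^{\k}/(1-\alpha)^2$ and $\sum_{k>\k}k(k-1)\alpha^{k-2}\lesssim\alpha^{\k-1}/(1-\alpha)^3$), applies the smoothing operator once to the resulting closed-form expression, and only then invokes \Cref{lem:sk_heavy_tail} with the \emph{fixed} exponents $(\k,2)$ and $(\k-1,3)$. This ``sum first, then smooth and bound'' order is what makes all hidden constants absolute. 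Your ``bound each term, then sum'' order does not; to rescue it you would need to prove a version of \Cref{lem:sk_bounds} with explicit and summable $k$-dependence, which is essentially what \Cref{lem:sk_heavy_tail} is packaging for you.
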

\begin{proof}
    Recall that
    \begin{align*}
        L(w) = 1 - \sum_{k \ge 0} \frac{c_k^2}{k!} \alpha^k.
    \end{align*}
    Because $\k$ is the index of the first nonzero Hermite coefficient, we can start this sum at $k = \k$. Smoothing and differentiating gives:
    \begin{align*}
        \nabla_w L(w) = -(w^\star - \alpha w) c_\lambda(\alpha) \qq{where} c_\lambda(\alpha) := \sum_{k \ge \k} \frac{c_k^2}{k!} \frac{d}{d\alpha} \mathcal{L}_\lambda(\alpha^k).
    \end{align*}
    We will break this into the $k = \k$ term and the $k > \k$ tail. First when $k = \k$ we can use \Cref{lem:commute_smoothing} and \Cref{lem:sk_bounds} to get:
    \begin{align*}
        \frac{c_{\k}^2}{(\k)!} \frac{d}{d\alpha} \mathcal{L}_\lambda(\alpha^\k) 
        = \frac{c_{\k}^2}{(\k)!} \qty[\frac{\k \mathcal{L}_\lambda(\alpha^{\k-1})}{\sqrt{1+\lambda^2}} - \frac{\k(\k-1) \lambda^2 \alpha \mathcal{L}_\lambda^{(d+2)}(\alpha^{\k-2})}{(1+\lambda^2)(d-1)}].
    \end{align*}
    The first term is equal up to constants to $\frac{s_{\k-1}(\alpha;\lambda)}{\sqrt{1+\lambda^2}}$ while the second term is equal up to constants to $\frac{\lambda^2 \alpha}{(1+\lambda^2)d} s_{\k-2}(\alpha;\lambda)$. However, we have that
    \begin{align*}
        \frac{\frac{\lambda^2 \alpha}{(1+\lambda^2)d} s_{\k-2}(\alpha;\lambda)}{\frac{s_{\k-1}(\alpha;\lambda)}{\sqrt{1+\lambda^2}}} =
        \begin{cases}
            \frac{\lambda^2}{d} & \alpha^2 \ge \frac{\lambda^2}{d} \\
            \frac{\lambda^2}{d} & \alpha^2 \le \frac{\lambda^2}{d}\text{ and $\k$ is even} \\
            \alpha^2 & \alpha^2 \le \frac{\lambda^2}{d}\text{ and $\k$ is odd}
        \end{cases} \le \frac{\lambda^2}{d} \le \frac{1}{C}.
    \end{align*}
    Therefore the $k = \k$ term in $c_\lambda(\alpha)$ is equal up to constants to $\frac{s_{\k-1}(\alpha;\lambda)}{\sqrt{1+\lambda^2}}$.

    Next, we handle the $k > \k$ tail. By \Cref{lem:commute_smoothing} this is equal to
    \begin{align*}
        \sum_{k > \k} \frac{c_k^2}{k!} \qty[\frac{k \mathcal{L}_\lambda(\alpha^{k-1})}{\sqrt{1+\lambda^2}} - \frac{k(k-1) \lambda^2 \alpha \mathcal{L}_\lambda^{(d+2)}(\alpha^{k-2})}{(1+\lambda^2)(d-1)}].
    \end{align*}
    Now recall that from \Cref{lem:sk_bounds}, $\mathcal{L}_\lambda(\alpha^k)$ is always non-negative so we can use $c_k^2 \le k!$ to bound this tail in absolute value by
    \begin{align*}
        &\sum_{k > \k} \frac{k \mathcal{L}_\lambda(\alpha^{k-1})}{\sqrt{1+\lambda^2}} + \frac{k(k-1) \lambda^2 \alpha \mathcal{L}_\lambda^{(d+2)}(\alpha^{k-2})}{(1+\lambda^2)(d-1)} \\
        &\lesssim \frac{1}{\sqrt{1+\lambda^2}}\mathcal{L}_\lambda\qty(\sum_{k > \k} k \alpha^{k-1}) + \frac{\lambda^2 \alpha}{(1+\lambda^2)d} \mathcal{L}_\lambda^{(d+2)}\qty(\sum_{k > \k} k(k-1) \alpha^{k-2}) \\
        &\lesssim \frac{1}{\sqrt{1+\lambda^2}}\mathcal{L}_\lambda\qty(\frac{\alpha^{\k}}{(1-\alpha)^2}) + \frac{\lambda^2 \alpha}{(1+\lambda^2)d} \mathcal{L}_\lambda^{(d+2)}\qty(\frac{\alpha^{\k-1}}{(1-\alpha)^3}).
    \end{align*}
    Now by \Cref{lem:sk_heavy_tail}, this is bounded for $d \ge 5$ by
    \begin{align*}
        \frac{s_{\k}(\alpha;\lambda)}{\sqrt{1+\lambda^2}} + \frac{\lambda^2 \alpha}{(1+\lambda^2) d} s_{\k-1}(\alpha;\lambda).
    \end{align*}
    For the first term, we have
    \begin{align*}
        \frac{s_{\k}(\alpha;\lambda)}{s_{\k-1}(\alpha;\lambda)} =
        \begin{cases}
            \frac{\alpha}{\lambda} & \alpha^2 \ge \frac{\lambda^2}{d} \\
            \frac{\lambda}{d\alpha} & \alpha^2 \le \frac{\lambda^2}{d}\text{ and $\k$ is even} \\
            \frac{\alpha}{\lambda} & \alpha^2 \le \frac{\lambda^2}{d}\text{ and $\k$ is odd}
        \end{cases} \le C^{-1/4}.
    \end{align*}
    The second term is trivially bounded by
    \begin{align*}
        \frac{\lambda^2 \alpha}{(1+\lambda^2) d} \le \frac{\lambda^2}{(1+\lambda^2) d} \le \frac{1}{C(1+\lambda^2)} \le \frac{1}{C\sqrt{1+\lambda^2}}
    \end{align*}
    which completes the proof.
\end{proof}

\subsection{Concentrating the Empirical Gradient}\label{sec:proof_grad}

We cannot directly apply \Cref{lem:smoothing_single_index} to $\sigma(w \cdot x)$ as $\norm{x} \ne 1$. Instead, we will use the properties of the Hermite tensors to directly smooth $\sigma(w \cdot x)$.

\begin{lemma}\label{lem:smoothed_hermite_tensor}
    \begin{align*}
        \mathcal{L}_\lambda(He_k(w \cdot x)) = \ev{He_k(x),T_k(w)}
    \end{align*}
    where
    \begin{align*}
        T_k(w) = (1+\lambda^2)^{-\frac{k}{2}} \sum_{j=0}^{\floor{\frac{k}{2}}} \binom{k}{2j} w^{\otimes (k-2j)} \stimes (P_w^\perp)^{\stimes j} \lambda^{2j} \nu_j^{(d-1)}.
    \end{align*}
\end{lemma}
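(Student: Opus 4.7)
The plan is to pass from the scalar smoothed Hermite polynomial to a tensor contraction, move the $w$-dependence entirely onto a deterministic tensor, and then compute the resulting spherical moment in the orthogonal complement of $w$.

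First I would use the identity from the bullet on ``Relationship to Univariate Hermite Polynomials'' in \Cref{lem:hermite_tensor_prop}: for any unit vector $u$, $He_k(u \cdot x) = \langle He_k(x), u^{\otimes k}\rangle$. Applying this with $u = (w+\lambda z)/\|w+\lambda z\|$ (which is a unit vector) gives
\begin{align*}
    \mathcal{L}_\lambda(He_k(w\cdot x)) = \E_{z \sim \mu_w}\left[\left\langle He_k(x), \left(\tfrac{w+\lambda z}{\|w+\lambda z\|}\right)^{\otimes k}\right\rangle\right].
\end{align*}
Since $z \perp w$ and $\|z\|=1$ under $\mu_w$, we have $\|w+\lambda z\|^2 = 1+\lambda^2$ deterministically, so the normalization factor $(1+\lambda^2)^{-k/2}$ can be pulled out of the expectation. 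Exchanging expectation and the linear pairing with $He_k(x)$ reduces the problem to computing
\begin{align*}
    T_k(w) = (1+\lambda^2)^{-k/2}\, \E_{z \sim \mu_w}\bigl[(w+\lambda z)^{\otimes k}\bigr].
\end{align*}

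The next step is a tensor multinomial expansion. Expanding and grouping terms by the number $j$ of factors equal to $\lambda z$ yields
\begin{align*}
    (w+\lambda z)^{\otimes k} = \sum_{j=0}^{k} \binom{k}{j}\lambda^{j}\, w^{\otimes (k-j)} \stimes z^{\otimes j},
\end{align*}
where the combinatorial factor $\binom{k}{j}$ arises because the $\binom{k}{j}$ distinct placements of the $z$'s in the tensor product together equal $\binom{k}{j}$ times the symmetrized product (each permutation in $\sym$ is overcounted by $(k-j)!\,j!$). Taking $\E_{z \sim \mu_w}$ and using $z \stackrel{d}{=} -z$ kills all odd $j$, so I re-index $j \mapsto 2j$.

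Finally, I would invoke the moment formula in \Cref{lem:tensor_moments}, applied \emph{on the sphere in $\{w\}^\perp$}: since $\mu_w$ is the uniform distribution on the unit sphere of the $(d-1)$-dimensional subspace $\{w\}^\perp$, the identity on that subspace is $P_w^\perp$, giving
\begin{align*}
    \E_{z \sim \mu_w}\bigl[z^{\otimes 2j}\bigr] = \nu_j^{(d-1)} (P_w^\perp)^{\stimes j}.
\end{align*}
Substituting this back and using that the symmetrized tensor product $\stimes$ is associative and bilinear yields exactly
\begin{align*}
    T_k(w) = (1+\lambda^2)^{-k/2}\sum_{j=0}^{\lfloor k/2\rfloor}\binom{k}{2j}\,\lambda^{2j}\,\nu_j^{(d-1)}\, w^{\otimes(k-2j)} \stimes (P_w^\perp)^{\stimes j},
\end{align*}
as claimed. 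The only mildly delicate step is the combinatorial bookkeeping in the multinomial expansion and verifying that the spherical-moment lemma, stated for the full sphere $S^{d-1}$, transfers to $\mu_w = \mathrm{Unif}(S^{d-2}_{\{w\}^\perp})$ with the identity replaced by $P_w^\perp$; everything else is a direct symbolic computation.
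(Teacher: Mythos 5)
Your proposal is correct and follows essentially the same route as the paper's proof: both reduce $\mathcal{L}_\lambda(He_k(w\cdot x))$ to $\langle He_k(x), \mathcal{L}_\lambda(w^{\otimes k})\rangle$ via the univariate-to-tensor Hermite identity, pull out the deterministic normalization $(1+\lambda^2)^{-k/2}$, expand $(w+\lambda z)^{\otimes k}$ binomially into symmetrized tensor products, and apply \Cref{lem:tensor_moments} with $\mu_w$ as the uniform distribution on the unit sphere of $\{w\}^\perp$ to obtain $\E_{z\sim\mu_w}[z^{\otimes 2j}] = \nu_j^{(d-1)}(P_w^\perp)^{\stimes j}$. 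The only differences are stylistic: you spell out the combinatorial bookkeeping behind the $\binom{k}{j}$ factor and explicitly invoke $z \stackrel{d}{=} -z$ to kill odd terms, both of which the paper leaves implicit.
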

\begin{proof}
    Using \Cref{lem:hermite_tensor_prop}, we can write
    \begin{align*}
        \mathcal{L}_\lambda\qty(He_k(w \cdot x)) = \ev{He_k(x),\mathcal{L}_\lambda\qty(w^{\otimes k})}.
    \end{align*}
    Now
    \begin{align*}
        T_k(w)
        &= \mathcal{L}_\lambda(w^{\otimes k}) \\
        &= \E_{z \sim \mu_w}\qty[\qty(\frac{w + \lambda z}{\sqrt{1+\lambda^2}})^{\otimes k}] \\
        &= (1+\lambda^2)^{-\frac{k}{2}} \sum_{j=0}^k \binom{k}{j} w^{\otimes (k-j)} \stimes \E_{z \sim \mu_w}[z^{\otimes j}] \lambda^j.
    \end{align*}
    Now by \Cref{lem:tensor_moments}, this is equal to
    \begin{align*}
        T_k(w) = (1+\lambda^2)^{-\frac{k}{2}} \sum_{j=0}^{\floor{\frac{k}{2}}} \binom{k}{2j} w^{\otimes (k-2j)} \stimes (P_w^\perp)^{\stimes j} \lambda^{2j} \nu_j^{(d-1)}
    \end{align*}
    which completes the proof.
\end{proof}

\begin{lemma}\label{lem:smoothed_dhermite_variance}
    For any $u \in S^{d-1}$ with $u \perp w$,
    \begin{align*}
        &\E_{x \sim N(0,I_d)} \qty[\qty(u \cdot \nabla_w \mathcal{L}_\lambda(He_k(w \cdot x)))^2] \\
        &\qquad\lesssim k!\qty(\frac{k^2}{1+\lambda^2} \mathcal{L}_\lambda\qty(\alpha^{k-1}) + \frac{\lambda^4 k^4}{(1+\lambda^2)^2 d^2} \mathcal{L}_\lambda^{(d+2)}\qty(\alpha^{k-2}))\eval_{\alpha = \frac{1}{\sqrt{1+\lambda^2}}}.
    \end{align*}
\end{lemma}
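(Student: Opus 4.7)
The plan is to use \Cref{lem:smoothed_hermite_tensor} to rewrite $\mathcal{L}_\lambda(He_k(w\cdot x)) = \ev{He_k(x),T_k(w)}$, so that $u \cdot \nabla_w \mathcal{L}_\lambda(He_k(w \cdot x)) = \ev{He_k(x), D_u T_k(w)}$, where $D_u$ denotes the directional derivative along $u \perp w$. Tensorial Hermite orthogonality (\Cref{lem:hermite_tensor_prop}) combined with \Cref{lem:symmetrized_frobenius_norm} then reduces the task to bounding $\norm{D_u T_k(w)}_F^2$, since $\E_x[\ev{He_k(x),A}^2] = k!\norm{\sym(A)}_F^2 \le k!\norm{A}_F^2$ for any $k$-tensor $A$.

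The product rule applied to the explicit formula for $T_k(w)$ produces two families of terms: \emph{Type 1} from differentiating the $w^{\otimes(k-2j)}$ factors, and \emph{Type 2} from differentiating the $(P_w^\perp)^{\stimes j}$ factors via $\partial_u P_w^\perp = -2(u\stimes w)$. Using the combinatorial identity $(k-2j)\binom{k}{2j} = k\binom{k-1}{2j}$, the Type 1 sum telescopes to exactly $\frac{k}{\sqrt{1+\lambda^2}}\, u \stimes T_{k-1}(w)$, so submultiplicativity of $\norm{\cdot}_F$ under $\stimes$ yields $\norm{\text{Type 1}}_F^2 \le \frac{k^2}{1+\lambda^2}\norm{T_{k-1}(w)}_F^2$. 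A direct double-smoothing calculation simplifies $\norm{T_{k-1}(w)}_F^2 = \E_{z,z'\sim\mu_w}[((1+\lambda^2 z\cdot z')/(1+\lambda^2))^{k-1}]$, and the rotational-invariance fact $z \cdot z' \overset{d}{=} z_1$ for $z \sim S^{d-2}$ identifies $\norm{T_{k-1}(w)}_F^2 = \mathcal{L}_\lambda(\alpha^{k-1})\eval_{\alpha=1/\sqrt{1+\lambda^2}}$ via \Cref{lem:smoothing_single_index}, matching the first term of the bound.

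The Type 2 analysis is the main obstacle. Reindexing writes $\text{Type 2} = -(1+\lambda^2)^{-k/2}(u\stimes w) \stimes \sum_{m\ge 0} c_m X_m$ where $X_m := w^{\otimes(k-2m-2)}\stimes(P_w^\perp)^{\stimes m}$ and $c_m = 2(m+1)\binom{k}{2m+2}\lambda^{2(m+1)}\nu_{m+1}^{(d-1)}$. The key structural observation is that $\set{X_m}_{m\ge 0}$ are mutually orthogonal in Frobenius inner product: since $w \cdot P_w^\perp = 0$, any pairing of indices between $X_m$ and $X_{m'}$ with $m \ne m'$ must contract a $w$-slot on one side with a $P_w^\perp$-slot on the other and vanishes. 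Matching coefficients in the already-computed $\norm{T_{k-2}(w)}_F^2$ then extracts $\norm{X_m}_F^2 = 1/(\binom{k-2}{2m}\nu_m^{(d-1)})$. The remaining challenge is converting the $\nu_{m+1}^{(d-1)}$ factors (coming from differentiating $\mu_w$) into the $\nu_m^{(d+1)}$ factors that parametrize $\mathcal{L}_\lambda^{(d+2)}$; this is done via the arithmetic identity $\nu_{m+1}^{(d-1)} = \frac{2m+1}{(d-1)(d-1+2m)}\nu_m^{(d+1)}$ together with $2(m+1)(2m+1)\binom{k}{2m+2} = k(k-1)\binom{k-2}{2m}$. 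After substituting and bounding $\nu_m^{(d+1)}/\nu_m^{(d-1)} \le 1$ and $(d-1+2m) \ge d-1$, the orthogonal sum collapses via $\sum_m \binom{k-2}{2m}\lambda^{4m}\nu_m^{(d+1)} = (1+\lambda^2)^{k-2}\mathcal{L}_\lambda^{(d+2)}(\alpha^{k-2})\eval_{\alpha=1/\sqrt{1+\lambda^2}}$, giving $\norm{\text{Type 2}}_F^2 \lesssim \frac{k^4\lambda^4}{(1+\lambda^2)^2(d-1)^4}\mathcal{L}_\lambda^{(d+2)}(\alpha^{k-2})\eval_{\alpha=1/\sqrt{1+\lambda^2}}$, which is dominated by the second term of the lemma since $1/(d-1)^4 \lesssim 1/d^2$. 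Combining Types 1 and 2 via $\norm{A+B}_F^2 \le 2\norm{A}_F^2 + 2\norm{B}_F^2$ completes the proof.
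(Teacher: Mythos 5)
Your proof follows the paper's own strategy (differentiate $T_k$, split into the two product-rule families, reindex, use orthogonality of the $w^{\otimes\cdot}\stimes(P_w^\perp)^{\stimes\cdot}$ basis, then recognize the resulting series as $\mathcal{L}_\lambda(\alpha^{k-1})$ and $\mathcal{L}_\lambda^{(d+2)}(\alpha^{k-2})$ at $\alpha = 1/\sqrt{1+\lambda^2}$). Your Type 1 treatment is actually slicker than the paper's: observing that the $w$-differentiated sum telescopes to $\tfrac{k}{\sqrt{1+\lambda^2}}\,u\stimes T_{k-1}(w)$ and then evaluating $\norm{T_{k-1}(w)}_F^2$ via the double-smoothing identity $\norm{T_{k-1}(w)}_F^2 = \E_{z,z'}[((1+\lambda^2 z\cdot z')/(1+\lambda^2))^{k-1}] = \mathcal{L}_\lambda(\alpha^{k-1})\big|_{\alpha=1/\sqrt{1+\lambda^2}}$ is cleaner than the paper's term-by-term Frobenius computation. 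Your Type 2 treatment is essentially the paper's.

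There is, however, one arithmetic error in Type 2. The identity you invoke,
\begin{align*}
    \nu_{m+1}^{(d-1)} = \frac{2m+1}{(d-1)(d-1+2m)}\,\nu_m^{(d+1)},
\end{align*}
is wrong: writing out $\nu_{m+1}^{(d-1)} = (2m+1)!!\prod_{i=0}^{m}\tfrac{1}{d-1+2i}$ and $\nu_m^{(d+1)} = (2m-1)!!\prod_{i=0}^{m-1}\tfrac{1}{d+1+2i}$ and noting that the two products differ only by the single factor $\tfrac{1}{d-1}$ gives
\begin{align*}
    \nu_{m+1}^{(d-1)} = \frac{2m+1}{d-1}\,\nu_m^{(d+1)},
\end{align*}
with no extra $(d-1+2m)$ in the denominator. (Sanity check at $m=0$: $\nu_1^{(d-1)} = 1/(d-1)$ and $\nu_0^{(d+1)} = 1$.) Redoing your computation with the correct identity, the coefficient $c_m$ becomes $\tfrac{k(k-1)\lambda^2}{d-1}\binom{k-2}{2m}\lambda^{2m}\nu_m^{(d+1)}$ and the Type 2 bound comes out as $\tfrac{k^2(k-1)^2\lambda^4}{(d-1)^2(1+\lambda^2)^2}\,\mathcal{L}_\lambda^{(d+2)}(\alpha^{k-2})\big|_{\alpha=1/\sqrt{1+\lambda^2}}$, i.e., the $(d-1)^{-4}$ you report should be $(d-1)^{-2}\asymp d^{-2}$. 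Your claimed $(d-1)^{-4}$ bound is therefore not actually established by the argument and is, in fact, false in general; fortunately the corrected $(d-1)^{-2}$ bound is exactly the $d^{-2}$ that the lemma asserts, so the conclusion stands once the arithmetic is repaired.
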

\begin{proof}
    Recall that by \Cref{lem:smoothed_hermite_tensor} we have
    \begin{align*}
    	\mathcal{L}_\lambda(He_k(w \cdot x)) = \ev{He_k(x),T_k(w)}
    \end{align*}
    where
    \begin{align*}
    	T_k(w) := (1+\lambda^2)^{-\frac{k}{2}} \sum_{j=0}^{\floor{\frac{k}{2}}} \binom{k}{2j} w^{\otimes (k-2j)} \stimes (P_w^\perp)^{\stimes j} \lambda^{2j} \nu_j^{(d-1)}.
    \end{align*}
	Differentiating this with respect to $w$ gives
	\begin{align*}
		u \cdot \nabla_w \mathcal{L}_\lambda(He_k(w \cdot x)) = \ev{He_k(x),\nabla_w T_k(w)[u]}.
	\end{align*}
	Now note that by \Cref{lem:hermite_tensor_prop}:
    \begin{align*}
    	\E_{x \sim N(0,I_d)} \qty[\qty(u \cdot \nabla_w \mathcal{L}_\lambda(He_k(w \cdot x)))^2]
    	&= \E_{x \sim N(0,I_d)} \qty[\ev{He_k(x),\nabla_w T_k(w)[u]}^2] \\
    	&= k! \norm{\nabla_w T_k(w)[u]}_F^2.
    \end{align*}
    Therefore it suffices to compute the Frobenius norm of $\nabla_w T_k(w)[u]$. We first explicitly differentiate $T_k(w)$:
    \begin{align*}
    	\nabla_w T_k(w)[u]
    	&= (1+\lambda^2)^{-\frac{k}{2}} \sum_{j=0}^{\floor{\frac{k}{2}}} \binom{k}{2j} (k-2j) u \stimes w^{\otimes k-2j-1} \stimes (P_w^\perp)^{\stimes j} \lambda^{2j} \nu_j^{(d-1)} \\
    	&\qquad - (1+\lambda^2)^{-\frac{k}{2}} \sum_{j=0}^{\floor{\frac{k}{2}}} \binom{k}{2j} (2j) u \stimes w^{\otimes k-2j+1} \stimes (P_w^\perp)^{\stimes (j-1)} \lambda^{2j} \nu_j^{(d-1)} \\
    	&= \frac{k}{(1+\lambda^2)^{\frac{k}{2}}} \sum_{j=0}^{\floor{\frac{k-1}{2}}} \binom{k-1}{2j} u \stimes w^{\otimes k-1-2j} \stimes (P_w^\perp)^{\stimes j} \lambda^{2j} \nu_j^{(d-1)} \\
    	&\qquad - \frac{\lambda^2 k(k-1)}{(d-1)(1+\lambda^2)^{\frac{k}{2}}} \sum_{j=0}^{\floor{\frac{k-2}{2}}} \binom{k-2}{2j} u \stimes w^{\otimes k-1-2j} \stimes (P_w^\perp)^{\stimes j} \lambda^{2j} \nu_j^{(d+1)}.
    \end{align*}
    Taking Frobenius norms gives
    \begin{align*}
    	&\norm{\nabla_w T_k(w)[u]}_F^2 \\
    	&\lesssim \frac{k^2}{(1+\lambda^2)^{k}} \norm{\sum_{j=0}^{\floor{\frac{k-1}{2}}} \binom{k-1}{2j} u \stimes w^{\otimes k-1-2j} \stimes (P_w^\perp)^{\stimes j} \lambda^{2j} \nu_j^{(d-1)}}_F^2 \\
    	&\qquad + \frac{\lambda^4 k^4}{(d-1)^2(1+\lambda^2)^{k}}\norm{\sum_{j=0}^{\floor{\frac{k-2}{2}}} \binom{k-2}{2j} u \stimes w^{\otimes k-1-2j} \stimes (P_w^\perp)^{\stimes j} \lambda^{2j} \nu_j^{(d+1)}}_F^2.
    \end{align*}
    Now we can use \Cref{lem:symmetrized_frobenius_norm} to pull out $u$ and get:
    \begin{align*}
    	&\norm{\nabla_w T_k(w)[u]}_F^2 \\
    	&\lesssim \frac{k^2}{(1+\lambda^2)^{k}} \norm{\sum_{j=0}^{\floor{\frac{k-1}{2}}} \binom{k-1}{2j} w^{\otimes k-1-2j} \stimes (P_w^\perp)^{\stimes j} \lambda^{2j} \nu_j^{(d-1)}}_F^2 \\
    	&\qquad + \frac{\lambda^4 k^4}{d^2(1+\lambda^2)^{k}}\norm{\sum_{j=0}^{\floor{\frac{k-2}{2}}} \binom{k-2}{2j} w^{\otimes k-1-2j} \stimes (P_w^\perp)^{\stimes j} \lambda^{2j} \nu_j^{(d+1)}}_F^2.
    \end{align*}
    Now note that the terms in each sum are orthogonal as at least one $w$ will need to be contracted with a $P_w^\perp$. Therefore this is equivalent to:
    \begin{align*}
    	&\norm{\nabla_w T_k(w)[u]}_F^2 \\
    	&\lesssim \frac{k^2}{(1+\lambda^2)^{k}} \sum_{j=0}^{\floor{\frac{k-1}{2}}} \binom{k-1}{2j}^2 \lambda^{4j} (\nu_j^{(d-1)})^2 \norm{w^{\otimes k-1-2j} \stimes (P_w^\perp)^{\stimes j}}_F^2 \\
    	&\qquad + \frac{\lambda^4 k^4}{d^2(1+\lambda^2)^{k}}\sum_{j=0}^{\floor{\frac{k-2}{2}}} \binom{k-2}{2j}^2 \lambda^{4j} (\nu_j^{(d+1)})^2 \norm{w^{\otimes k-1-2j} \stimes (P_w^\perp)^{\stimes j}}_F^2.
    \end{align*}
    Next, note that for any $k$ tensor $A$, $\norm{\sym(A)}_F^2 = \frac{1}{k!} \sum_{\pi \in S_{k}} \ev{A,\pi(A)}$. When $A = w^{\otimes k-2j} \stimes (P_w^\perp)^{\stimes j}$, the only permutations that don't give $0$ are the ones which pair up all of the $w$s of which there are $(2j)!(k-2j)!$. Therefore, by \Cref{lem:I_power_frobenius},
     \begin{align*}
         \norm{w^{\otimes k-2j} \stimes (P_w^\perp)^{\stimes j}}_F^2 = \frac{1}{\binom{k}{2j}} \norm{(P_w^\perp)^{\stimes j}}_F^2 = \frac{1}{\nu_j^{(d-1)}\binom{k}{2j}}.
     \end{align*}
     Plugging this in gives:
     \begin{align*}
     &\norm{\nabla_w T_k(w)[u]}_F^2 \\
    	&\lesssim \frac{k^2}{(1+\lambda^2)^{k}} \sum_{j=0}^{\floor{\frac{k-1}{2}}} \binom{k-1}{2j} \lambda^{4j} \nu_j^{(d-1)} \\
    	&\qquad + \frac{\lambda^4 k^4}{d^2(1+\lambda^2)^{k}}\sum_{j=0}^{\floor{\frac{k-2}{2}}} \binom{k-2}{2j} \lambda^{4j} \nu_j^{(d+1)}.
    \end{align*}
    Now note that
    \begin{align*}
    	\mathcal{L}_\lambda(\alpha^k)\eval_{\alpha = \frac{1}{\sqrt{1+\lambda^2}}} = \frac{1}{(1+\lambda^2)^k} \sum_{k = 0}^{\floor{\frac{k}{2}}} \binom{k}{2j} \lambda^{4j} \nu_j^{(d-1)}
    \end{align*}
    which completes the proof.
\end{proof}

\begin{corollary}\label{lem:smoothed_sigma_variance}
    For any $u \in S^{d-1}$ with $u \perp w$,
    \begin{align*}
        \E_{x \sim N(0,I_d)} \qty[\qty(u \cdot \nabla_w \mathcal{L}_\lambda\qty(\sigma(w \cdot x)))^2] \lesssim \frac{\min\qty(1+\lambda^2,\sqrt{d})^{-(\k-1)}}{1+\lambda^2}.
    \end{align*}
\end{corollary}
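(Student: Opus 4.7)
The plan is to reduce to Lemma~\ref{lem:smoothed_dhermite_variance} via the Hermite expansion of $\sigma$, and then control the resulting tail sum by mirroring the generating-function trick used in the proof of Lemma~\ref{lem:smoothed_population_gradient}. Starting from $\sigma(x) = \sum_{k\ge\k} \frac{c_k}{k!} He_k(x)$ (the sum begins at $\k$ by definition of the information exponent) and using linearity of $\mathcal{L}_\lambda$ and $\nabla_w$,
\begin{align*}
    u \cdot \nabla_w \mathcal{L}_\lambda(\sigma(w \cdot x)) = \sum_{k\ge\k} \frac{c_k}{k!} \ev{He_k(x),\, \nabla_w T_k(w)[u]},
\end{align*}
with $T_k(w)$ as in Lemma~\ref{lem:smoothed_hermite_tensor}. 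Squaring and taking the expectation in $x$, the orthogonality of Hermite tensors (Lemma~\ref{lem:hermite_tensor_prop}) kills all cross terms, leaving
\begin{align*}
    \E_x\qty[\qty(u \cdot \nabla_w \mathcal{L}_\lambda(\sigma(w \cdot x)))^2] = \sum_{k\ge\k} \frac{c_k^2}{(k!)^2} \, \E_x\qty[\qty(u \cdot \nabla_w \mathcal{L}_\lambda(He_k(w\cdot x)))^2].
\end{align*}

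I would then apply Lemma~\ref{lem:smoothed_dhermite_variance} termwise and use Parseval's identity $\sum_k c_k^2/k! = 1$ (so that $c_k^2/k! \le 1$) to absorb the $k!$ prefactor, giving
\begin{align*}
    \E_x[\cdots] \lesssim \sum_{k\ge\k} \qty[\frac{k^2}{1+\lambda^2}\mathcal{L}_\lambda(\alpha^{k-1}) + \frac{\lambda^4 k^4}{(1+\lambda^2)^2 d^2}\mathcal{L}_\lambda^{(d+2)}(\alpha^{k-2})]\bigg|_{\alpha = 1/\sqrt{1+\lambda^2}}.
\end{align*}
Following the tail argument from Lemma~\ref{lem:smoothed_population_gradient}, I push the sum inside $\mathcal{L}_\lambda$ (legal since every summand is non-negative for $\alpha \in [0,1]$), apply the generating-function bounds $\sum_{k \ge \k} k^p \alpha^{k-i} \lesssim \alpha^{\k-i}/(1-\alpha)^{p+1}$, and invoke Lemma~\ref{lem:sk_heavy_tail} to control $\mathcal{L}_\lambda(\alpha^{\k-i}/(1-\alpha)^{p+1})$ by $s_{\k-i}(\alpha;\lambda)$ up to constants.

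It then remains to evaluate at $\alpha_0 := 1/\sqrt{1+\lambda^2}$, which splits into two regimes depending on whether $\alpha_0^2 = (1+\lambda^2)^{-1}$ exceeds the threshold $\lambda^2/d$ in the definition of $s_k$. When $1+\lambda^2 \le \sqrt{d}$ we land in the large-$\alpha$ branch, so $s_{\k-1}(\alpha_0;\lambda) \asymp (1+\lambda^2)^{-(\k-1)}$ and the first summand contributes $(1+\lambda^2)^{-\k}$. When $1+\lambda^2 > \sqrt{d}$ we land in the small-$\alpha$ branch of $s_{\k-1}$, which a short calculation shows yields $d^{-(\k-1)/2}/(1+\lambda^2)$. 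Together these match $\min(1+\lambda^2,\sqrt d)^{-(\k-1)}/(1+\lambda^2)$. The second summand is everywhere smaller than the first by a factor of at most $\lambda^4/d^2 \le 1/C^2$ via Assumption~\ref{assumption:lambdaC}, and therefore does not affect the leading order.

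The main obstacle is the polynomial-in-$k$ growth ($k^2$ and $k^4$) arising from differentiating $T_k(w)$: the bare Parseval bound $c_k^2/k! \le 1$ is too crude to sum polynomially-weighted Hermite contributions directly. This is precisely the obstruction that was already overcome in the proof of Lemma~\ref{lem:smoothed_population_gradient}: by commuting the positive series through $\mathcal{L}_\lambda$ and using Lemma~\ref{lem:sk_heavy_tail}, the singular generating function $\alpha^{\k-i}/(1-\alpha)^{p+1}$ remains tightly controlled by $s_{\k-i}(\alpha;\lambda)$, so no new technical machinery is required beyond what is already in the excerpt.
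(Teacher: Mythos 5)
Your proposal matches the paper's proof step for step: expand via the Hermite decomposition, use orthogonality of the $\ev{He_k(x),\nabla_w T_k(w)[u]}$ terms to reduce to a termwise application of Lemma~\ref{lem:smoothed_dhermite_variance}, invoke $c_k^2/k!\le 1$, sum the positive series through $\mathcal{L}_\lambda$ with the generating-function bound, control the resulting singular expressions by Lemma~\ref{lem:sk_heavy_tail}, and evaluate $s_{\k-1}, s_{\k-2}$ at $\alpha = (1+\lambda^2)^{-1/2}$, splitting on $1+\lambda^2 \lessgtr \sqrt d$. The route and every key lemma invoked are the same as in the paper, and your domination estimate for the second summand is correct once the ratio $s_{\k-2}/s_{\k-1}$ is taken into account.
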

\begin{proof}
	Note that
	\begin{align*}
		\mathcal{L}_\lambda \qty(\sigma(w \cdot x)) = \sum_k \frac{c_k}{k!} \ev{He_k(x),T_k(w)}
	\end{align*}
	so
	\begin{align*}
		u \cdot \nabla_w \mathcal{L}_\lambda \qty(\sigma(w \cdot x)) = \sum_k \frac{c_k}{k!} \ev{He_k(x),\nabla_w T_k(w)[u]}.
	\end{align*}
	By \Cref{lem:hermite_prop}, these terms are orthogonal so by \Cref{lem:smoothed_dhermite_variance},
	\begin{align*}
		&\E_{x \sim N(0,I_d)}\qty[\qty(u \cdot \nabla_w \mathcal{L}_\lambda \qty(\sigma(w \cdot x)))^2] \\
		&\lesssim \sum_k \frac{c_k^2}{k!} \qty(\frac{k^2}{1+\lambda^2} \mathcal{L}_\lambda\qty(\alpha^{k-1}) + \frac{\lambda^4 k^4}{(1+\lambda^2)^2 d^2} \mathcal{L}_\lambda^{(d+2)}\qty(\alpha^{k-2}))\eval_{\alpha = \frac{1}{\sqrt{1+\lambda^2}}} \\
		&\lesssim \frac{1}{1+\lambda^2} \mathcal{L}_\lambda\qty(\frac{\alpha^{\k-1}}{(1-\alpha)^3}) + \frac{\lambda^4}{(1+\lambda^2)^2 d^2} \mathcal{L}_\lambda^{(d+2)}\qty(\frac{\alpha^{\k-2}}{(1-\alpha)^5}) \\
		&\lesssim \frac{1}{1+\lambda^2} s_{\k-1}\qty(\frac{1}{\sqrt{1+\lambda^2}};\lambda) + \frac{\lambda^4}{(1+\lambda^2)^2 d} s_{\k-2}\qty(\frac{1}{\sqrt{1+\lambda^2}};\lambda) \\
		&\lesssim \frac{1}{1+\lambda^2} s_{\k-1}\qty(\lambda^{-1};\lambda) + \frac{\lambda^4}{(1+\lambda^2)^2 d} s_{\k-2}\qty(\lambda^{-1};\lambda).
	\end{align*}
	Now plugging in the formula for $s_k(\alpha;\lambda)$ gives:
	\begin{align*}
		\E_{x \sim N(0,I_d)}\qty[\qty(u \cdot \nabla_w \mathcal{L}_\lambda \qty(\sigma(w \cdot x)))^2] 
		&\lesssim (1+\lambda^2)^{-\frac{\k+1}{2}}
		\begin{cases}
			(1+\lambda^2)^{-\frac{\k-1}{2}} & 1+\lambda^2 \le \sqrt{d} \\
			\qty(\frac{1+\lambda^2}{d})^{\frac{\k-1}{2}} & 1+\lambda^2 \ge \sqrt{d}
		\end{cases} \\
		&\lesssim \frac{\min\qty(1+\lambda^2,\sqrt{d})^{-(\k-1)}}{1+\lambda^2}.
	\end{align*}
\end{proof}

The following lemma shows that $\nabla_w \mathcal{L}_\lambda\qty(\sigma(w \cdot x))$ inherits polynomial tails from $\sigma'$:
\begin{lemma}\label{lem:smoothed_sigma_p_norm}
    There exists an absolute constant $C$ such that for any $u \in S^{d-1}$ with $u \perp w$ and any $p \in [0,d/C]$,
    \begin{align*}
        \E_{x \sim N(0,I_d)} \qty[\qty(u \cdot \nabla_w \mathcal{L}_\lambda\qty(\sigma(w \cdot x)))^p]^{1/p} \lesssim \frac{p^C}{\sqrt{1+\lambda^2}}.
    \end{align*}
\end{lemma}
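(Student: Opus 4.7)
The strategy is to derive a pointwise integral representation of $u \cdot \nabla_w \mathcal{L}_\lambda(\sigma(w\cdot x))$, pull the $L^p$ norm inside via Jensen's inequality, and bound the result by Hölder using standard Gaussian and spherical moment estimates.

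First, I would write $\sigma(w\cdot x) = h_{\norm{x}}(w\cdot\hat x)$ for $\hat x := x/\norm{x}$ and $h_r(\alpha) := \sigma(r\alpha)$, and apply \Cref{lem:smoothing_single_index} to obtain $\mathcal{L}_\lambda(\sigma(w\cdot x)) = G(w\cdot\hat x, \norm{x})$ where
\begin{align*}
    G(\alpha, r) := \E_{z_1}\qty[\sigma\qty(\frac{r(\alpha + \lambda z_1 \sqrt{1-\alpha^2})}{\sqrt{1+\lambda^2}})]
\end{align*}
and $z_1$ is the first coordinate of a $\mathrm{Unif}(S^{d-2})$ vector. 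Since $u \perp w$, the chain rule for the spherical gradient gives $u\cdot \nabla_w \mathcal{L}_\lambda(\sigma(w\cdot x)) = (u\cdot\hat x)\,\partial_\alpha G(w\cdot\hat x, \norm{x})$, and differentiating under the integral yields
\begin{align*}
    \partial_\alpha G(\alpha, r) = \frac{r}{\sqrt{1+\lambda^2}}\,\E_{z_1}\qty[\sigma'(\mathrm{arg})\qty(1 - \frac{\lambda z_1 \alpha}{\sqrt{1-\alpha^2}})].
\end{align*}
The key structural observation is that, evaluated at $\alpha = w\cdot\hat x$ and $r = \norm{x}$, the argument of $\sigma'$ equals $\tilde w\cdot x$ for the unit vector $\tilde w = (w + \lambda z)/\sqrt{1+\lambda^2}$ (where $z \sim \mu_w$ is the vector of which $z_1$ is the relevant coordinate); hence $\mathrm{arg}\mid z \sim N(0,1)$, so the polynomial tail \Cref{assumption:poly_tail} gives $\norm{\sigma'(\mathrm{arg})}_q \lesssim q^{C_2}$ uniformly in $(w,z)$.

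Next, I would decompose $\abs{\partial_\alpha G} \le A + B$, where $A$ retains the ``$1$'' factor in the bracket and $B$ retains the $\lambda z_1\alpha/\sqrt{1-\alpha^2}$ factor, and use Cauchy-Schwarz to bound $\norm{u\cdot\nabla_w \mathcal{L}_\lambda(\sigma(w\cdot x))}_p \le \norm{u\cdot\hat x}_{2p}(\norm{A}_{2p} + \norm{B}_{2p})$. Inside each $\norm{A}_{2p}, \norm{B}_{2p}$, Jensen pulls the $z_1$-expectation outside the $2p$-th power, and Hölder with equal exponents splits the product over the joint $(x, z_1)$ measure into one-dimensional factors. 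The needed one-dimensional bounds, valid for $q \le d/C$, are
\begin{align*}
    \norm{u\cdot\hat x}_q,\ \norm{z_1}_q \lesssim \sqrt{q/d},\qquad \E[\norm{x}^q]^{1/q} \lesssim \sqrt{d+q},\qquad \norm{\sigma'(\mathrm{arg})}_q \lesssim q^{C_2}.
\end{align*}
Combining these with the $(1+\lambda^2)^{-1/2}$ prefactor and using $\lambda \le \sqrt{d/C}$ to absorb the $\lambda$ arising in $B$, both the $A$ and $B$ contributions are bounded by $p^{C_2+1}/\sqrt{1+\lambda^2}$, which yields the claim with the overall constant $C$ depending only on $C_2$.

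The only subtle estimate is the control of the singular factor $\norm{(w\cdot\hat x)/\sqrt{1-(w\cdot\hat x)^2}}_q \lesssim \sqrt{q/d}$ appearing in the Hölder decomposition of $\norm{B}_{2p}$; this is also what forces the restriction $p \le d/C$. Since $\hat x \sim \mathrm{Unif}(S^{d-1})$, $\alpha^2 := (w\cdot\hat x)^2$ is $\mathrm{Beta}(\tfrac{1}{2},\tfrac{d-1}{2})$-distributed, so $\E[\alpha^{2q}(1-\alpha^2)^{-q}]$ equals a ratio of Beta functions that is finite precisely when $q < (d-1)/2$; a Stirling estimate gives the bound $(2q/(ed))^q$, and hence $\norm{\alpha/\sqrt{1-\alpha^2}}_{2q} \lesssim \sqrt{q/d}$. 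The remaining moments are routine chi-squared and spherical calculations.
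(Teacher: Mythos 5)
Your proposal is correct and follows essentially the same route as the paper. The paper also begins from the same integral representation
\begin{align*}
u \cdot \nabla_w \mathcal{L}_\lambda(\sigma(w\cdot x)) = (u\cdot x)\,\E_{z_1\sim S^{d-2}}\!\left[\sigma'\!\left(\tfrac{w\cdot x + \lambda z_1\|P_w^\perp x\|}{\sqrt{1+\lambda^2}}\right)\!\left(\tfrac{1}{\sqrt{1+\lambda^2}} - \tfrac{\lambda z_1 (w\cdot x)}{\|P_w^\perp x\|\sqrt{1+\lambda^2}}\right)\right],
\end{align*}
which is exactly your $(u\cdot\hat x)\,\partial_\alpha G(w\cdot\hat x,\|x\|)$ after unwinding $\alpha/\sqrt{1-\alpha^2} = (w\cdot x)/\|P_w^\perp x\|$ and $(u\cdot\hat x)\,r = u\cdot x$. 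The paper then splits into the same two terms you call $A$ and $B$, applies Cauchy--Schwarz and Jensen in the same way, and uses the same key observation that the argument of $\sigma'$ is conditionally $N(0,1)$ so that \Cref{assumption:poly_tail} controls $\|\sigma'(\mathrm{arg})\|_q$ uniformly. For the $B$ term the paper simply asserts $\E_{x,z_1}\!\big[(z_1(w\cdot x)/\|P_w^\perp x\|)^{2p}\big]^{1/(2p)} \lesssim 1/d$ and concludes that the extra $\lambda/d$ factor is negligible given $\lambda^2 \le d/C$; your Beta-function/Stirling computation for $\|\alpha/\sqrt{1-\alpha^2}\|_{2q}$ is a more explicit justification of that estimate and, usefully, makes clear exactly where the restriction $p \le d/C$ comes from. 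So: same decomposition, same moment bounds, same use of \Cref{lem:smoothing_single_index}; your write-up is just a more detailed account of the singular factor.
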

\begin{proof}
    Following the proof of \Cref{lem:commute_smoothing} we have
    \begin{align*}
        &u \cdot \nabla_w \mathcal{L}_\lambda\qty(\sigma(w \cdot x)) \\
        &= (u \cdot x) \E_{z_1 \sim S^{d-2}}\qty[ \sigma'\qty(\frac{w \cdot x + \lambda z_1 \norm{P_w^\perp x}}{\sqrt{1+\lambda^2}}) \qty(\frac{1}{\sqrt{1+\lambda^2}} - \frac{\lambda z_1 (w \cdot x)}{\norm{P_w^\perp x}\sqrt{1+\lambda^2}})].
    \end{align*}
    First, we consider the first term. Its $p$ norm is bounded by
    \begin{align*}
    	\frac{1}{\sqrt{1+\lambda^2}} \E_{x}\qty[(u \cdot x)^{2p}] \E_{x} \qty(\E_{z_1 \sim S^{d-2}}\qty[\sigma'\qty(\frac{w \cdot x + \lambda z_1 \norm{P_w^\perp x}}{\sqrt{1 + \lambda^2}})]^{2p}).
    \end{align*}
    By Jensen we can pull out the expectation over $z_1$ and use \Cref{assumption:poly_tail} to get
    \begin{align*}
    	&\frac{1}{\sqrt{1+\lambda^2}} \E_x[(u \cdot x)^{2p}] \E_{x \sim N(0,I_d),z_1 \sim S^{d-2}}\qty[\sigma'\qty(\frac{w \cdot x + \lambda z_1 \norm{P_w^\perp x}}{\sqrt{1 + \lambda^2}})^{2p}] \\
    	&= \frac{1}{\sqrt{1+\lambda^2}} \E_{x \sim N(0,1)}[x^{2p}] \E_{x \sim N(0,1)}\qty[\sigma'(x)^p] \\
    	&\lesssim \frac{\poly(p)}{\sqrt{1+\lambda^2}}.
    \end{align*}
    Similarly, the $p$ norm of the second term is bounded by
    \begin{align*}
    	\frac{\lambda}{\sqrt{1+\lambda^2}} \cdot \poly(p) \cdot \E_{x,z_1}\qty[ \qty(\frac{z_1 (x \cdot w)}{\norm{P_w^\perp}})^{2p}]^{\frac{1}{2p}} \lesssim \frac{\lambda}{d \sqrt{1+\lambda^2}} \poly(p) \ll \frac{\poly(p)}{\sqrt{1+\lambda^2}}.
    \end{align*}
\end{proof}

Finally, we can use \Cref{lem:smoothed_sigma_variance} and \Cref{lem:smoothed_sigma_p_norm} to bound the $p$ norms of the gradient:

\begin{lemma}\label{lem:smoothed_empirical_gradient}
	Let $(x,y)$ be a fresh sample and let $v = -\nabla L_\lambda(w;x;y)$. Then there exists a constant $C$ such that for any $u \in S^{d-1}$ with $u \perp w$, any $\lambda \le d^{1/4}$ and all $2 \le p \le d/C$,
	\begin{align*}
		\E_{\mathcal{B}}\qty[(u \cdot v)^p]^{1/p} \lesssim \poly(p)\cdot \tilde O\qty((1+\lambda^2)^{-\frac{1}{2}-\frac{\k-1}{p}}).
	\end{align*}
\end{lemma}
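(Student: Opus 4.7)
The plan is to reduce to a moment bound on $X := u \cdot \nabla_w \mathcal{L}_\lambda(\sigma(w \cdot x))$ and then interpolate between the variance bound of \Cref{lem:smoothed_sigma_variance} and the high-moment bound of \Cref{lem:smoothed_sigma_p_norm}. Since $y$ does not depend on $w$, we have $v = y\, \nabla_w \mathcal{L}_\lambda(\sigma(w \cdot x))$, so $u \cdot v = y X$. Then for any $q > p$ with $\frac{1}{p} = \frac{1}{q} + \frac{1}{r}$, H\"older's inequality gives $\|yX\|_p \le \|y\|_q \cdot \|X\|_r$. Because $y = \sigma(w^\star \cdot x) + z$, the polynomial growth of $\sigma$ (obtained by integrating \Cref{assumption:poly_tail}) together with Gaussian moments of $z$ gives $\|y\|_q \lesssim q^{C_1}$ for an absolute constant $C_1$.

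For $\|X\|_r$, I would use Lyapunov's inequality: for any $q' > r$ and $\theta$ satisfying $\frac{1}{r} = \frac{\theta}{2} + \frac{1-\theta}{q'}$,
$$
\|X\|_r \le \|X\|_2^{\theta} \, \|X\|_{q'}^{1-\theta} \;\lesssim\; (q')^{C_2(1-\theta)}\, (1+\lambda^2)^{-\frac{1}{2} - \frac{(\k-1)\theta}{2}},
$$
using $\|X\|_2 \lesssim (1+\lambda^2)^{-\k/2}$, which follows from \Cref{lem:smoothed_sigma_variance} because $\lambda \le d^{1/4}$ forces $\min(1+\lambda^2,\sqrt d) = 1+\lambda^2$, and $\|X\|_{q'} \lesssim (q')^{C_2}(1+\lambda^2)^{-1/2}$ from \Cref{lem:smoothed_sigma_p_norm} (valid for $q' \le d/C_\ast$).

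The crucial step is parameter balancing. I would take $q := \min(p\log d,\, d/C_\ast)$, so that $r = pq/(q-p) \le 2p$, and then $q' := r\log d$, so that $\theta = \frac{2}{r}\bigl(1 - O(1/\log d)\bigr)$. Since $1+\lambda^2 \le \sqrt d$, the slack $(1+\lambda^2)^{O((\k-1)/(r\log d))} = \exp\bigl(O((\k-1)/r)\bigr)$ is $O_\k(1)$. With $r \in [p, 2p]$, $(1+\lambda^2)^{-(\k-1)/r}$ matches the target $(1+\lambda^2)^{-(\k-1)/p}$ up to a $\k$-dependent constant, and the prefactors combine to $\|y\|_q \cdot (q')^{C_2} \lesssim (p\log d)^{C_1+C_2} = \poly(p)\polylog(d) = \tilde O(\poly(p))$, which is exactly what is claimed.

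The main obstacle is that the naive Cauchy--Schwarz split $\|yX\|_p \le \|y\|_{2p} \|X\|_{2p}$ is too lossy: it forces $r = 2p$ and produces the strictly weaker exponent $-(\k-1)/(2p)$ rather than $-(\k-1)/p$. Avoiding this requires the asymmetric H\"older split with $q \gg p$ so that $r$ stays just barely larger than $p$; the remaining delicacy is that $q$ and $q'$ must be chosen simultaneously so the polynomial prefactors stay $\poly(p)\polylog(d)$, and the edge case $p \gtrsim d/\log d$ must be handled by capping $q$ at $d/C_\ast$ and absorbing the resulting $\poly(d) \le \poly(p)$ factor.
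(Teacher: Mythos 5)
Your proposal is correct, and while it rests on exactly the same two inputs as the paper's argument --- the variance bound (\Cref{lem:smoothed_sigma_variance}) and the polynomial-tail bound (\Cref{lem:smoothed_sigma_p_norm}) --- the decomposition and the way the interpolation is carried out differ. The paper writes $\E[(yX)^p]=\E\bigl[X^2\cdot y^p X^{p-2}\bigr]$ and feeds this directly into the packaged H\"older-interpolation inequality \Cref{lem:poly_tail_holder} (with ``$X$'' there being $X^2$, whose $L^1$ norm is the variance, and ``$Y$'' being the bundled $y^p X^{p-2}$), letting the lemma perform the $\epsilon$-optimization that produces the $\polylog(d)$ factor; taking $p$th roots then produces the exponent $-\tfrac12-\tfrac{\k-1}{p}$. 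You instead peel $y$ off first via H\"older with the asymmetric split $q=p\log d$ (so that $r$ stays within a $(1+O(1/\log d))$ factor of $p$, avoiding the factor-of-two loss from the naive $\|y\|_{2p}\|X\|_{2p}$ split) and then run a Lyapunov interpolation on $\|X\|_r$ between $\|X\|_2$ and $\|X\|_{q'}$ with $q'=r\log d$; the explicit parameter balancing plays the role that \Cref{lem:poly_tail_holder}'s $\epsilon$-optimization plays in the paper. Both routes give the same exponent and the same $\poly(p)\polylog(d)$ prefactor. One observation worth recording: your flagged edge case $p\gtrsim d/\log d$ (where $q$ and $q'$ must be capped at $d/C_\ast$, forcing $\theta\approx 2/r$ with $r$ potentially a constant factor above $p$) is harmless because in that regime $(\k-1)/p=O(\log d/d)$ and $1+\lambda^2\le\sqrt d$, so $(1+\lambda^2)^{(\k-1)/p-(\k-1)/r}=1+o(1)$; the paper's own application of \Cref{lem:smoothed_sigma_p_norm} inside the bound on $\|y^p X^{p-2}\|_{2/\epsilon}$ has an analogous restriction, so this is not a deficiency of your approach relative to theirs.
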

\begin{proof} First,
	\begin{align*}
		\E_{\mathcal{B}}[(u \cdot v)^p]^{1/p}
		&= \E_{\mathcal{B}}[(u \cdot \nabla L_\lambda(w;\mathcal{B}))^p]^{1/p} \\
		&= \E_{x,y}[y^p (u \cdot \nabla_w \mathcal{L}_\lambda\qty(\sigma(w \cdot x))^p]^{1/p}.
	\end{align*}
	Applying \Cref{lem:poly_tail_holder} with $X=(u \cdot \nabla_w \mathcal{L}_\lambda(\sigma(w \cdot x)))^2$ and $Y = y^p (u \cdot \nabla_w \mathcal{L}_\lambda(\sigma(w \cdot x)))^{p-2}$ gives:
	\begin{align*}
		\E_{\mathcal{B}}[(u \cdot v)^p]^{1/p}
		&\lesssim \poly(p) \tilde O\qty(\frac{\min\qty(1+\lambda^2,\sqrt{d})^{-\frac{\k-1}{p}}}{\sqrt{1+\lambda^2}}) \\
		&\lesssim \poly(p)\cdot \tilde O\qty((1+\lambda^2)^{-\frac{1}{2}-\frac{\k-1}{p}})
	\end{align*}
	which completes the proof.
\end{proof}

\begin{corollary}\label{lem:smoothed_empirical_norm}
	Let $v,\epsilon$ be as in \Cref{lem:smoothed_empirical_gradient}. Then for all $2 \le p \le d/C$,
	\begin{align*}
		\E_{\mathcal{B}}[\norm{v}^{2p}]^{1/p} \lesssim \poly(p) \cdot d \cdot  \tilde O\qty((1+\lambda^2)^{-1-\frac{\k-1}{p}}).
	\end{align*}
\end{corollary}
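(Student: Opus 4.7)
The plan is to reduce the claim to a sum of directional bounds controlled by \Cref{lem:smoothed_empirical_gradient}. Since $v = -\nabla L_\lambda(w;x;y)$ is a \emph{spherical} gradient, $v \perp w$, so if we fix any orthonormal basis $u_1,\dots,u_{d-1}$ of the subspace $w^\perp$ we may write
\begin{align*}
    \norm{v}^2 = \sum_{i=1}^{d-1} (u_i \cdot v)^2.
\end{align*}

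Next, I would apply Minkowski's inequality (the triangle inequality for the $L^p(\mathcal{B})$ norm) to the nonnegative random variables $X_i := (u_i \cdot v)^2$. This gives
\begin{align*}
    \E_{\mathcal{B}}\bigl[\norm{v}^{2p}\bigr]^{1/p}
    = \E_{\mathcal{B}}\Bigl[\bigl(\textstyle\sum_{i=1}^{d-1} X_i\bigr)^p\Bigr]^{1/p}
    \le \sum_{i=1}^{d-1} \E_{\mathcal{B}}\bigl[X_i^p\bigr]^{1/p}
    = \sum_{i=1}^{d-1} \E_{\mathcal{B}}\bigl[(u_i \cdot v)^{2p}\bigr]^{1/p}.
\end{align*}

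Each summand is then controlled by invoking \Cref{lem:smoothed_empirical_gradient} with exponent $2p$ in place of $p$ (which is valid as long as $2p \le d/C$, so we may absorb the factor of $2$ into $C$). That yields
\begin{align*}
    \E_{\mathcal{B}}\bigl[(u_i \cdot v)^{2p}\bigr]^{1/(2p)}
    \lesssim \poly(p)\cdot \tilde O\bigl((1+\lambda^2)^{-\frac{1}{2}-\frac{\k-1}{2p}}\bigr),
\end{align*}
and squaring both sides converts this to $\E_{\mathcal{B}}[(u_i \cdot v)^{2p}]^{1/p} \lesssim \poly(p)\cdot \tilde O((1+\lambda^2)^{-1-\frac{\k-1}{p}})$. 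Summing over the $d-1$ basis directions contributes the factor of $d$ and produces precisely the stated bound.

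There is no real obstacle here beyond bookkeeping: the only points to verify are that \Cref{lem:smoothed_empirical_gradient} applies uniformly over any $u \in S^{d-1}$ with $u \perp w$ (it does, by construction), and that squaring inside $\tilde O(\cdot)$ preserves the $\polylog(d)$ dependence (it does, since $\tilde O$ is closed under fixed powers). The factor $\poly(p)$ likewise only changes by a fixed polynomial when we replace $p$ with $2p$, which is consistent with the claimed $\poly(p)$ dependence.
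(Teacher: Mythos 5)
Your proof is correct and follows essentially the same route as the paper: decompose $\norm{v}^2$ into coordinate contributions, bound each directional projection via \Cref{lem:smoothed_empirical_gradient} applied at exponent $2p$, and sum over the roughly $d$ directions to pick up the factor of $d$. The paper uses the power-mean/Jensen inequality $\qty(\sum a_i)^p \le d^{p-1}\sum a_i^p$ followed by taking $p$th roots, whereas you use Minkowski's inequality on $L^p$ norms; these are interchangeable here and yield identical bounds. Your version is marginally more careful in one small respect: you explicitly work with an orthonormal basis of $w^\perp$ (so each $u_i$ satisfies the hypothesis $u_i \perp w$ of \Cref{lem:smoothed_empirical_gradient}), whereas the paper writes $\sum_{i=1}^d (v\cdot e_i)^2$, which if read as the standard basis would require an extra projection step since $e_i \not\perp w$ in general. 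You also correctly flag the bookkeeping points (absorbing the factor of $2$ into $C$, $\poly(2p)=\poly(p)$, $\tilde O$ closed under squaring), which the paper leaves implicit.
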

\begin{proof} By Jensen's inequality,
	\begin{align*}
		\norm{v}^{2p} = \E\qty[\qty(\sum_{i=1}^d (v \cdot e_i)^2)^p] \lesssim d^{p-1} \E\qty[\sum_{i=1}^d (v \cdot e_i)^{2p}] \lesssim d^p \max_i \E[(z \cdot e_i)^{2p}].
	\end{align*}
	Taking $p$th roots and using \Cref{lem:smoothed_empirical_gradient} finishes the proof.
\end{proof}

\subsection{Analyzing the Dynamics}\label{sec:proof_dynamics}

Throughout this section we will assume $1 \le \lambda \le d^{1/4}$. The proof of the dynamics is split into three stages.

In the first stage, we analyze the regime $\alpha \in [\alpha_0,\lambda d^{-1/2}]$. In this regime, the signal is dominated by the smoothing.

In the second stage, we analyze the regime $\alpha \in [\lambda d^{-1/2},1-o_d(1)]$. This analysis is similar to the analysis in \citet{arous2021online} and could be equivalently carried out with $\lambda = 0$.

Finally in the third stage, we decay the learning rate linearly to achieve the optimal rate
\begin{align*}
	n \gtrsim d^{\frac{\k}{2}} + \frac{d}{\epsilon}.
\end{align*}

All three stages will use the following progress lemma:
\begin{lemma}\label{lem:taylor}
	Let $w \in S^{d-1}$ and let $\alpha := w \cdot w^\star$. Let $(x,y)$ be a fresh batch and define
	\begin{align*}
		v := -\nabla_w L_\lambda(w;x;y) \qc z := v - \E_{x,y}[v] \qc w' = \frac{w + \lambda v}{\norm{w + \lambda v}} \qand \alpha' := w' \cdot w^\star.
	\end{align*}
	Then if $\eta \lesssim \alpha \sqrt{1+\lambda^2}$,
	\begin{align*}
		\alpha' = \alpha + \eta (1-\alpha^2) c_\lambda(\alpha) + Z + \tilde O\qty(\frac{\eta^2 d \alpha}{(1+\lambda^2)^\k}).
	\end{align*}
	where $\E_{x,y}[Z] = 0$ and for all $2 \le p \le d/C$,
	\begin{align*}
		\E_{x,y}[Z^p]^{1/p} \le \tilde O(\poly(p)) \qty[\eta (1+\lambda^2)^{-\frac{1}{2}-\frac{(\k-1)}{p}}]\qty[\sqrt{1-\alpha^2} + \frac{\eta d \alpha}{\sqrt{1+\lambda^2}}].
	\end{align*}
    Furthermore, if $\lambda = O(1)$ the $\tilde O(\cdot)$ can be replaced with $O(\cdot)$.
\end{lemma}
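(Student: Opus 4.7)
The plan is to obtain the stated decomposition by writing $\alpha'$ in its exact form, Taylor expanding the normalization factor to second order, identifying the deterministic drift pieces (which combine with the population-gradient formula of Lemma~\ref{lem:smoothed_population_gradient} and the variance bound of Corollary~\ref{lem:smoothed_empirical_norm}), and showing that the random leftover $Z$ has $L^p$ norm controlled by the linear noise from Lemma~\ref{lem:smoothed_empirical_gradient} and the quadratic noise from Corollary~\ref{lem:smoothed_empirical_norm}.

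First, since $v \perp w$ (the spherical gradient is tangent), $\|w + \eta v\|^2 = 1 + \eta^2 \|v\|^2$, so
\begin{equation*}
\alpha' = \frac{\alpha + \eta (v \cdot w^\star)}{\sqrt{1 + \eta^2 \|v\|^2}}.
\end{equation*}
I would then use the Taylor expansion $(1+x)^{-1/2} = 1 - x/2 + O(x^2)$ with integral remainder to write
\begin{equation*}
\alpha' = \alpha + \eta (v \cdot w^\star) - \tfrac{1}{2}\eta^2 \alpha \|v\|^2 + R,
\end{equation*}
where $R$ collects the cubic cross term $-\tfrac{1}{2}\eta^3 (v \cdot w^\star)\|v\|^2$ together with the order-$\eta^4\|v\|^4$ Taylor tail. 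The hypothesis $\eta \lesssim \alpha\sqrt{1+\lambda^2}$ is what keeps $R$ negligible: combined with Corollary~\ref{lem:smoothed_empirical_norm} it forces $\eta^2\|v\|^2$ to be small enough in $L^p$ for the expansion to be genuinely second-order.

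Next, I split each piece into its mean and fluctuation. By Lemma~\ref{lem:smoothed_population_gradient}, $\E[v] = (w^\star-\alpha w)c_\lambda(\alpha)$, hence $\eta\,\E[v\cdot w^\star] = \eta(1-\alpha^2)c_\lambda(\alpha)$, giving the advertised drift. The expected quadratic piece is controlled by Corollary~\ref{lem:smoothed_empirical_norm} at $p=1$: $\E\|v\|^2 \lesssim d(1+\lambda^2)^{-\k}$, which produces the $\tilde O(\eta^2 d\alpha/(1+\lambda^2)^{\k})$ deterministic correction. Everything remaining defines $Z$, which is mean zero by construction. Its two main contributions are:
\begin{equation*}
Z_{\text{lin}} := \eta(v-\bar v)\cdot w^\star, \qquad Z_{\text{quad}} := -\tfrac{1}{2}\eta^2\alpha\bigl(\|v\|^2 - \E\|v\|^2\bigr),
\end{equation*}
plus the remainder $R$ minus its mean. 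For $Z_{\text{lin}}$, since $v - \bar v \perp w$, I write $(v-\bar v)\cdot w^\star = \sqrt{1-\alpha^2}\,(v-\bar v)\cdot u$ with $u := (w^\star-\alpha w)/\sqrt{1-\alpha^2}$ a unit tangent vector; then Lemma~\ref{lem:smoothed_empirical_gradient} gives the factor $\sqrt{1-\alpha^2}$ in the bound. For $Z_{\text{quad}}$, Corollary~\ref{lem:smoothed_empirical_norm} yields exactly the factor $\eta d\alpha/\sqrt{1+\lambda^2}$ after pulling out one power of $\eta(1+\lambda^2)^{-1/2-(\k-1)/p}$.

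The main obstacle is the bookkeeping for the Taylor remainder $R$: I need to verify, via Cauchy–Schwarz applied to the cubic term $\eta^3 (v\cdot w^\star)\|v\|^2$ and Hölder applied to the quartic tail $\eta^4\|v\|^4$, that both are dominated in $L^p$ by the $Z_{\text{quad}}$ bound under $\eta \lesssim \alpha\sqrt{1+\lambda^2}$. After that, a triangle inequality in $L^p$ assembles the final bound, and the improvement to $O(\cdot)$ when $\lambda = O(1)$ follows because the $\tilde O$ factors in Lemma~\ref{lem:smoothed_empirical_gradient} and Corollary~\ref{lem:smoothed_empirical_norm} only hid polylogarithmic dependence coming from the $p$-norm to tail-bound conversion that is no longer needed when $\lambda$ is bounded.
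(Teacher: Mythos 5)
Your proposal is correct and follows essentially the same route as the paper: exact formula for $\alpha'$, expansion of the normalization factor, mean/fluctuation split, and bounds from Lemma~\ref{lem:smoothed_population_gradient}, Lemma~\ref{lem:smoothed_empirical_gradient}, and Corollary~\ref{lem:smoothed_empirical_norm}. The one cosmetic difference is that the paper does not write out the second-order term and remainder $R$ separately; it uses the elementary sandwich $1 - \tfrac{1}{\sqrt{1+x^2}} \le \tfrac{x^2}{2}$ to bound \emph{everything} beyond the linear term at once by $\abs{r} \le \tfrac{\eta^2}{2}\norm{v}^2\qty[\alpha + \eta\abs{v\cdot w^\star}]$, and then applies \Cref{lem:poly_tail_holder} with $X = \norm{v}^2$, $Y = \alpha + \eta\abs{v\cdot w^\star}$ a single time — which is what makes the hypothesis $\eta\lesssim\alpha\sqrt{1+\lambda^2}$ enter (via $\norm{Y}_p\lesssim\alpha\poly(p)$), and spares the separate Cauchy–Schwarz/Hölder bookkeeping for the cubic and quartic tails that you outline. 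One small misattribution in your last paragraph: when $\lambda = O(1)$ the $\tilde O$ collapses to $O$ because the $\log$ factor in \Cref{lem:poly_tail_holder} scales as $\log(\norm{X}_2/\norm{X}_1)\asymp\log(1+\lambda^2)$, not because of the $p$-norm-to-tail-bound conversion in \Cref{lem:p_norm_to_tail_bound} (which is not used in this lemma); the conclusion is the same.
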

\begin{proof}
	Because $v \perp w$ and $1 \ge \frac{1}{\sqrt{1+x^2}} \ge 1-\frac{x^2}{2}$,
	\begin{align*}
		\alpha' 
		= \frac{\alpha + \eta (v \cdot w^\star)}{\sqrt{1 + \eta^2 \norm{v}^2}}
		= \alpha + \eta (v \cdot w^\star) + r
	\end{align*}
	where $\abs{r} \le \frac{\eta^2}{2}\norm{v}^2\qty[\alpha + \eta \abs{v \cdot w^\star}]$. Note that by \Cref{lem:smoothed_empirical_gradient}, $\eta(v \cdot w^\star)$ has moments bounded by $\frac{\eta}{\lambda}\poly(p)\lesssim \alpha \poly(p)$. Therefore by \Cref{lem:poly_tail_holder} with $X = \norm{v}^2$ and $Y = \alpha + \eta \abs{v \cdot w^\star}$,
    \begin{align*}
        \E_{x,y}[r] \le \tilde O\qty(\eta^2\E[\norm{v}^2]\alpha).
    \end{align*}
    Plugging in the bound on $\E[\norm{v}^2]$ from \Cref{lem:smoothed_empirical_norm} gives
	\begin{align*}
		\E_{x,y}[\alpha'] = \alpha + \eta (1-\alpha^2)c_\lambda(\alpha) + \tilde O\qty(\eta^2 d \alpha (1+\lambda^2)^{-\k}).
	\end{align*}
	In addition, by \Cref{lem:smoothed_empirical_gradient},
	\begin{align*}
		\E_{x,y}\qty[\abs{\eta (v \cdot w^\star) - \E_{x,y}[\eta(v \cdot w^\star)]}^p]^{1/p}
        &\lesssim \poly(p) \cdot \eta \cdot \norm{P_w^\perp w^\star} \cdot \tilde O\qty((1+\lambda^2)^{-\frac{1}{2}-\frac{\k-1}{p}}) \\
        &= \poly(p) \cdot \eta \cdot \sqrt{1-\alpha^2} \cdot \tilde O\qty((1+\lambda^2)^{-\frac{1}{2}-\frac{\k-1}{p}}).
	\end{align*}
	Similarly, by \Cref{lem:poly_tail_holder} with $X = \norm{v}^2$ and $Y = \norm{v}^{2(p-1)}[\alpha + \eta \abs{v \cdot w}]^p$, \Cref{lem:smoothed_empirical_gradient}, and \Cref{lem:smoothed_empirical_norm},
	\begin{align*}
		\E_{x,y}\qty[\abs{r_t - \E_{x,y}[r_t]}^p]^{1/p}
		&\lesssim \E_{x,y}\qty[\abs{r_t}^p]^{1/p} \\
		&\lesssim \eta^2 \alpha \poly(p) \tilde O\qty(\qty(\frac{d}{(1+\lambda^2)^\k})^{1/p} \qty(\frac{d}{1+\lambda^2})^{\frac{p-1}{p}}) \\
        &=\poly(p) \cdot \eta^2 d \alpha \cdot \tilde O\qty((1+\lambda^2)^{-1-\frac{\k-1}{p}}).
	\end{align*}
\end{proof}

We can now analyze the first stage in which $\alpha \in [d^{-1/2},\lambda \cdot d^{-1/2}]$. This stage is dominated by the signal from the smoothing.
\begin{lemma}[Stage 1]\label{lem:proof_stage1}
	Assume that $\lambda \ge 1$ and $\alpha_0 \ge \frac{1}{C} d^{-1/2}$. Set
	\begin{align*}
		\eta = \frac{d^{-\frac{\k}{2}}(1+\lambda^2)^{\k-1}}{\log(d)^C} \qand T_1 = \frac{C(1+\lambda^2) d^{\frac{\k-2}{2}}\log(d)}{\eta} = \tilde O\qty(d^{\k-1}\lambda^{-2\k+4})
	\end{align*}
	for a sufficiently large constant $C$. Then with high probability, there exists $t \le T_1$ such that $\alpha_t \ge \lambda d^{-1/2}$.
\end{lemma}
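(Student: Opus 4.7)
I will run a standard drift-plus-martingale argument with a stopping time. Define $\tau := \inf\{t : \alpha_t \ge \lambda d^{-1/2}\}$; the goal is $\tau \le T_1$ with high probability. For every $t < \tau$ we have $\alpha_t^2 \le \lambda^2/d$, and Lemma~\ref{lem:taylor} yields the decomposition
\begin{align*}
    \alpha_{t+1} = \alpha_t + \eta(1-\alpha_t^2)\,c_\lambda(\alpha_t) + Z_t + R_t,
\end{align*}
with deterministic remainder $|R_t| = \tilde O(\eta^2 d\,\alpha_t(1+\lambda^2)^{-\k})$ and martingale increment $Z_t$. Since $\alpha_t \le \lambda d^{-1/2} \le d^{-1/4}$, we have $1-\alpha_t^2 = \Theta(1)$, and Lemma~\ref{lem:smoothed_population_gradient} combined with the formula for $s_{\k-1}$ in the regime $\alpha^2\le \lambda^2/d$ gives
\begin{align*}
    \eta\,c_\lambda(\alpha_t) \;\gtrsim\; \frac{\eta}{(1+\lambda^2)^{\k/2}}\begin{cases}(\lambda^2/d)^{(\k-1)/2} & \text{$\k$ odd,}\\ \alpha_t (\lambda^2/d)^{(\k-2)/2} & \text{$\k$ even.}\end{cases}
\end{align*}

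\textbf{Signal dominates error, and accumulates.} Substituting $\eta = d^{-\k/2}(1+\lambda^2)^{\k-1}/\log(d)^C$ and using $\alpha_t \le \lambda d^{-1/2}$, a one-line calculation gives $|R_t|/(\eta c_\lambda(\alpha_t)) = O(\log(d)^{-C'})$ for $C'$ as large as desired, so $R_t$ is absorbed into half the signal. For $\k$ odd, the per-step drift is constant in $\alpha_t$, and summing over $T_1$ steps yields a cumulative drift of $\Omega(\log(d)\cdot \lambda d^{-1/2})$ by direct substitution of the values of $\eta$ and $T_1$. For $\k$ even, the drift is proportional to $\alpha_t$, so the deterministic part evolves as geometric growth with rate $\rho = \eta(1+\lambda^2)^{-\k/2}(\lambda^2/d)^{(\k-2)/2}$; reaching $\lambda d^{-1/2}$ from $\alpha_0 \gtrsim d^{-1/2}$ needs only $O(\log(\lambda)/\rho)$ steps, and $T_1$ exceeds this by a $\log(d)$ factor.

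\textbf{Martingale concentration.} The sum $M_t := \sum_{s<t} Z_s$ is a martingale whose $p$-th moment increments are controlled by Lemma~\ref{lem:taylor}; for $\alpha_s \le \lambda d^{-1/2}$ and our choice of $\eta$, both bracketed factors simplify and we obtain $\|Z_s\|_p \le \tilde O(\poly(p)\cdot \eta(1+\lambda^2)^{-1/2})$. Applying Lemma~\ref{lem:p_norm_to_tail_bound} to the Burkholder square function (or a Freedman-type inequality) gives, uniformly in $t \le T_1$,
\begin{align*}
    |M_t| \;\le\; \tilde O\qty(\sqrt{T_1}\,\eta(1+\lambda^2)^{-1/2})
\end{align*}
with high probability. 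Plugging in $\eta$ and $T_1$, this is $\tilde O(\lambda d^{-1/2})\cdot \log(d)^{-C/2}$, strictly smaller than half of the cumulative drift computed above.

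\textbf{Conclusion and main obstacle.} On the high-probability event above, if $\tau > T_1$ then combining the cumulative signal lower bound with $-|M_{T_1}|$ forces $\alpha_{T_1} > \lambda d^{-1/2}$, contradicting $\tau > T_1$; hence $\tau \le T_1$. The main technical subtlety lies in the even-$\k$ case: the signal is multiplicative in $\alpha_t$ and vanishes as $\alpha_t \to 0$, so one cannot use an additive drift argument. The right framing is geometric growth at rate $\rho$, and one must verify that fluctuations of $M_t$ never push $\alpha_t$ meaningfully below $\alpha_0 \gtrsim d^{-1/2}$ before the threshold is crossed; this is precisely what forces the polylog headroom built into the definitions of $\eta$ and $T_1$.
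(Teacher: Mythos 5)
Your overall strategy -- the stopping time $\tau$, the decomposition from \Cref{lem:taylor} into drift plus martingale plus remainder, absorbing the remainder into half the signal, splitting into even and odd $\k$, and a geometric-growth / Gronwall argument for even $\k$ -- matches the paper's proof essentially step for step. However, there is a concrete error in the martingale concentration step that would break the argument as written.

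You assert $\|Z_s\|_p \le \tilde O(\poly(p)\cdot\eta(1+\lambda^2)^{-1/2})$ for all $p$, dropping the $-(\k-1)/p$ in the exponent of $(1+\lambda^2)$ coming from \Cref{lem:taylor}. This simplification is only valid in the $p\to\infty$ limit. The Rosenthal--Burkholder bound (\Cref{lem:rosenthal_burkholder_max}) is dominated by the predictable quadratic variation term, which is governed by the $p=2$ moment, and there the correct exponent is $-1/2-(\k-1)/2 = -\k/2$. That is, one needs $\|Z_s\|_2 \lesssim \eta(1+\lambda^2)^{-\k/2}$, giving a martingale bound of order $\sqrt{T_1}\,\eta(1+\lambda^2)^{-\k/2} + \eta(1+\lambda^2)^{-1/2} \approx d^{-1/2}\log(d)^{(1-C)/2}$, which is indeed $\ll \lambda d^{-1/2}\log(d)$. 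Your stated $\sqrt{T_1}\,\eta(1+\lambda^2)^{-1/2}$ evaluates to $(1+\lambda^2)^{(\k-1)/2}d^{-1/2}\log(d)^{(1-C)/2}$, not $\tilde O(\lambda d^{-1/2})\log(d)^{-C/2}$ as you claim. For $\k\ge 3$ and $\lambda$ near $d^{1/4}$, the extra factor $(1+\lambda^2)^{(\k-1)/2}/\lambda \ge \lambda^{\k-2} \ge \lambda$ grows polynomially in $d$, so this bound is \emph{not} dominated by the cumulative drift $\Theta(\log(d)\,\lambda d^{-1/2})$, and the proof does not close. The fix is simply to carry the $(\k-1)/p$ dependence in \Cref{lem:taylor} through to the quadratic variation, which is what the paper does.
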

\begin{proof}
	Let $\tau$ be the hitting time for $\alpha_\tau \ge \lambda d^{-1/2}$. For $t \le T_1$, let $E_t$ be the event that
	\begin{align*}
		\alpha_t \ge \frac{1}{2}\qty[\alpha_0 + \eta \sum_{j=0}^{t-1} c_\lambda(\alpha_j)].
	\end{align*}
	
	We will prove by induction that for any $t \le T_1$, the event: $\set{E_t\text{ or }t \ge \tau}$ happens with high probability. The base case of $t = 0$ is trivial so let $t \ge 0$ and assume the result for all $s < t$. Note that $\eta/\lambda \ll \frac{d^{-1/2}}{C} \le \alpha_j$ so by \Cref{lem:taylor} and the fact that $\lambda \ge 1$,
	\begin{align*}
		\alpha_t = \alpha_0 + \sum_{j=0}^{t-1} \qty[\eta (1-\alpha_j^2)c_\lambda(\alpha_j) + Z_j + \tilde O\qty(\eta^2 d \alpha_j \lambda^{-2\k})].
	\end{align*}
	Now note that $\P[E_t \text{ or }t \ge \tau] = 1 - \P\qty[!E_t \text{ and }t < \tau]$ so let us condition on the event $t < \tau$. Then by the induction hypothesis, with high probability we have $\alpha_s \in [\frac{\alpha_0}{2},\lambda d^{-1/2}]$ for all $s < t$. Plugging in the value of $\eta$ gives:
	\begin{align*}
		&\eta(1-\alpha_j^2)c_\lambda(\alpha_j) + \tilde O\qty(\eta^2 d \alpha_j \lambda^{-2\k}) \\
		&\ge \eta(1-\alpha_j^2)c_\lambda(\alpha_j) - \frac{\eta d^{-\frac{\k-2}{2}} \lambda^{-2} \alpha_j}{C} \\
		&\ge \frac{\eta c_\lambda(\alpha_j)}{2}.
	\end{align*}
	Similarly, because $\sum_{j=0}^{t-1} Z_j$ is a martingale we have by \Cref{lem:rosenthal_burkholder_max} and \Cref{lem:p_norm_to_tail_bound} that with high probability,
	\begin{align*}
		\sum_{j=0}^{t-1} Z_j
        &\lesssim \tilde O\qty(\qty[\sqrt{T_1} \cdot \eta \lambda^{-\k} + \eta \lambda^{-1}]\qty[1 + \max_{j < t} \frac{\eta d \alpha_j}{\lambda}]) \\
        &\lesssim \tilde O\qty(\sqrt{T_1} \cdot \eta \lambda^{-\k} + \eta \lambda^{-1}) \\
        &\le \frac{d^{-1/2}}{C}.
	\end{align*}
    where we used that $\eta d \alpha_j/\lambda \le \eta \sqrt{d} \ll 1$. Therefore conditioned on $t \le \tau$ we have with high probability that for all $s \le t$:
	\begin{align*}
		\alpha_t \ge \frac{1}{2} \qty[\alpha_0 + \eta \sum_{j=0}^{t-1} c_\lambda(\alpha_j)].
	\end{align*}
	Now we split into two cases depending on the parity of $\k$. First, if $\k$ is odd we have that with high probability, for all $t \le T_1$:
	\begin{align*}
		\alpha_t \gtrsim \alpha_0 + \eta t \lambda^{-1} d^{-\frac{\k-1}{2}} \qq{or} t \ge \tau.
	\end{align*}
	Now let $t = T_1$. Then we have that with high probability,
	\begin{align*}
		\alpha_t \ge \lambda d^{-1/2} \qq{or} \tau \le T_1
	\end{align*}
	which implies that $\tau \le T_1$ with high probability. Next, if $\k$ is even we have that with high probability
	\begin{align*}
		\alpha_t \gtrsim \alpha_0 + \frac{\eta \cdot d^{-\frac{\k-2}{2}}}{\lambda^2} \sum_{s=0}^{t-1} \alpha_s \qq{or} t \ge \tau.
	\end{align*}
	As above, by \Cref{lem:discrete_gronwall} the first event implies that $\alpha_{T_1} \ge \lambda d^{-1/2}$ so we must have $\tau \le T_1$ with high probability.
\end{proof}

Next, we consider what happens when $\alpha \ge \lambda d^{-1/2}$. The analysis in this stage is similar to the online SGD analysis in \citep{arous2021online}.

\begin{lemma}[Stage 2]\label{lem:proof_stage2}
	Assume that $\alpha_0 \ge \lambda d^{-1/2}$. Set $\eta,T_1$ as in \Cref{lem:proof_stage1}. Then with high probability, $\alpha_{T_1} \ge 1 - d^{-1/4}$.
\end{lemma}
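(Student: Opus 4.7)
The plan is to mirror the inductive martingale-plus-drift argument used in \Cref{lem:proof_stage1}, but with the signal estimate appropriate to the high-signal regime $\alpha \ge \lambda d^{-1/2}$. In this regime, \Cref{lem:sk_bounds} gives the clean bound $c_\lambda(\alpha) \asymp \alpha^{\k-1}(1+\lambda^2)^{-\k/2}$, uniformly in the parity of $\k$, so there is no longer a need to split into even/odd cases. I would first define the hitting time $\tau := \inf\{t : \alpha_t \ge 1-d^{-1/4}\}$ and aim to show by induction on $t \le T_1$ that, with high probability, either $\tau < t$ or $\alpha_t \ge \tilde\alpha_t$, where $\tilde\alpha_t$ is a deterministic lower envelope tracking the expected drift.

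For the inductive step, I would apply \Cref{lem:taylor} to expand
\begin{align*}
    \alpha_{t+1} = \alpha_t + \eta (1-\alpha_t^2) c_\lambda(\alpha_t) + Z_t + \tilde O\!\left(\eta^2 d\, \alpha_t (1+\lambda^2)^{-\k}\right).
\end{align*}
Since $\alpha_t \ge \lambda d^{-1/2}$, we have $\alpha_t^{\k-2} \ge \lambda^{\k-2} d^{-(\k-2)/2}$, which makes the discretization error a factor of $\log^{-C}(d)$ smaller than the drift $\eta \alpha_t^{\k-1}(1+\lambda^2)^{-\k/2}$ after plugging in $\eta = d^{-\k/2}(1+\lambda^2)^{\k-1}/\log^{C}(d)$. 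The martingale $\sum_{j<t} Z_j$ is controlled by \Cref{lem:rosenthal_burkholder_max} and the $p$-norm bound in \Cref{lem:taylor}: the summed variance is at most $\tilde O(T_1 \eta^2 (1+\lambda^2)^{-1}) = \tilde O(d^{-1}\lambda^{2})$, so with high probability $|\sum Z_j|$ is dwarfed by the deterministic envelope throughout the trajectory.

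To verify that the envelope reaches $1-d^{-1/4}$ in $T_1$ steps, I would split the trajectory into two subphases and compare with an ODE. In the first subphase $\alpha \in [\lambda d^{-1/2}, \tfrac{1}{2}]$, the factor $(1-\alpha^2)$ is $\Theta(1)$, so the drift recursion approximates the ODE $\alpha' = \eta\, \alpha^{\k-1}/(1+\lambda^2)^{\k/2}$, whose separable solution gives escape time $\Theta(\alpha_0^{2-\k}(1+\lambda^2)^{\k/2}/\eta) = \tilde O(d^{\k-1}\lambda^{-2\k+4})$, which is exactly the scaling of $T_1$. In the second subphase $\alpha \in [\tfrac{1}{2}, 1-d^{-1/4}]$, we have $c_\lambda(\alpha) = \Theta((1+\lambda^2)^{-\k/2})$, and the factor $(1-\alpha^2) \asymp (1-\alpha)$ turns the recursion into a linear contraction on $1-\alpha$ with per-step rate $\eta (1+\lambda^2)^{-\k/2}$, so the gap $1-\alpha$ shrinks from $1/2$ to $d^{-1/4}$ in $O(\log d \cdot (1+\lambda^2)^{\k/2}/\eta) = \tilde O(d^{\k/2}(1+\lambda^2)^{1-\k/2})$ steps, which is subleading relative to the first subphase for $\k \ge 3$ and $\lambda \le d^{1/4}$.

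The main technical obstacle is the seam between the two subphases: one needs to verify a single choice of inductive envelope $\tilde\alpha_t$ that is tight enough to capture the polynomial-growth regime (where $\alpha$ changes over many orders of magnitude) and the linear-contraction regime (where $1-\alpha$ is the natural variable), while keeping the martingale fluctuations strictly dominated throughout. A convenient workaround is to run the induction up to the first time $\alpha_t \ge \tfrac{1}{2}$ using the envelope derived from the ODE comparison of the first subphase, then restart the induction on the variable $1-\alpha_t$ with a fresh envelope for the second subphase, using that $T_1$ comfortably absorbs both phase lengths. Once this is established, conclusion of \Cref{lem:proof_stage2} follows immediately by taking $t=T_1$.
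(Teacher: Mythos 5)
Your proof follows essentially the same route as the paper: the paper also splits the high-signal regime into a polynomial-growth subphase (controlled via the discrete Gronwall estimate, Lemma~\ref{lem:discrete_gronwall}, which formalizes your ODE comparison) and a linear-contraction subphase in the variable $p_t = 1-\alpha_t$, handling the ``seam'' precisely as you suggest by restarting the analysis at the hitting time of $\alpha_t \ge 1-1/\k$. The one step you leave implicit is verifying that the fixed point $r/c$ of the contraction recursion $p_{t+1} \le (1-\eta c)p_t + \eta r + Z_t$, together with the residual martingale fluctuation $\lesssim \lambda d^{-1/2}$, actually lands below the target $d^{-1/4}$; the paper's bookkeeping with $r = d^{-(\k-2)/2}/(C\lambda^2)$ and $c = C^{-1/2}\lambda^{-\k}$ is where this is checked.
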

\begin{proof}
	The proof is almost identical to \Cref{lem:proof_stage1}. We again have from \Cref{lem:taylor}
	\begin{align*}
		\alpha_t \ge \alpha_0 + \sum_{j=0}^{t-1} \qty[\eta (1-\alpha_j^2)c_\lambda(\alpha_j) + Z_j - \tilde O\qty(\eta^2 d \alpha_j \lambda^{-2\k})].
	\end{align*}
    First, from martingale concentration we have that
    \begin{align*}
		\sum_{j=0}^{t-1} Z_j
        &\lesssim \tilde O\qty(\qty[\sqrt{T_1} \cdot \eta \lambda^{-\k} + \eta \lambda^{-1}]\qty[1 + \frac{\eta d}{\lambda}]) \\
        &\lesssim \tilde O\qty(\qty[\sqrt{T_1} \cdot \eta \lambda^{-\k} + \eta \lambda^{-1}]\cdot \lambda) \\
        &\lesssim \frac{\lambda d^{-1/2}}{C}
	\end{align*}
    where we used that $\eta \ll \frac{\lambda^2}{d}$. Therefore with high probability,
    \begin{align*}
        \alpha_t
        &\ge \frac{\alpha_0}{2} + \sum_{j=0}^{t-1}\qty[\eta (1-\alpha_j^2)c_\lambda(\alpha_j) - \tilde O\qty(\eta^2 d \alpha_j \lambda^{-2\k})] \\
        &\ge \frac{\alpha_0}{2} + \eta \sum_{j=0}^{t-1}\qty[(1-\alpha_j^2)c_\lambda(\alpha_j) - \frac{\alpha_j d^{-\frac{\k-2}{2}}}{C \lambda^2}].
    \end{align*}
    Therefore while $\alpha_t \le 1-\frac{1}{\k}$, for sufficiently large $C$ we have
    \begin{align*}
        \alpha_t \ge \frac{\alpha_0}{2} + \frac{\eta}{C^{1/2}\lambda^{\k}} \sum_{j=0}^{t-1} \alpha_j^{\k-1}.
    \end{align*}
    Therefore by \Cref{lem:discrete_gronwall}, we have that there exists $t \le T_1/2$ such that $\alpha_t \ge 1-\frac{1}{\k}$. Next, let $p_t = 1-\alpha_t$. Then applying \Cref{lem:taylor} to $p_t$ and using $(1-\frac{1}{\k})^\k \gtrsim 1/e$ gives that if
    \begin{align*}
        r := \frac{d^{-\frac{\k-2}{2}}}{C\lambda^2} \qand c := \frac{1}{C^{1/2}\lambda^{\k}}
    \end{align*}
    then
    \begin{align*}
        p_{t+1}
        &\le p_t - \eta c p_t + \eta r + Z_t \\
        &= (1 - \eta c) p_t + \eta r + Z_t.
    \end{align*}
    Therefore,
    \begin{align*}
        p_{t+s} \le (1 - \eta c)^s p_t + r/c + \sum_{i=0}^{s-1} (1 - \eta c)^i Z_{t+s-1-i}.
    \end{align*}
    With high probability, the martingale term is bounded by $\lambda d^{-1/2}/C$ as before as long as $s \le T_1$, so for $s \in [\frac{C\log(d)}{\eta c}, T_1]$ we have that $p_{t+s} \lesssim C^{-1/2}\qty(\qty(\frac{\lambda^2}{d})^{\frac{\k-2}{2}}+\lambda d^{-1/2}) \lesssim C^{-1/2}d^{-1/4}$. Setting $s = T_1 - t$ and choosing $C$ appropriately yields $p_{T_1} \le d^{-1/4}$, which completes the proof.
\end{proof}

Finally, the third stage guarantees not only a hitting time but a last iterate guarantee. It also achieves the optimal sample complexity in terms of the target accuracy $\epsilon$:
\begin{lemma}[Stage 3]\label{lem:proof_stage3}
	Assume that $\alpha_0 \ge 1- d^{-1/4}$. Set $\lambda = 0$ and
	\begin{align*}
		\eta_t = \frac{C}{C^4 d+t}.
	\end{align*}
	for a sufficiently large constant $C$. Then for any $t \le \exp(d^{1/C})$, we have that with high probability,
	\begin{align*}
		\alpha_t \ge 1-O\qty(\frac{d}{d+t}).
	\end{align*}
\end{lemma}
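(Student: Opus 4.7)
The plan is to reduce everything to the recursion for $p_t := 1 - \alpha_t$. Since $\lambda = 0$ in this stage, \Cref{lem:smoothed_population_gradient} gives $c_0(\alpha_t) \asymp \alpha_t^{\k-1} \asymp 1$ for $\alpha_t$ close to $1$, so $(1-\alpha_t^2)\,c_0(\alpha_t) \asymp 2 p_t$. Applying \Cref{lem:taylor} with $\lambda = 0$ thus produces the one-step update
\begin{align*}
    p_{t+1} = (1 - c_1 \eta_t)\, p_t + \tilde O(\eta_t^2 d) + Z_t,
\end{align*}
where $Z_t$ is a martingale difference with $\E[Z_t^p]^{1/p} \lesssim \mathrm{poly}(p)\, \eta_t\, \bigl(\sqrt{p_t} + \eta_t d\bigr)$ for $p \le d/C$. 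The initial condition $p_0 \le d^{-1/4}$ and the learning rate $\eta_t = C/(C^4 d + t)$ are the only ingredients the dynamics see beyond this.

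I would first analyze the deterministic relaxation $\bar p_{t+1} = (1 - c_1 \eta_t)\bar p_t + c_2 \eta_t^2 d$. The ansatz $\bar p_t = A\, d/(d+t)$ matches leading-order terms with $A \asymp c_2 C^2/(c_1 C - 1)$, so for $C$ large enough the envelope $\bar p_t \lesssim C\, d/(d+t)$ is invariant. The multiplicative contraction $\prod_{s<t}(1 - c_1 \eta_s) \asymp \bigl(d/(d+t)\bigr)^{c_1 C}$ decays strictly faster than $d/(d+t)$ once $c_1 C > 1$, so the initial error $p_0$ is pushed below the envelope after a bounded number of steps and does not obstruct the bound thereafter.

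The main obstacle is closing the loop between the stochastic fluctuations and the variance bound, since $\E[Z_t^2]$ itself depends on $p_t$. I would introduce the stopping time $\tau := \min\{t : p_t > A'\, d/(d+t)\}$ for $A' \gg A$, unfold the recursion as
\begin{align*}
    p_t \le \prod_{s<t}(1-c_1\eta_s)\, p_0 + \sum_{s<t}\prod_{r=s+1}^{t-1}(1-c_1\eta_r)\,\bigl[\, c_2 \eta_s^2 d - Z_s\,\bigr],
\end{align*}
and bound the weighted martingale $N_t := \sum_{s<t}(d+s)^{c_1 C} Z_s$ via Rosenthal--Burkholder, exactly as in the proofs of \Cref{lem:proof_stage1,lem:proof_stage2}. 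Conditioned on $\tau > t$, the inductive bound $p_s \le A'\, d/(d+s)$ yields $\E[Z_s^p]^{1/p} \lesssim \mathrm{poly}(p)\, \eta_s\, \bigl(\sqrt{d/(d+s)} + \eta_s d\bigr)$, and summing the weighted variances shows that both contributions to $N_t/(d+t)^{c_1 C}$ are $o\bigl(d/(d+t)\bigr)$ in standard deviation once $c_1 C$ is large enough. Choosing $A'$ a sufficiently large multiple of $A$ closes the envelope, so $\{\tau \le T\}$ has vanishing probability. The sub-exponential time horizon $t \le \exp(d^{1/C})$ is precisely the range over which the $p \le d/C$ moment bounds translate, via \Cref{lem:p_norm_to_tail_bound} and a union bound over steps, into a genuine high-probability statement.
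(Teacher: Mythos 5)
The proposal is correct and takes essentially the same approach as the paper: reduce to the recursion for $p_t = 1-\alpha_t$, unfold it against the multiplicative contraction induced by $\eta_t \propto 1/(d+t)$, and close the self-referential moment bound on the martingale noise against the envelope $p_t \lesssim d/(d+t)$ via Rosenthal--Burkholder together with an inductive/stopping-time high-probability argument. Your envelope-matching heuristic with generic exponent $c_1 C$ and stopping time $\tau$ is the same device as the paper's explicit weight $(C^4 d + j)^2$ and step-by-step probability bookkeeping, and both derive the horizon restriction $t \le \exp(d^{1/C})$ from the $p \le d/C$ cutoff on usable moments.
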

\begin{proof}
    Let $p_t = 1-\alpha_t$. By \Cref{lem:taylor}, while $p_t \le 1/\k$:
    \begin{align*}
        p_{t+1}
        &\le p_t - \frac{\eta_t p_t}{2C} + C \eta_t^2 d + \eta_t \sqrt{p_t} \cdot W_t + \eta_t^2 d \cdot Z_t \\
        &= \qty(\frac{C^4 d+t-2}{C^4 d+t}) p_t + C \eta_t^2 d + \eta_t \sqrt{p_t} \cdot W_t + \eta_t^2 d \cdot Z_t.
    \end{align*}
    where the moments of $W_t,Z_t$ are each bounded by $\poly(p)$. We will prove by induction that with probability at least $1-t\exp(-C d^{1/C}/e)$, we have for all $s \le t$:
    \begin{align*}
        p_s \le \frac{2 C^3 d}{C^4 d+s} \le \frac{2}{C} \le \frac{1}{\k}.
    \end{align*}
    The base case is clear so assume the result for all $s \le t$. Then from the recurrence above,
    \begin{align*}
        p_{t+1} \le p_0 \frac{C^8 d^2}{(C^4 d+t)^2} + \frac{1}{(C^4 d + t)^2}\sum_{j=0}^{t} (C^4 d+j)^2 \qty[C \eta_j^2 d + \eta_t \sqrt{p_t} \cdot W_t + \eta_t^2 d \cdot Z_t].
    \end{align*}
    First, because $p_0 \le d^{-1/4}$,
    \begin{align*}
        \frac{C^8 p_0 d^2}{(C^4 d + t)^2} \le \frac{C^4 p_0 d}{C^4 d + t} \ll \frac{d}{C^4 d + t}.
    \end{align*}
    Next,
    \begin{align*}
        \frac{1}{(C^4 d+t)^2}\sum_{j=0}^t (C^4 d+j)^2 C \eta_j^2 d
        &= \frac{1}{(C^4 d+t)^2}\sum_{j=0}^t C^3 d\\
        &= \frac{C^3 dt}{(C^4 d+t)^2} \\
        &\le \frac{C^3 d}{C^4 d+t}.
    \end{align*}
    The next error term is:
    \begin{align*}
        \frac{1}{(C^4 d+t)^2}\sum_{j=0}^{t} (C^4 d+j)^2 \eta_t \sqrt{p_t} \cdot W_t.
    \end{align*}
    Fix $p = \frac{d^{1/C}}{e}$. Then we will bound the $p$th moment of $p_t$:
    \begin{align*}
        \E[p_t^p]
        &\le \frac{2C^3 d}{C^4 d + t} + \E\qty[p_t^p \1_{p_t \ge \frac{2C^3 d}{C^4 d + s}}] \\
        &\le \qty(\frac{2C^3 d}{C^4 d + t})^p + 2^p \P\qty[p_t \ge \frac{2C^3 d}{C^4 d + t}] \\
        &\le \qty(\frac{2C^3 d}{C^4 d + t})^p + 2^p t\exp(-C d^{1/C}).
    \end{align*}
    Now note that because $t \le \exp(d^{1/C})$,
    \begin{align*}
        \log(t\exp(- C d^{1/C})) = \log(t) - C d^{1/C} \le -\log(t) (C-1)d^{1/C} \le -p\log(t).
    \end{align*}
    Therefore $\E[p_t^p]^{1/p} \le \frac{4C^3 d}{C^4 d + t}.$ Therefore the $p$ norm of the predictable quadratic variation of the next error term is bounded by:
    \begin{align*}
        \poly(p) \sum_{j=0}^{t} (C^4 d+j)^4 \eta_t^2 \E[p_t^p]^{1/p}
        &\le \poly(p) \sum_{j=0}^t C^5 d (C^4 d + j) \\
        &\lesssim \poly(p) C^5 d t (C^4 d + t).
    \end{align*}
    In addition, the $p$ norm of the largest term in this sum is bounded by
    \begin{align*}
        \poly(p) \sqrt{C^5 d(C^4d+t)}.
    \end{align*}
    Therefore by \Cref{lem:rosenthal_burkholder_max} and \Cref{lem:p_norm_to_tail_bound}, we have with probability at least $1-\exp(C d^{-1/C} / e)$, this term is bounded by
    \begin{align*}
        \frac{\sqrt{C^5 dt}}{(C^4 d + t)^{3/2}} \cdot d^{1/2} \le \frac{C^3 d}{C^4 d + t}.
    \end{align*}
    Finally, the last term is similarly bounded with probability at least $1-\exp(-Cd^{-1/C}/e)$ by
    \begin{align*}
        \frac{C^2 d \sqrt{t}}{(C^4 d+t)^2} \cdot d^{1/2} \ll \frac{C^3 d}{C^4 d + t}
    \end{align*}
    which completes the induction.
\end{proof}

We can now combine the above lemmas to prove \Cref{thm:main}:

\begin{proof}[Proof of \Cref{thm:main}]
    By \Cref{lem:proof_stage1,lem:proof_stage2,lem:proof_stage3}, if $T = T_1 + T_2$ we have with high probability for all $T_2 \le \exp(d^{1/C})$:
    \begin{align*}
        \alpha_T \ge 1-O\qty(\frac{d}{d+T_2}).
    \end{align*}
    Next, note that by Bernoulli's inequality ($(1+x)^n \ge 1+nx$), we have that $1-\alpha^k \le k(1-\alpha)$. Therefore,
    \begin{align*}
        L(w_T)
        &= \sum_{k \ge 0} \frac{c_k^2}{k!} [1-\alpha_T^k] \\
        &\le (1-\alpha_T)\sum_{k \ge 0} \frac{c_k^2}{(k-1)!} \\
        &= (1-\alpha_T)\E_{x \sim N(0,1)}[\sigma'(x)^2] \\
        &\lesssim \frac{d}{d+T_2}
    \end{align*}
    which completes the proof of \Cref{thm:main}.
\end{proof}

\subsection[Proof of CSQ Lower Bound]{Proof of \Cref{thm:csq}}

We directly follow the proof of Theorem 2 in \citet{damian2022neural} which is reproduced here for completeness. We begin with the following general CSQ lemma which can be found in \citet{Szrnyi2009CharacterizingSQ,damian2022neural}:
\begin{lemma}\label{lem:general_CSQ}
	Let $\mathcal{F}$ be a class of functions and $\mathcal{D}$ be a data distribution such that 
	\begin{align*}
		\E_{x \sim \mathcal{D}}[f(x)^2] = 1 \qand \abs{\E_{x \sim \mathcal{D}}[f(x) g(x)]} \le \epsilon \qquad \forall f \ne g \in \mathcal{F}.
	\end{align*}
	Then any correlational statistical query learner requires at least $\frac{\abs{\mathcal{F}}(\tau^2-\epsilon)}{2}$ queries of tolerance $\tau$ to output a function in $\mathcal{F}$ with $L^2(\mathcal{D})$ loss at most $2-2\epsilon$.
\end{lemma}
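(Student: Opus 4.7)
The plan is to run the standard indistinguishability argument for CSQ lower bounds. Fix any CSQ learner making $q$ queries $h_1,\ldots,h_q$, each normalized so that $\|h_i\|_{L^2(\mathcal{D})} \le 1$ (which is WLOG from the usual $|h_i| \le 1$ assumption for CSQ queries). Against a target $f^\star \in \mathcal{F}$, the truthful answer to the $i$th query is $a_i(f^\star) := \E_{x \sim \mathcal{D}}[f^\star(x) h_i(x)]$. I will have the adversary respond with $\hat{q}_i = 0$ on every query; this is a valid response (within tolerance $\tau$) for any target $f^\star$ with $|a_i(f^\star)| \le \tau$. The goal reduces to counting how many members of $\mathcal{F}$ a single query can eliminate via the all-zero response.

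The core step is the following per-query counting bound: for any $h$ with $\|h\|_{L^2(\mathcal{D})} \le 1$, the set $T_h := \{f \in \mathcal{F} : |\E[fh]| > \tau\}$ satisfies $|T_h| \le 1/(\tau^2 - \epsilon)$. To see this, set $\sigma_f := \sgn(\E[fh])$ for $f \in T_h$ and form $u := \sum_{f \in T_h} \sigma_f f$. The near-orthogonality hypothesis gives
\begin{align*}
    \|u\|_{L^2}^2 = \sum_{f \in T_h} \E[f^2] + \sum_{f \ne g \in T_h} \sigma_f \sigma_g \E[fg] \le |T_h| + |T_h|^2 \epsilon,
\end{align*}
while $\langle u, h\rangle_{L^2(\mathcal{D})} = \sum_{f \in T_h} |\E[fh]| > |T_h|\tau$, and Cauchy--Schwarz gives $\langle u, h\rangle \le \|u\|_{L^2}\|h\|_{L^2} \le \|u\|_{L^2}$. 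Squaring and combining yields $|T_h|^2 \tau^2 \le |T_h| + |T_h|^2 \epsilon$, which rearranges to the claimed bound.

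Combining the per-query bound over all $q$ queries, at most $q/(\tau^2 - \epsilon)$ elements of $\mathcal{F}$ are ruled out by the all-zero responses. If $q/(\tau^2 - \epsilon) < |\mathcal{F}| - 1$, then at least two distinct targets $f^\star_1 \ne f^\star_2 \in \mathcal{F}$ remain consistent with the transcript, so the learner receives identical information in both cases and cannot distinguish them. The $L^2$ loss bound $\E[(\hat f - f^\star)^2] \le 2 - 2\epsilon$ rearranges (using $\|f\|_{L^2}^2 = 1$ for every $f \in \mathcal{F}$) to $\E[\hat f f^\star] \ge \epsilon$, and the near-orthogonality $|\E[\hat f g]| \le \epsilon$ for $g \ne \hat f$ in $\mathcal{F}$ forces this $\hat f$ to coincide with $f^\star$; hence no single output can satisfy the loss requirement for both $f^\star_1$ and $f^\star_2$ simultaneously. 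Rearranging gives $q \ge (|\mathcal{F}|-1)(\tau^2 - \epsilon) \ge |\mathcal{F}|(\tau^2 - \epsilon)/2$ for $|\mathcal{F}| \ge 2$, which is the stated bound.

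The main technical step is the per-query bound $|T_h| \le 1/(\tau^2 - \epsilon)$; the rest is a clean adversary/indistinguishability reduction. The near-orthogonality hypothesis plays two roles: it controls the cross terms in $\|u\|_{L^2}^2$ to produce the $\tau^2 - \epsilon$ factor, and it ensures that the loss condition meaningfully singles out the true target among members of $\mathcal{F}$.
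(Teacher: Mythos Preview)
The paper does not actually prove this lemma; it simply quotes it as a black box with a citation to Sz\"or\'enyi (2009) and Damian et al.\ (2022). Your argument is precisely the standard one from those references: have the adversary answer $0$ to every query, use Cauchy--Schwarz together with the near-orthogonality of $\mathcal{F}$ to show each query $h$ can have $|\E[fh]|>\tau$ for at most $1/(\tau^2-\epsilon)$ functions $f\in\mathcal{F}$, and then count so that two indistinguishable targets survive. So your proof is correct and is essentially what the paper is pointing to.

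One minor boundary nitpick: with the hypotheses stated non-strictly ($|\E[fg]|\le\epsilon$ and loss $\le 2-2\epsilon$), the step ``forces $\hat f$ to coincide with $f^\star$'' technically permits equality, since $\E[\hat f f^\star]=\epsilon$ is consistent with both conditions. This is harmless in the paper's application (the pairwise correlations there are strict powers of inner products bounded strictly below $1$), and most statements of this lemma carry the same slack; if you want to be airtight, either make one of the two inequalities strict or note that the learner must succeed for \emph{every} consistent target and there are at least two of them distinct from any fixed $\hat f$.
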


First, we will construct a function class from a subset of $\mathcal{F} := \set{\sigma(w \cdot x) ~:~ w \in S^{d-1}}$. By \citep[Lemma 3]{damian2022neural}, for any $\epsilon$ there exist $\frac{1}{2} e^{c \epsilon^2 d}$ unit vectors $w_1,\ldots,w_s$ such that their pairwise inner products are all bounded by $\epsilon$. Let $\widehat{\mathcal{F}} := \set{\sigma(w_i \cdot x) ~:~ i \in [s]}$. Then for $i \ne j$,
\begin{align*}
    \abs{\E_{x \sim N(0,I_d)}[\sigma(w_i \cdot x)\sigma(w_j \cdot x)]} = \abs{\sum_{k \ge 0} \frac{c_k^2}{k!} (w_i \cdot w_j)^k} \le \abs{w_i \cdot w_j}^{\k} \le \epsilon^{\k}.
\end{align*}
Therefore by \Cref{lem:general_CSQ},
\begin{align*}
    4m \ge e^{c \epsilon^2 d}(\tau^2 - \epsilon^{\k}).
\end{align*}
Now set
\begin{align*}
    \epsilon = \sqrt{\frac{\log\qty(4m(cd)^{\k/2})}{cd}}
\end{align*}
which gives
\begin{align*}
    \tau^2 \le \frac{1 + \log^{k/2}(4m(cd)^{\k/2})}{(cd)^{\k/2}} \lesssim \frac{\log^{\k/2}(md)}{d^{\k/2}}.
\end{align*}

\section{Concentration Inequalities}

\begin{lemma}[Rosenthal-Burkholder-Pinelis Inequality \citep{pinelis1994}]\label{lem:rosenthal_burkholder}
Let $\{Y_i\}_{i=0}^n$ be a martingale with martingale difference sequence $\{X_i\}_{i=1}^n$ where $X_i = Y_i-Y_{i-1}$. Let
\begin{align*}
    \ev{Y} = \sum_{i=1}^n \E[\norm{X_i}^2 | \mathcal{F}_{i-1}]
\end{align*}
denote the predictable quadratic variation. Then there exists an absolute constant $C$ such that for all $p$,
\begin{align*}
    \norm{Y_n}_p \le C \qty[\sqrt{p \norm{\ev{Y}}_{p/2}} + p~\norm{\max_i \norm{X_i}}_p].
\end{align*}
\end{lemma}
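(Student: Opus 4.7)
My approach follows the classical truncation plus square-function argument of Pinelis. The two terms on the right-hand side capture distinct regimes: $\sqrt{p\norm{\ev{Y}}_{p/2}}$ is the sub-Gaussian contribution driven by the predictable quadratic variation, while $p\norm{\max_i \norm{X_i}}_p$ is the Poisson-type contribution from isolated large jumps. The overall strategy is to split each increment at a carefully chosen level so that the bounded part can be handled by Burkholder's square-function inequality while the unbounded part is so rare that a crude sum bound suffices.

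The plan has three steps. First, set $t := \norm{\max_i \norm{X_i}}_p$ and truncate each martingale difference as $X_i = X_i' + X_i''$ with $X_i' := X_i \mathbf{1}\{\norm{X_i} \le t\}$. Form the centered versions $\tilde X_i' := X_i' - \E[X_i'|\mathcal{F}_{i-1}]$ and $\tilde X_i'' := X_i'' - \E[X_i''|\mathcal{F}_{i-1}]$; these are martingale differences whose sum is exactly $Y_n$. Second, apply the Hilbert-space Burkholder-Davis-Gundy inequality with the sharp $\sqrt{p}$ constant (this is precisely the content of Pinelis's vector-valued square-function estimate) to the bounded piece:
\begin{align*}
    \norm{\sum_i \tilde X_i'}_p \lesssim \sqrt{p}\,\norm{\qty(\sum_i \E[\norm{\tilde X_i'}^2 | \mathcal{F}_{i-1}])^{1/2}}_p \le \sqrt{p\norm{\ev{Y}}_{p/2}},
\end{align*}
where the last step uses $\E[\norm{\tilde X_i'}^2|\mathcal{F}_{i-1}] \le \E[\norm{X_i'}^2|\mathcal{F}_{i-1}] \le \E[\norm{X_i}^2|\mathcal{F}_{i-1}]$ since recentering only reduces second moments.

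Third, control the unbounded part by $\norm{\sum_i \tilde X_i''}_p \le 2\norm{\sum_i \norm{X_i''}}_p$ and then observe that each $\norm{X_i''}$ is supported on $\{\norm{X_i} > t\}$; a standard layer-cake/Markov bound together with the definition of $t$ yields $\norm{\sum_i \norm{X_i''}}_p \lesssim p\,\norm{\max_i \norm{X_i}}_p$, since raising to the $p$-th power concentrates the mass on the extreme increment. Summing the two contributions gives the stated inequality.

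The main obstacle is the Burkholder-Davis-Gundy step in the Hilbert-space setting with the explicit $\sqrt{p}$ dependence rather than an absolute constant. This is the genuinely nontrivial piece and is what Pinelis~1994 establishes through a vector-valued exponential supermartingale argument; inducting on $p$ via conditional fourth-moment estimates would be an alternative route. Everything else in the argument reduces to truncation bookkeeping and the elementary fact that centering does not inflate conditional second moments.
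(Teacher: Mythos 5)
The paper does not prove this lemma; it states it and cites \citet[Theorem 4.1]{pinelis1994} directly, so there is no in-paper proof to compare against. Your outline (truncate, control the bounded part via a Bernstein/Freedman-type $L^p$ bound driven by the predictable quadratic variation, and control the unbounded part by a crude sum bound) is indeed the standard route to Burkholder--Rosenthal inequalities and is in the spirit of what Pinelis does, but as written your Step~2 contains a genuine error and Step~3 hides a nontrivial estimate.

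The displayed inequality in Step~2,
\begin{align*}
    \norm{\sum_i \tilde X_i'}_p \lesssim \sqrt{p}\,\norm{\qty(\sum_i \E[\norm{\tilde X_i'}^2 \mid \mathcal{F}_{i-1}])^{1/2}}_p,
\end{align*}
is \emph{false} for martingales with increments bounded by $2t$. Take $n=1$, $\xi_1 = a$ with probability $q$ and $\xi_1 = -aq/(1-q)$ otherwise, with $q$ small: then $\|\xi_1\|_p \asymp a q^{1/p}$ while $\sqrt{p}\,\|\sqrt{\langle Z\rangle}\|_p \asymp \sqrt{p}\, a\sqrt{q}$, and the ratio $q^{1/p-1/2}/\sqrt{p}$ blows up as $q\to 0$ for any fixed $p>2$. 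The correct $L^p$ Freedman/Bennett estimate for martingales with $\|\tilde X_i'\|\le 2t$ a.s.\ is $\norm{\sum_i \tilde X_i'}_p \lesssim \sqrt{p}\,\|\sqrt{\langle \tilde Y'\rangle}\|_p + p\,t$; the additional $p\,t$ term is unavoidable and is precisely the Poisson-regime contribution. This does not break your overall plan because $t=\norm{\max_i\|X_i\|}_p$ and the extra term merges with the second term of the target bound, but you must include it, and proving this $L^p$ Freedman inequality (with the sharp $\sqrt{p}$ and $p$ factors, in Hilbert space) is itself the real content of Pinelis's exponential-supermartingale argument.

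Step~3 is also not as elementary as ``a standard layer-cake/Markov bound.'' The inequality $\norm{\sum_i \|X_i''\|}_p \lesssim p\,\norm{\max_i\|X_i\|}_p$ with the exceedance threshold set at $t=\norm{\max_i\|X_i\|}_p$ needs a genuine argument (dyadic splitting of the exceedance levels, a good-$\lambda$ inequality, or Doob's maximal inequality combined with a stopping-time decomposition); the number of indices with $\|X_i\|>t$ is not a priori small, and a single application of Markov's inequality will not deliver the $p$-linear dependence. Additionally, passing from $\tilde X_i''$ to $\|X_i''\|$ with only a factor of $2$ glosses over the term $\E[\|X_i''\|\mid\mathcal{F}_{i-1}]$: bounding $\norm{\sum_i \E[\|X_i''\|\mid\mathcal{F}_{i-1}]}_p$ by a constant times $\norm{\sum_i\|X_i''\|}_p$ is a Burkholder-type compensator estimate and itself introduces $p$-dependence. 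In short, the skeleton is right, but both of the substantive estimates are stated without the terms or lemmas that make them true, and the hard analytic work (the vector-valued Bennett/Freedman inequality and the control of the large-jump sum) has been deferred to citation rather than executed.
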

The above inequality is found in \citet[Theorem 4.1]{pinelis1994}. It is often combined with the following simple lemma:
\begin{lemma} For any random variables $X_1,\ldots,X_n$,
	\begin{align*}
		\norm{\max_i \norm{X_i}}_p \le \qty(\sum_{i=1}^n \norm{X_i}_p^p)^{1/p}.
	\end{align*}
\end{lemma}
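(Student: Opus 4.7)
The plan is to bound the maximum by the sum inside the expectation and then take the $p$-th root. Unpacking the left-hand side by the definition of the $L^p$ norm gives
\begin{align*}
    \Bigl\Vert \max_i \|X_i\| \Bigr\Vert_p
    = \Bigl( \E\bigl[ \max_i \|X_i\|^p \bigr] \Bigr)^{1/p},
\end{align*}
where I have used that $t \mapsto t^p$ is monotone on $[0,\infty)$ to pull the $p$-th power inside the max.

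Next I would use the pointwise inequality $\max_i a_i \le \sum_i a_i$ for nonnegative reals $a_i = \|X_i\|^p$, giving
\begin{align*}
    \max_i \|X_i\|^p \le \sum_{i=1}^n \|X_i\|^p.
\end{align*}
Taking expectations on both sides and swapping expectation and sum by linearity yields
\begin{align*}
    \E\bigl[ \max_i \|X_i\|^p \bigr] \le \sum_{i=1}^n \E\bigl[ \|X_i\|^p \bigr] = \sum_{i=1}^n \|X_i\|_p^p,
\end{align*}
and raising to the $1/p$ power produces the claimed bound.

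There is essentially no obstacle here: the lemma is a one-line consequence of $\max \le \sum$ for nonnegative terms combined with linearity of expectation and monotonicity of $t \mapsto t^{1/p}$. I do not need any independence, martingale structure, or moment assumptions beyond the implicit one that each $\|X_i\|_p$ is finite, so the proof is entirely deterministic once the expectation is taken.
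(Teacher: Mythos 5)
Your proof is correct, and it is the canonical one-line argument: pass from the max to the sum of nonnegative terms $\|X_i\|^p$, take expectations, and apply the $p$th root. The paper states this lemma without proof (calling it a simple companion to the Rosenthal--Burkholder--Pinelis inequality), and your argument is exactly the proof one would supply.
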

This has the immediate corollary:

\begin{lemma}\label{lem:rosenthal_burkholder_max}
Let $\{Y_i\}_{i=0}^n$ be a martingale with martingale difference sequence $\{X_i\}_{i=1}^n$ where $X_i = Y_i-Y_{i-1}$. Let $\ev{Y} = \sum_{i=1}^n \E[\norm{X_i}^2 | \mathcal{F}_{i-1}]$ denote the predictable quadratic variation. Then there exists an absolute constant $C$ such that for all $p$,
\begin{align*}
    \norm{Y_n}_p \le C \qty[\sqrt{p \norm{\ev{Y}}_{p/2}} + pn^{1/p}~\max_i \norm{X_i}_p].
\end{align*}
\end{lemma}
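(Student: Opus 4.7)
The plan is to derive this corollary by chaining together the Rosenthal-Burkholder-Pinelis inequality (Lemma \ref{lem:rosenthal_burkholder}) with the unlabeled $L^p$-max lemma stated immediately before, and then applying an elementary bound to convert the sum over $p$-norms into a maximum. No martingale machinery beyond Lemma \ref{lem:rosenthal_burkholder} is needed.

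First I would apply Lemma \ref{lem:rosenthal_burkholder} directly to the martingale $\{Y_i\}$, giving
\begin{align*}
    \norm{Y_n}_p \le C\qty[\sqrt{p\,\norm{\ev{Y}}_{p/2}} + p\,\norm{\max_i \norm{X_i}}_p].
\end{align*}
The only term that needs further work is $\norm{\max_i \norm{X_i}}_p$, since the predictable quadratic variation term already appears in the desired form.

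Next I would invoke the preceding lemma, which gives
\begin{align*}
    \norm{\max_i \norm{X_i}}_p \le \qty(\sum_{i=1}^n \norm{X_i}_p^p)^{1/p}.
\end{align*}
Bounding each summand by the maximum yields $\sum_{i=1}^n \norm{X_i}_p^p \le n \cdot \max_i \norm{X_i}_p^p$, and taking the $p$-th root produces the clean estimate $\norm{\max_i \norm{X_i}}_p \le n^{1/p}\, \max_i \norm{X_i}_p$. Substituting this into the Rosenthal-Burkholder-Pinelis bound gives exactly the statement of the lemma.

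There is no real obstacle here: the result is a two-line bookkeeping consequence of a sharper inequality already in hand. The only mild subtlety is remembering that the factor $n^{1/p}$ arises from bounding an $\ell^p$-sum by $n^{1/p}$ times the $\ell^\infty$-norm of the sequence $(\norm{X_i}_p)_{i=1}^n$, rather than from any probabilistic argument. If one wanted a version without the $n^{1/p}$ factor, one would need to retain the sum form and apply it in context (e.g.\ when the $\norm{X_i}_p$ are uniformly controlled, $n^{1/p}$ is harmless for $p$ of order $\log n$), but that is precisely how the corollary is used elsewhere in the paper.
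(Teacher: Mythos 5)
Your argument is exactly the one the paper has in mind: apply Lemma \ref{lem:rosenthal_burkholder}, then use the preceding $L^p$-max lemma to bound $\norm{\max_i\norm{X_i}}_p$ by $(\sum_i\norm{X_i}_p^p)^{1/p}\le n^{1/p}\max_i\norm{X_i}_p$. The paper presents the statement as an ``immediate corollary'' of those two lemmas, and your proposal supplies precisely that two-line chain with no gaps.
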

 
We will often use the following corollary of Holder's inequality to bound the operator norm of a product of two random variables when one has polynomial tails:
\begin{lemma}\label{lem:poly_tail_holder}
	Let $X,Y$ be random variables with $\norm{Y}_p \le \sigma_Y p^{C}$. Then,
	\begin{align*}
		\E[XY] \le \norm{X}_1 \cdot \sigma_Y \cdot (2e)^C \cdot \max\qty(1,\frac{1}{C} \log\qty(\frac{\norm{X}_2}{\norm{X}_1}))^C.
	\end{align*}
\end{lemma}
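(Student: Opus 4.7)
The plan is to combine Hölder's inequality with log-convexity of $L^p$ norms, then optimize over the Hölder exponent. By Hölder, for any conjugate exponents $1/p + 1/q = 1$ with $p \ge 2$ (so $q \in [1,2]$), we have $\E[XY] \le \|X\|_q \|Y\|_p$. The hypothesis gives $\|Y\|_p \le \sigma_Y p^C$, so the task reduces to controlling $\|X\|_q$ in terms of the reference norms $\|X\|_1$ and $\|X\|_2$ that appear in the target inequality.

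The next step is Lyapunov interpolation: since $t \mapsto \log \|X\|_{1/t}$ is convex on $(0,1]$, writing $1/q = \theta \cdot 1 + (1-\theta)\cdot(1/2)$ gives $\theta = 2/q - 1 = 1 - 2/p$, and hence
\begin{equation*}
\|X\|_q \;\le\; \|X\|_1^{2/q - 1}\,\|X\|_2^{2 - 2/q} \;=\; \|X\|_1 \cdot \left(\frac{\|X\|_2}{\|X\|_1}\right)^{2/p}.
\end{equation*}
Setting $r := \|X\|_2/\|X\|_1 \ge 1$ and combining with the bound on $\|Y\|_p$ yields the one-parameter family of bounds
\begin{equation*}
\E[XY] \;\le\; \|X\|_1 \,\sigma_Y \cdot p^C \, r^{2/p}, \qquad p \ge 2.
\end{equation*}

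The last step is to minimize $f(p) := p^C r^{2/p}$ over $p \ge 2$. Differentiating $\log f(p) = C\log p + (2/p)\log r$ gives the stationary point $p^\star = (2\log r)/C$. When $\log r \ge C$, this $p^\star$ lies in $[2,\infty)$ and substitution gives $f(p^\star) = (2\log r/C)^C \cdot e^C = (2e)^C \cdot \bigl((1/C)\log r\bigr)^C$. When $\log r < C$, $f$ is increasing on $[2,\infty)$ so the minimum is at $p = 2$, giving $f(2) = 2^C r \le 2^C e^C = (2e)^C$, since $r \le e^C$ in this regime. In both cases the bound is at most $(2e)^C \cdot \max\bigl(1,\,(1/C)\log(\|X\|_2/\|X\|_1)\bigr)^C$, which is precisely the stated inequality.

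No real obstacle is expected; the only care needed is handling the two regimes $\log r \lessgtr C$ and verifying that $p^\star \ge 2$ exactly when $\log r \ge C$. A minor edge case is $\|X\|_1 = 0$ (in which case $\E[XY] = 0$ trivially) or $\|X\|_2 = \infty$ (the inequality is then vacuous), so we may assume $0 < \|X\|_1 \le \|X\|_2 < \infty$ and in particular $r \ge 1$ so that $\log r \ge 0$ and the $\max$ with $1$ is well-defined.
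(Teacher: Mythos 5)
Your proof is correct and matches the paper's argument in substance. The paper derives the bound $\|X\|_1^{1-\epsilon}\|X\|_2^{\epsilon}\|Y\|_{2/\epsilon}$ via a three-way H\"older split $1=(1-\epsilon)+\tfrac{\epsilon}{2}+\tfrac{\epsilon}{2}$ applied to $\E[X^{1-\epsilon}X^{\epsilon}Y]$, which with $\epsilon=2/p$ is identical to your two-step H\"older-then-Lyapunov interpolation; the subsequent optimization over the free parameter and the two-case split at $\log(\|X\|_2/\|X\|_1)\gtrless C$ are the same.
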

\begin{proof}
	Fix $\epsilon \in [0,1]$. Then using Holder's inequality with $1 = 1-\epsilon + \frac{\epsilon}{2} + \frac{\epsilon}{2}$ gives:
	\begin{align*}
		\E[XY] = \E[X^{1-\epsilon} X^{\epsilon} Y] \le \norm{X}_1^{1-\epsilon} \norm{X}_2^{\epsilon} \norm{Y}_{2/\epsilon}.
	\end{align*}
	Using the fact that $X,Y$ have polynomial tails we can bound this by
	\begin{align*}
		\E[XY] = \E[X^{1-\epsilon} X^{\epsilon} Y] \le \norm{X}_1^{1-\epsilon} \norm{X}_2^{\epsilon} \sigma_Y (2/\epsilon)^{C}.
	\end{align*}
	First, if $\norm{X}_2 \ge e^C \norm{X}_1$, we can set $\epsilon = \frac{C}{\log\qty(\frac{\norm{X}_2}{\norm{X}_1})}$ which gives
	\begin{align*}
		\E[XY] \le \norm{X}_1 \cdot \sigma_Y \cdot \qty(\frac{2e}{C} \log\qty(\frac{\norm{X}_2}{\norm{X}_1}))^C.
	\end{align*}
	Next, if $\norm{X}_2 \le e^C \norm{X}_1$ we can set $\epsilon = 1$ which gives
	\begin{align*}
		\E[XY] \le \norm{X}_2 \norm{Y}_2 \le \norm{X}_1 \sigma_Y (2e)^C
	\end{align*}
	which completes the proof.
\end{proof}

Finally, the following basic lemma will allow is to easily convert between $p$-norm bounds and concentration inequalities:
\begin{lemma}\label{lem:p_norm_to_tail_bound}
	Let $\delta \ge 0$ and let $X$ be a mean zero random variable satisfying
    \begin{align*}
        \E[\abs{X}^p]^{1/p} \le \sigma_X p^C \qq{for} p = \frac{\log(1/\delta)}{C}
    \end{align*}
    for some $C$. Then with probability at least $1-\delta$, $\abs{X} \le \sigma_X (e p)^C$.
\end{lemma}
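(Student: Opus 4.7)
The plan is a single application of Markov's inequality to the nonnegative random variable $\abs{X}^p$ at the distinguished moment $p = \log(1/\delta)/C$ that is singled out in the hypothesis. Concretely, for any threshold $t > 0$, Markov gives
\begin{align*}
    \P[\abs{X} \ge t] = \P[\abs{X}^p \ge t^p] \le \frac{\E[\abs{X}^p]}{t^p} \le \qty(\frac{\sigma_X p^C}{t})^p,
\end{align*}
where the last inequality uses the hypothesized $p$-norm bound at exactly this value of $p$.

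Next I would set $t = \sigma_X (ep)^C$, which is precisely the threshold appearing in the conclusion. With this choice, the ratio inside the $p$-th power collapses to $e^{-C}$, so the tail probability is at most $e^{-Cp}$. Substituting the chosen value $p = \log(1/\delta)/C$ converts the exponent into $-\log(1/\delta)$, yielding $\P[\abs{X} \ge \sigma_X (ep)^C] \le \delta$, which is the desired statement.

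There is essentially no obstacle here: the lemma is a textbook Chebyshev-style tail bound, and the only delicate point is book-keeping — choosing $p$ so that the polynomial factor $p^C$ is absorbed by the exponential decay $e^{-Cp}$ coming from the factor $e^C$ built into the threshold. It is worth noting that the hypothesis need only hold at the single moment $p = \log(1/\delta)/C$ rather than uniformly in $p$, and that the mean-zero assumption plays no role in the Markov step itself (it is natural for the intended applications, such as martingale differences combined with \Cref{lem:rosenthal_burkholder_max}, but is not invoked in the proof).
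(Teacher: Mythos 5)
Your proposal is correct and is essentially identical to the paper's proof: both apply Markov's inequality to $\abs{X}^p$ at the distinguished moment $p = \log(1/\delta)/C$ with threshold $\sigma_X(ep)^C$, obtaining tail probability $e^{-Cp} = \delta$. Your closing observations — that the bound need only hold at this single $p$ and that the mean-zero hypothesis is unused — are accurate and worth noting.
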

\begin{proof}
    Let $\epsilon = \sigma_X (ep)^C$. Then,
	\begin{align*}
		\P[\abs{X} \ge \epsilon]
		&= \P[\abs{X}^p \ge \epsilon^p] \\
		&\le \frac{\E[\abs{X}^p]}{\epsilon^p} \\
		&\le \frac{(\sigma_X)^p p^{pC}}{\epsilon^p} \\
        &= e^{-Cp} \\
        &= \delta.
	\end{align*}
\end{proof}

\section{Additional Technical Lemmas}

The following lemma extends Steins's lemma ($\E_{x \sim N(0,1)}[xg(x)] = \E_{x \sim N(0,1)}[g'(x)]$) to the ultraspherical distribution $\mu^{(d)}$ where $\mu^{(d)}$ is the distribution of $z_1$ when $z \sim S^{d-1}$:
\begin{lemma}[Spherical Stein's Lemma]\label{lem:spherical_stein}
    For any $g \in L^2(\mu^{(d)})$,
    \begin{align*}
        \E_{z \sim S^{d-1}}[z_1 g(z_1)] = \frac{\E_{z \sim S^{d+1}}[g'(z_1)]}{d}.
    \end{align*}    
\end{lemma}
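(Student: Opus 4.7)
The plan is to prove the identity via integration by parts on the marginal density of $z_1$. Recall that when $z$ is uniform on the unit sphere $S^{d-1}\subset\R^d$, the marginal density of $z_1$ is
\[
\mu^{(d)}(t) \;=\; \frac{\Gamma(d/2)}{\sqrt{\pi}\,\Gamma((d-1)/2)}\,(1-t^2)^{(d-3)/2},\qquad t\in[-1,1],
\]
and for $z\sim S^{d+1}\subset\R^{d+2}$ the analogous density is proportional to $(1-t^2)^{(d-1)/2}$. I would begin by writing the left-hand side as
\[
\E_{z\sim S^{d-1}}[z_1\,g(z_1)] \;=\; \frac{\Gamma(d/2)}{\sqrt{\pi}\,\Gamma((d-1)/2)}\int_{-1}^{1} t\,g(t)\,(1-t^2)^{(d-3)/2}\,dt.
\]

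The key algebraic observation is that $t(1-t^2)^{(d-3)/2} = -\tfrac{1}{d-1}\tfrac{d}{dt}\bigl[(1-t^2)^{(d-1)/2}\bigr]$, so integration by parts converts the integral into
\[
\frac{1}{d-1}\int_{-1}^{1} g'(t)\,(1-t^2)^{(d-1)/2}\,dt,
\]
with the boundary terms vanishing because $(1-t^2)^{(d-1)/2}$ is zero at $\pm 1$ whenever $d\geq 2$. The new integrand is, up to a normalization constant, exactly the density of $z_1$ for $z\sim S^{d+1}$, so rewriting it as an expectation produces
\[
\E_{z\sim S^{d-1}}[z_1\,g(z_1)] \;=\; \frac{1}{d-1}\cdot\frac{\Gamma(d/2)}{\Gamma((d-1)/2)}\cdot\frac{\Gamma((d+1)/2)}{\Gamma((d+2)/2)}\;\E_{z\sim S^{d+1}}[g'(z_1)].
\]

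To finish, I would simplify the ratio of Gamma functions using $\Gamma((d+2)/2)=\tfrac{d}{2}\Gamma(d/2)$ and $\Gamma((d+1)/2)=\tfrac{d-1}{2}\Gamma((d-1)/2)$, which collapses the prefactor to $\tfrac{1}{d-1}\cdot\tfrac{d-1}{d}=\tfrac{1}{d}$, yielding exactly the claimed identity.

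The main obstacle is the bookkeeping of normalization constants: it is essential that the $(d-1)$ from integration by parts cancels precisely against the Gamma-function ratio to leave $1/d$, rather than $1/(d-1)$ or some other off-by-one variant. A secondary concern is justifying the integration by parts, which requires $g\in L^2(\mu^{(d)})$ together with mild regularity so that $g'(t)(1-t^2)^{(d-1)/2}$ is integrable; this can be handled by a standard density argument (e.g., polynomial approximation) since the statement only needs to hold for functions where both sides are finite.
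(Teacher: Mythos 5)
Your proof is correct and follows the same strategy as the paper: write out the marginal density $(1-t^2)^{(d-3)/2}/C(d)$, observe that $t(1-t^2)^{(d-3)/2}$ is a derivative, integrate by parts to get the $(1-t^2)^{(d-1)/2}$ kernel, and resolve the normalization constants to $1/d$. The paper writes the final step as $C(d+2)/(C(d)(d-1)) = 1/d$ while you expand the Gamma-function ratio explicitly, but this is the same computation.
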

\begin{proof}
    Recall that the density of $z_1$ is equal to
    \begin{align*}
    	\frac{(1-x^2)^{\frac{d-3}{2}}}{C(d)} \qq{where} C(d) := \frac{\sqrt{\pi} \cdot \Gamma(\frac{d-1}{2})}{\Gamma(\frac{d}{2})}.
    \end{align*}
    Therefore,
    \begin{align*}
        \E_{z \sim S^{d-1}}[z_1 g(z_1)] = \frac{1}{C(d)} \int_{-1}^1 z_1 g(z_1) (1-z^2)^{\frac{d-3}{2}} dz_1.
    \end{align*}
    Now we can integrate by parts to get
    \begin{align*}
        \E_{z \sim S^{d-1}}[z_1 g(z_1)]
        &= \frac{1}{C(d)} \int_{-1}^1 \frac{g'(z_1) (1-z^2)^{\frac{d-1}{2}}}{d-1} dz_1 \\
        &= \frac{C(d+2)}{C(d)(d-1)} \E_{z \sim S^{d+1}}[g'(z_1)] \\
        &= \frac{1}{d}\E_{z \sim S^{d+1}}[g'(z_1)].
    \end{align*}
\end{proof}

\begin{lemma}\label{lem:heavy_tail_expectation}
	For $j \le d/4$,
    \begin{align*}
        \E_{z \sim S^{d-1}}\qty(\frac{z_1^k}{(1-z_1^2)^j}) \lesssim \E_{z \sim S^{d-2j-1}}\qty(z_1^k).
    \end{align*}
\end{lemma}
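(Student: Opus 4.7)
The plan is to compute both expectations explicitly in terms of the one-dimensional marginal density of a uniformly random point on a sphere, and then reduce the lemma to a bound on a ratio of normalizing constants (which in turn is a bound on a ratio of Gamma functions).

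Recall from the proof of \Cref{lem:spherical_stein} that when $z \sim S^{d-1}$, the density of $z_1$ is $(1-x^2)^{(d-3)/2}/C(d)$ on $[-1,1]$, where $C(d) = \sqrt{\pi}\,\Gamma(\tfrac{d-1}{2})/\Gamma(\tfrac{d}{2})$. The first step is to plug this in and cancel the $(1-z_1^2)^j$ against the density:
\begin{align*}
    \E_{z \sim S^{d-1}}\!\qty[\frac{z_1^k}{(1-z_1^2)^j}]
    = \frac{1}{C(d)}\int_{-1}^1 x^k (1-x^2)^{(d-2j-3)/2}\,dx
    = \frac{C(d-2j)}{C(d)}\,\E_{z \sim S^{d-2j-1}}\!\qty[z_1^k].
\end{align*}
Thus the lemma reduces to showing $C(d-2j)/C(d) = O(1)$ whenever $j \le d/4$.

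Writing out the ratio of the normalizing constants gives
\begin{align*}
    \frac{C(d-2j)}{C(d)}
    = \frac{\Gamma(\tfrac{d-2j-1}{2})\,\Gamma(\tfrac{d}{2})}{\Gamma(\tfrac{d-1}{2})\,\Gamma(\tfrac{d-2j}{2})}
    = \frac{\Gamma(\tfrac{d}{2})/\Gamma(\tfrac{d}{2}-j)}{\Gamma(\tfrac{d-1}{2})/\Gamma(\tfrac{d-1}{2}-j)}.
\end{align*}
The standard asymptotic $\Gamma(x)/\Gamma(x-j) = x^j (1+O(j/x))^j$ gives that the numerator is approximately $(d/2)^j$ and the denominator is approximately $((d-1)/2)^j$, so the ratio is essentially $(1 + 1/(d-1))^j \le (1+1/(d-1))^{d/4} \le e^{1/3}$ for $j \le d/4$ and large $d$. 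More rigorously, one can iterate the identity $\Gamma(x+1) = x\Gamma(x)$ to write the ratio as a product of $j$ terms of the form $(d/2 - i)/((d-1)/2 - i) = 1 + 1/(d-1-2i)$ for $i = 0,\dots,j-1$; since $j \le d/4$ each factor is at most $1 + 2/d$, and the product is bounded by $(1+2/d)^{d/4} \le e^{1/2}$.

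The only real care needed is making sure the constraint $j \le d/4$ is strong enough to keep every factor in the telescoping product bounded away from a singularity, which it is: the smallest denominator appearing is $(d-1)/2 - (j-1) \ge (d-1)/2 - d/4 + 1 = d/4 + 1/2$, so no term blows up. Combining this uniform bound on $C(d-2j)/C(d)$ with the identity above completes the proof.
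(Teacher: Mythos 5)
Your proof takes essentially the same route as the paper's: use the explicit $(1-x^2)^{(d-3)/2}/C(d)$ density of $z_1$ to cancel the $(1-z_1^2)^j$ factor and reduce to the ratio of normalizing constants $C(d-2j)/C(d)$. The paper leaves the final bound $C(d-2j)/C(d) = O(1)$ implicit, whereas you spell out the telescoping-product argument (note a small off-by-one in your indexing of the product factors, which should be $\prod_{i=1}^{j} (d/2-i)/((d-1)/2-i)$ rather than running $i$ from $0$ to $j-1$, but this does not affect the bound); otherwise the two proofs match.
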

\begin{proof}
	Recall that the PDF of $\mu^{(d)}$ is
    \begin{align*}
    	\frac{(1-x^2)^{\frac{d-3}{2}}}{C(d)} \qq{where} C(d) := \frac{\sqrt{\pi} \cdot \Gamma(\frac{d-1}{2})}{\Gamma(\frac{d}{2})}.
    \end{align*}
    Using this we have that:
	\begin{align*}
		\E_{z \sim S^{d-1}}\qty[\frac{z_1^k}{(1-z_1^2)^j}]
		&= \frac{1}{C(d)} \int_{-1}^1 \frac{x^k}{(1-x^2)^j} (1-x^2)^{\frac{d-3}{2}} dx \\
		&= \frac{1}{C(d)} \int_{-1}^1 x^k (1-x^2)^{\frac{d-2j-3}{2}} dx \\
		&= \frac{C(d-2j)}{C(d)} \E_{z_1 \sim \mu^{(d-2j)}}[z_1^k] \\
		&= \frac{\Gamma(\frac{d}{2})\Gamma(\frac{d-2j-1}{2})}{\Gamma(\frac{d-1}{2})\Gamma(\frac{d-2j}{2})} \E_{z \sim S^{d-2j-1}}[z_1^k] \\
		&\lesssim \E_{z \sim S^{d-2j-1}}[z_1^k].
	\end{align*}
\end{proof}

We have the following generalization of \Cref{lem:sk_bounds}:
\begin{corollary}\label{lem:sk_heavy_tail}
    For any $k,j \ge 0$ with $d \ge 2j+1$ and $\alpha \ge C^{-1/4}d^{1/2}$, there exist $c(j,k),C(j,k)$ such that
    \begin{align*}
        \mathcal{L}_\lambda\qty(\frac{\alpha^k}{(1-\alpha)^j}) \le C(j,k) s_k(\alpha;\lambda).
    \end{align*}
\end{corollary}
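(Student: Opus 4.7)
The plan is to split into two cases based on whether $\alpha^2 \lessgtr \lambda^2/d$. In the low regime $\alpha^2 \le \lambda^2/d$, we have $\alpha \le \lambda/\sqrt{d}$, and \Cref{assumption:lambdaC} forces $\alpha \le 1/\sqrt{C}$, which is a universal constant strictly less than $1$. Hence $(1-\alpha)^{-j}$ is bounded by a constant depending only on $j$, and
\begin{align*}
\mathcal{L}_\lambda\!\left(\frac{\alpha^k}{(1-\alpha)^j}\right) \lesssim_{j} \mathcal{L}_\lambda(\alpha^k) \lesssim_k s_k(\alpha;\lambda)
\end{align*}
by \Cref{lem:sk_bounds} directly.

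In the complementary regime $\alpha^2 \ge \lambda^2/d$, I would expand the heavy-tail factor as a binomial series and pass the sum through $\mathcal{L}_\lambda$ by monotone convergence (every summand is non-negative), giving
\begin{align*}
\mathcal{L}_\lambda\!\left(\frac{\alpha^k}{(1-\alpha)^j}\right) = \sum_{m=0}^\infty \binom{m+j-1}{j-1}\mathcal{L}_\lambda(\alpha^{m+k}) \le \sum_{m=0}^\infty \binom{m+j-1}{j-1}C(m+k)\,s_{m+k}(\alpha;\lambda),
\end{align*}
the last inequality being \Cref{lem:sk_bounds} applied termwise. In this regime the consecutive ratio is $s_{n+1}/s_n = \alpha/\sqrt{1+\lambda^2} \le 1/\sqrt{1+\lambda^2}$, bounded by $1/\sqrt{2}$ under the paper's assumption $\lambda \ge 1$, so $s_{m+k} \le 2^{-m/2}\,s_k$. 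Combined with the fact that $C(n)$ from \Cref{lem:sk_bounds} grows at most polynomially in $n$ (the hidden constant there bounds a finite geometric-like sum of length $O(n)$), the resulting polynomial-times-geometric series converges to a finite $C(j,k)\,s_k(\alpha;\lambda)$.

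The main obstacle is the book-keeping of the constants $C(n)$ through \Cref{lem:sk_bounds}, since that lemma was not stated in a form that tracks growth in $n$. A cleaner alternative, which also makes the hypothesis $d \ge 2j+1$ transparent, is to mimic the proof of \Cref{lem:sk_bounds} directly: reduce to a one-dimensional spherical integral via \Cref{lem:smoothing_single_index}, apply the bound $1-\beta \ge (1-\beta^2)/2 \ge (1-\alpha^2)(1-z_1^2)/2$ which follows from the identity
\begin{align*}
1-\beta^2 = (1-\alpha^2)(1-z_1^2) + \frac{(\sqrt{1-\alpha^2}\,z_1 - \alpha\lambda)^2}{1+\lambda^2},
\end{align*}
and invoke \Cref{lem:heavy_tail_expectation} — which requires exactly $d \ge 2j+1$ — to control the resulting moments $\E[z_1^{2l}/(1-z_1^2)^j]$ by those on the lower-dimensional sphere $S^{d-2j-1}$. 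This produces a finite sum of the same form as in the proof of \Cref{lem:sk_bounds}, with $\nu_l^{(d-1)}$ replaced by the comparable $\nu_l^{(d-2j-1)}$, so the same case analysis on the regime of $\alpha$ yields the bound $\lesssim s_k(\alpha;\lambda)$.
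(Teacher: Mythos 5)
Your ``cleaner alternative'' is close in spirit to the paper's proof --- both reduce to a one-dimensional spherical integral via \Cref{lem:smoothing_single_index}, control the singular factor $(1-\beta)^{-j}$ where $\beta = \frac{\alpha+\lambda z_1\sqrt{1-\alpha^2}}{\sqrt{1+\lambda^2}}$, expand $\beta^k$, and invoke \Cref{lem:heavy_tail_expectation} --- but your key inequality $1-\beta \ge \tfrac{1}{2}(1-\alpha^2)(1-z_1^2)$ is too lossy. It introduces an extra factor of $(1-\alpha^2)^{-j}$ into the final bound, which is uncontrolled as $\alpha \to 1$. The left-hand side $\E_z\qty[\beta^k/(1-\beta)^j]$ stays bounded in that limit (at $\alpha=1$ one has $\beta \equiv (1+\lambda^2)^{-1/2} \le 1/\sqrt{2}$ deterministically), so no constant $C(j,k)$ can absorb $(1-\alpha^2)^{-j}$, and the claimed conclusion does not follow for $\alpha$ near $1$. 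The paper instead uses the \emph{exact} factorization $1-\beta = \qty(1-\frac{\alpha}{\sqrt{1+\lambda^2}})(1-Xz_1)$ with $X := \frac{\lambda\sqrt{1-\alpha^2}}{\sqrt{1+\lambda^2}}\qty(1-\frac{\alpha}{\sqrt{1+\lambda^2}})^{-1} \le 1$ by Cauchy--Schwarz. The prefactor $\qty(1-\frac{\alpha}{\sqrt{1+\lambda^2}})^{-j}$ is then a genuine constant once $\lambda \ge 1$ (at most $(1-1/\sqrt{2})^{-j}$), and the remainder $(1-Xz_1)^{-j} = (1+Xz_1)^j(1-X^2z_1^2)^{-j} \le (1+Xz_1)^j(1-z_1^2)^{-j}$ feeds into \Cref{lem:heavy_tail_expectation} exactly as you envisioned.

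Your first approach has two further gaps. In the low regime, pulling out $(1-\alpha)^{-j}$ as a constant is not legitimate: the smoothing operator evaluates the integrand at $\beta$, not $\alpha$, and near $z_1 = 1$ the factor $(1-\beta)^{-j}$ is $\Theta(\lambda^{2j})$ even when $\alpha$ is tiny, so the pointwise bound you need does not hold uniformly in $z$. In the high regime, the constants $C(n)$ in \Cref{lem:sk_bounds} do not grow polynomially in $n$: its proof controls $\sum_i\binom{n}{2i}r^i$ with $r = \frac{\lambda^2(1-\alpha^2)}{\alpha^2 d}$, which at the boundary $\alpha^2 \approx \lambda^2/d$ is $\Theta(2^n)$, so the $2^{-m/2}$ decay you extract from the ratio of consecutive $s_{m+k}$ is not enough to sum the termwise bounds.
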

\begin{proof}
    Expanding the definition of $\mathcal{L}_\lambda$ gives:
    \begin{align*}
        \mathcal{L}_\lambda\qty(\frac{\alpha^k}{(1-\alpha)^j})
        &= \E_{z \sim S^{d-2}}\qty[\frac{\qty(\frac{\alpha + \lambda z_1 \sqrt{1-\alpha^2}}{\sqrt{1+\lambda^2}})^k}{\qty(1 - \frac{\alpha + \lambda z_1 \sqrt{1-\alpha^2}}{\sqrt{1+\lambda^2}})^j}].
    \end{align*}
    Now let $X = \frac{\lambda \sqrt{1-\alpha^2}}{\sqrt{1+\lambda^2}} \cdot \qty(1 - \frac{\alpha}{\sqrt{1+\lambda^2}})^{-1}$ and note that by Cauchy-Schwarz, $X \le 1$. Then,
    \begin{align*}
        \mathcal{L}_\lambda\qty(\frac{\alpha^k}{(1-\alpha)^j})
        &= \frac{1}{\qty(1 - \frac{\alpha}{\sqrt{1+\lambda^2}})^j} \E_{z \sim S^{d-2}}\qty[\frac{\qty(\frac{\alpha + \lambda z \sqrt{1-\alpha^2}}{\sqrt{1+\lambda^2}})^k}{(1 - X z_1)^j}] \\
        &\asymp \E_{z \sim S^{d-2}}\qty[\frac{(1 + Xz_1)^j \qty(\frac{\alpha + \lambda z \sqrt{1-\alpha^2}}{\sqrt{1+\lambda^2}})^k}{(1 - X^2 z_1^2)^j}].
    \end{align*}
    Now we can use the binomial theorem to expand this. Ignoring constants only depending on $j,k$:
    \begin{align*}
        \mathcal{L}_\lambda\qty(\frac{\alpha^k}{(1-\alpha)^j})
        &= \frac{1}{\qty(1 - \frac{\alpha}{\sqrt{1+\lambda^2}})^j} \E_{z \sim S^{d-2}}\qty[\frac{\qty(\frac{\alpha + \lambda z_1 \sqrt{1-\alpha^2}}{\sqrt{1+\lambda^2}})^k}{(1 - X z_1)^j}] \\
        &\asymp \lambda^{-k} \sum_{i=0}^k \alpha^{k-i} \lambda^i (1-\alpha^2)^{i/2} \E_{z \sim S^{d-2}}\qty[\frac{(1+Xz_1)^j z_1^i}{(1-X^2z_1^2)^j}] \\
        &\le \lambda^{-k} \sum_{i=0}^k \alpha^{k-i} \lambda^i (1-\alpha^2)^{i/2} \E_{z \sim S^{d-2}}\qty[\frac{(1+Xz_1)^j z_1^i}{(1-z_1^2)^j}].
    \end{align*}
    By \Cref{lem:heavy_tail_expectation}, the $z_1$ term is bounded by $d^{-\frac{i}{2}}$ when $i$ is even and $X d^{-\frac{i+1}{2}}$ when $i$ is odd. Therefore this expression is bounded by
    \begin{align*}
        &\qty(\frac{\alpha}{\lambda})^k \sum_{i=0}^{\floor{\frac{k}{2}}} \qty(\frac{\lambda^2 (1-\alpha^2)}{\alpha^2 d})^i + \qty(\frac{\alpha}{\lambda})^{k-1} \sum_{i=0}^{\floor{\frac{k-1}{2}}} \alpha^{-2i} \lambda^{2i} (1-\alpha^2)^{i} d^{-(i+1)} \\
        &\asymp s_k(\alpha;\lambda) + \frac{1}{d} s_{k-1}(\alpha;\lambda).
    \end{align*}
    Now note that
    \begin{align*}
        \frac{\frac{1}{d} s_{k-1}(\alpha;\lambda)}{s_k(\alpha;\lambda)} =
        \begin{cases}
            \frac{\lambda}{d\alpha} & \alpha^2 \ge \frac{\lambda^2}{d} \\
            \frac{\alpha}{\lambda} & \alpha^2 \le \frac{\lambda^2}{d}\text{ and $\k$ is even} \\
            \frac{\lambda}{d\alpha} & \alpha^2 \le \frac{\lambda^2}{d}\text{ and $\k$ is odd}
        \end{cases} \le C^{-1/4}.
    \end{align*}
    Therefore, $s_k(\alpha;\lambda)$ is the dominant term which completes the proof.
\end{proof}

\begin{lemma}[Adapted from \citet{abbe2023leap}]\label{lem:discrete_gronwall}
 Let $\eta, a_0 \ge 0$ be positive constants, and let $u_t$ be a sequence satisfying
 \begin{align*}
     u_t \ge a_0 + \eta \sum_{s=0}^{t-1} u_s^{k}.
 \end{align*}
 Then, if $\max_{0 \le s \le t-1} \eta u_s^{k-1} \le \frac{\log 2}{k}$, we have the lower bound
 \begin{align*}
     u_t \ge \qty(a_0^{-(k-1)} - \frac12\eta(k-1)t)^{-\frac{1}{k-1}}.
 \end{align*}
\end{lemma}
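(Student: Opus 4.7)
The plan is to reduce to a clean telescoping argument on the sequence $\phi(v_t) := v_t^{-(k-1)}$, where $v_t$ is the canonical "equality version" of the recursion.

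First, I would introduce $v_t := a_0 + \eta \sum_{s=0}^{t-1} u_s^k$, so that $v_0 = a_0$, the hypothesis reads $u_t \ge v_t$, and the canonical sequence satisfies the exact identity $v_{t+1} - v_t = \eta u_t^k$. Since $u_t \ge v_t$ would in turn imply $v_t \ge \bigl(a_0^{-(k-1)} - \tfrac12 \eta(k-1)t\bigr)^{-1/(k-1)}$, it suffices to prove this lower bound on $v_t$.

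Next, I would study $\phi(x) = x^{-(k-1)}$, which is convex and decreasing on $(0,\infty)$. By the mean value theorem applied on $[v_t, v_{t+1}]$, there is some $\xi \in [v_t, v_{t+1}]$ with
\begin{align*}
\phi(v_{t+1}) - \phi(v_t) = -(k-1)\,\xi^{-k}(v_{t+1}-v_t) \le -(k-1)\,v_{t+1}^{-k}\,\eta\, u_t^{k},
\end{align*}
using $\xi \le v_{t+1}$. Rearranging,
\begin{align*}
\phi(v_{t+1}) - \phi(v_t) \le -(k-1)\eta \left(\tfrac{u_t}{v_{t+1}}\right)^{k}.
\end{align*}
The key quantitative step is then to show $(u_t/v_{t+1})^k \ge 1/2$ under the stated hypothesis. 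Since $v_t \le u_t$, the recursion gives
\begin{align*}
v_{t+1} = v_t + \eta u_t^k \le u_t\,\bigl(1 + \eta u_t^{k-1}\bigr),
\end{align*}
and the assumption $\eta u_t^{k-1} \le \log 2 / k$ yields $(1+\eta u_t^{k-1})^k \le \exp(k\eta u_t^{k-1}) \le 2$, giving exactly $(u_t/v_{t+1})^k \ge 1/2$.

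Putting these together produces the one-step decrement $\phi(v_{t+1}) - \phi(v_t) \le -\tfrac{(k-1)\eta}{2}$. Telescoping from $s=0$ to $t-1$ and using $\phi(v_0) = a_0^{-(k-1)}$ gives
\begin{align*}
v_t^{-(k-1)} \le a_0^{-(k-1)} - \tfrac{1}{2}\eta(k-1)\,t,
\end{align*}
which, combined with $u_t \ge v_t$, yields the claimed bound. The only real obstacle is the ratio bound $(u_t/v_{t+1})^k \ge 1/2$; everything else is bookkeeping. The factor $\log 2 / k$ in the hypothesis was engineered precisely so that the exponential bound $(1+x)^k \le e^{kx}$ produces the constant $2$ that is responsible for the factor $\tfrac12$ in the conclusion.
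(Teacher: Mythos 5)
Your proposal is correct and follows essentially the same route as the paper: define an auxiliary sequence turning the inequality recursion into an equality, bound the one-step decrement of $x \mapsto x^{-(k-1)}$, use the $\eta u_t^{k-1}\le \log 2/k$ hypothesis to show the relevant ratio contributes at most a factor of $2$, then telescope. The only cosmetic differences are that you define $v_t$ in terms of $u_s$ (so $u_t\ge v_t$ is immediate rather than by induction on the paper's self-referential $w_t$) and you invoke the mean value theorem where the paper compares to $\int_{w_{t-1}}^{w_t} x^{-k}\,dx$; both yield the identical one-step inequality.
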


 \begin{proof}
     Consider the auxiliary sequence $w_t = a_0 + \eta \sum_{s=0}^{t-1}w_s^k$. By induction, $u_t \ge w_t$. To lower bound $w_t$, we have that
     \begin{align*}
         \eta = \frac{w_t - w_{t-1}}{w_{t-1}^k} &= \frac{w_t - w_{t-1}}{w_{t}^k}\cdot \frac{w_t^k}{w_{t-1}^k}\\
         &\le \frac{w_t^k}{w_{t-1}^k}\int_{w_{t-1}}^{w_t} \frac{1}{x^k}dx\\
         &= \frac{w_t^k}{w_{t-1}^k(k-1)}\qty(w_{t-1}^{-(k-1)} - w_t^{-(k-1)})\\
         &\le \frac{(1 + \eta w_{t-1}^{k-1})^k}{(k-1)}\qty(w_{t-1}^{-(k-1)} - w_t^{-(k-1)})\\
         &\le \frac{(1 + \frac{\log 2}{k})^k}{(k-1)}\qty(w_{t-1}^{-(k-1)} - w_t^{-(k-1)})\\
         &\le \frac{2}{k-1}\qty(w_{t-1}^{-(k-1)} - w_t^{-(k-1)}).
     \end{align*}
     Therefore
     \begin{align*}
         w_t^{-(k-1)} \le w_{t-1}^{-(k-1)} - \frac12\eta(k-1).
     \end{align*}
     Altogether, we get
     \begin{align*}
         w_t^{-(k-1)} \le a_0^{-(k-1)} - \frac12\eta(k-1)t,
     \end{align*}
     or
     \begin{align*}
         u_t \ge w_t \ge \qty(a_0^{-(k-1)} - \frac12\eta(k-1)t)^{-\frac{1}{k-1}},
     \end{align*}
     as desired.
 \end{proof}

\section{Additional Experimental Details}\label{sec:experimental_details}

To compute the smoothed loss $L_\lambda(w;x;y)$ we used the closed form for $\mathcal{L}_\lambda\qty(He_k(w \cdot x))$ (see \Cref{sec:proof_grad}). Experiments were run on 8 NVIDIA A6000 GPUs. Our code is written in JAX \citep{jax2018github} and we used Weights and Biases \citep{wandb} for experiment tracking.

\end{document}